\documentclass[11pt]{article}

\usepackage{wrapfig}
\usepackage{caption}
\usepackage{hyperref}
\usepackage[margin=1in]{geometry}

\usepackage{amsmath,amssymb,mathtools}
\usepackage{amsthm}

\usepackage{graphicx}
\usepackage[dvipsnames]{xcolor}
\usepackage{tikz}

\usepackage{hyperref}
\usepackage{url}
\usepackage{booktabs}
\usepackage{multirow}
\usepackage{array}

\usepackage[ruled,vlined,linesnumbered,noend]{algorithm2e}
\SetKwComment{Comment}{$\triangleright$ }{}
\SetKwInput{KwInput}{Input}
\SetKwInput{KwOutput}{Output}
\SetKw{Proj}{Proj}
\SetKw{Soft}{SoftThresh}
\SetKw{ReLU}{ReLU}

\usepackage{diagbox}
\usepackage{graphicx}
\usepackage[normalem]{ulem}
\usepackage{enumitem}
\usepackage{placeins}
\usepackage{caption}
\usepackage{adjustbox}
\usepackage{subcaption}

\usepackage{booktabs,tabularx,makecell,threeparttable}
\usepackage{multirow}
\usepackage[table]{xcolor} 

\usepackage{amsfonts}
\usepackage[round]{natbib}
\usepackage{url}
\usepackage{hyperref}

\usepackage{caption}
\usepackage{subcaption}
\usepackage{float}
\usepackage{placeins}
\usepackage{fontawesome5} 

\usepackage{amsmath,amsfonts,bm}

\def\eqref#1{equation~\ref{#1}}

\def\1{\bm{1}}

\DeclareMathAlphabet{\mathsfit}{\encodingdefault}{\sfdefault}{m}{sl}
\SetMathAlphabet{\mathsfit}{bold}{\encodingdefault}{\sfdefault}{bx}{n}

\newcommand{\R}{\mathbb{R}}

\definecolor{todored}{HTML}{E63946}
\definecolor{darkred}{RGB}{150,0,0}
\definecolor{darkgreen}{RGB}{0,150,0}
\definecolor{darkblue}{RGB}{0,0,200}

\hypersetup{
    colorlinks=true,
    linkcolor=darkred,
    citecolor=darkgreen,
    urlcolor=darkblue
}

\newcolumntype{L}{>{\raggedright\arraybackslash}X}

\newtheorem{theorem}{Theorem}[section]
\newtheorem{proposition}[theorem]{Proposition}

\begin{document}

\title{Covariance-Aware Transformers for Quadratic Programming and Decision Making}

\author{
  Kutay Tire\thanks{Equal contribution.} $^1$ \qquad Yufan Zhang\textsuperscript{*2} \qquad Ege Onur Taga$^2$ \qquad Samet Oymak\textsuperscript{2}
  \\[2ex] 
  \textsuperscript{1}The University of Texas at Austin \\
  \texttt{kutaytire@utexas.edu}
  \\[1ex] 
  \textsuperscript{2}University of Michigan \\
  \texttt{\{yufanzh, egetaga, oymak\}@umich.edu}
}
\date{} 

\maketitle

\begin{abstract}
  We explore the use of transformers for solving quadratic programs and how this capability benefits decision-making problems that involve covariance matrices. We first show that the linear attention mechanism can provably solve unconstrained QPs by tokenizing the matrix variables (e.g.~$A$ of the objective $\frac{1}{2}x^\top Ax+b^\top x$) row-by-row and emulating gradient descent iterations. Furthermore, by incorporating MLPs, a transformer block can solve (i) $\ell_1$-penalized QPs by emulating iterative soft-thresholding and (ii) $\ell_1$-constrained QPs when equipped with an additional feedback loop. Our theory motivates us to introduce \emph{Time2Decide}: a generic method that enhances a time series foundation model (TSFM) by explicitly feeding the covariance matrix between the variates. We empirically find that \emph{Time2Decide} uniformly outperforms the base TSFM model for the classical portfolio optimization problem that admits an $\ell_1$-constrained QP formulation. Remarkably, \emph{Time2Decide} also outperforms the classical ``Predict-then-Optimize (PtO)'' procedure, where we first forecast the returns and then explicitly solve a constrained QP, in suitable settings. Our results demonstrate that transformers benefit from explicit use of second-order statistics, and this can enable them to effectively solve complex decision-making problems, like portfolio construction, in one forward pass. 
\end{abstract}

\section{Introduction}
\label{sec:intro}

Quadratic programming (QP) is a central tool in modern optimization, underlying applications in control, machine learning, and finance. Its canonical form is
\[
\min_x \; \frac{1}{2}x^\top A x + b^\top x \quad \text{subject to}\quad Cx \preceq d,
\]
where $A,b$ define the objective and $Cx\preceq d$ encodes constraints. Beyond this template, many practically important variants, including $\ell_1$-regularized and $\ell_1$-constrained formulations, remain convex and capture structural requirements such as sparsity, budgets, and transaction-cost constraints.

We study the following concrete question: \emph{To what extent can transformer architectures act as general-purpose QP solvers, and how does this capability translate into improved decision making when second-order statistics (e.g., covariance matrices) matter?} This focus is motivated by two observations: First, there is growing evidence that transformers can emulate iterative algorithms and optimization procedures through their attention-and-residual structure (e.g., theories of in-context learning showing gradient-descent-like behavior). Second, many high-stakes decision problems are naturally expressed as QPs whose inputs include covariances. However, standard transformer pipelines, specifically time-series foundation models, do not explicitly incorporate such second-order information, even when it is essential for the downstream task.

\begin{figure*}[t]
  \centering
  \includegraphics[width=\textwidth]{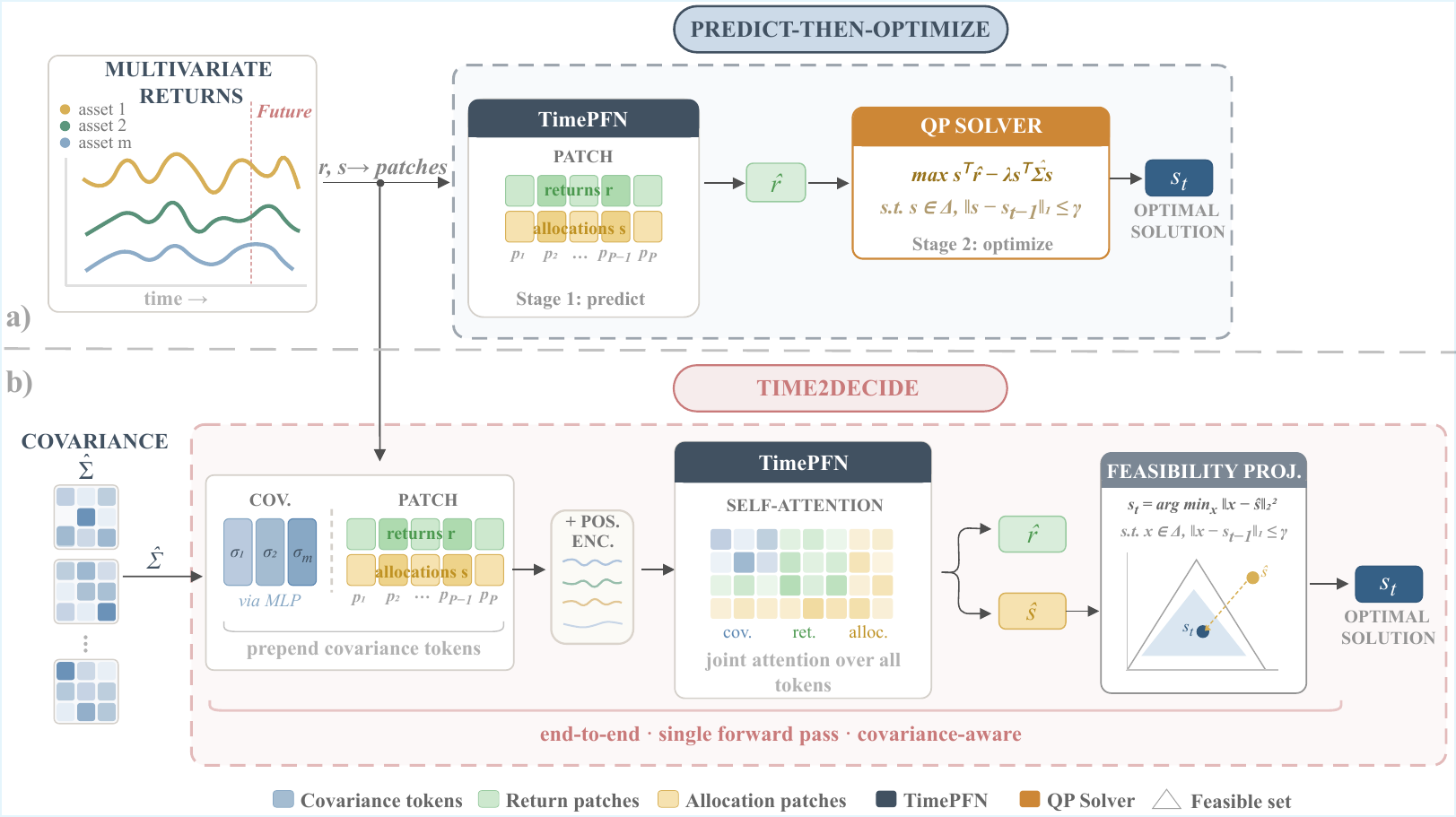}
  \caption{\textbf{Overview of portfolio decision-making pipelines.} 
    \textbf{(a) Predict-then-Optimize (PtO):}   historical multivariate returns are tokenized into patches and fed to
    TimePFN to predict next-step returns~$\hat{r}_t$. A QP
    solver then takes these predictions to compute the
    allocation~$s_t$ under simplex and $\ell_1$ rebalancing constraints
    \textbf{(b) Time2Decide:} unifies both stages in a single
    forward pass. We first derive learned \emph{covariance tokens} from
    $\hat{\Sigma}_t$ by applying an MLP and prepend them to the return and
    allocation patch tokens. TSFM with joint self-attention processes this full
    sequence to directly output both~$\hat{r}_t$ and~$\hat{s}_t$. The
    final feasibility projection enforces the portfolio constraints to
    yield~$s_t$, enabling an end-to-end, covariance-aware decision
    without a separate optimization step.}
    \label{fig:pipeline}
\end{figure*}

A standard example is portfolio optimization. Classical mean–variance allocation solves a QP whose parameters depend on forecasted returns and an (estimated) covariance matrix. A fundamental and widely used baseline is the \emph{two-stage} (or \emph{predict-then-optimize}) pipeline: first predict the unknown parameters (e.g., returns), then solve the resulting QP with a conventional optimizer \citep[e.g.,][]{markowitz1952portfolio,elmachtoub2022smart}. While well established, this decoupling can be statistically suboptimal: when forecasts are noisy, the plug-in solution can overreact and yield inferior decisions. Indeed, one can formalize an explicit scenario where the predict-then-optimize (PtO) rule is \emph{strictly dominated} by a Bayes-optimal end-to-end policy that shrinks noisy estimates before optimizing (Prop.~\ref{prop:pto_suboptimal}). 

As our central contribution, we develop a comprehensive theory on how and when transformers can solve regularized QPs. Our constructions are highly compute-efficient and capture the individual contributions of the attention mechanism, MLP, and looping. Building on this theory, we introduce \emph{Time2Decide}, a generic method to endow time-series foundation models (TSFMs) with transformer-based optimization that is \emph{covariance-aware}. Concretely, Time2Decide augments the TSFM input with tokens that explicitly represent the covariance structure, enabling the model to learn an end-to-end mapping from historical context to decisions in one forward pass. The overview of our setup is provided in Figure~\ref{fig:pipeline}. This directly targets the limitation highlighted above: by training to predict the eventual decision variable, the model can learn the proper amount of attenuation/calibration in the presence of estimation noise, rather than committing to a brittle plug-in rule. Empirically, Time2Decide consistently improves over (i) the base TSFM and (ii) supervised fine-tuning of the TSFM \emph{without} covariance information, and it can even outperform the classical two-stage PtO procedure that forecasts returns and then solves the QP explicitly. In particular, while PtO is highly competitive, Time2Decide surpasses it in regimes where end-to-end calibration matters most, aligning with the strict separation exhibited by Proposition~\ref{prop:pto_suboptimal} and illustrating how explicitly admitting second-order statistics can enable transformers to solve two-step decision problems \emph{within a single learned model}. Our contributions are:
\begin{itemize}\vspace{-4pt}
    \item \textbf{Quadratic programming theory.} We provide explicit constructions showing how transformer components can emulate classical first-order methods for multiple QP classes, including unconstrained, linearly constrained, $\ell_1$-regularized, and $\ell_1$-constrained programs.\vspace{-3pt}
    \item \textbf{Experimental validation.} We demonstrate that transformers can indeed learn the mapping from tokenized QP instances to optimal solutions, with linear attention variants offering strong empirical performance.\vspace{-3pt}
    \item \textbf{Our Method: Time2Decide.} We integrate QP-solving capabilities into TSFMs via covariance-augmented tokenization, enabling end-to-end forecasting and decision-making. Focusing on portfolio optimization, our proposed method Time2Decide outperforms fine-tuning a TSFM thanks to its special covariance tokens. Importantly, it also outperforms the two-stage Predict-then-Optimize (PtO) baseline under realistic scenarios. Specifically, when (i) the returns are not too predictable or (ii) the portfolio transition constraints are not too tight. We complement these through theoretical insights (Proposition~\ref{prop:pto_suboptimal}) where end-to-end decision-making can \emph{provably} and \emph{strictly} dominate two-stage optimization.
\end{itemize}\vspace{-4pt}

Together, our results position transformers as both (i) theoretically grounded algorithm emulators for regularized QPs and (ii) practical end-to-end decision makers when second-order/covariance information is essential.

\section{Related Work}
\label{sec:rel_work}

\paragraph*{Theoretical Analysis of In-Context Learning} 
Recent work has developed theoretical frameworks for understanding in-context learning in transformers. \citet{akyrek2023what}, \citet{von2023transformers} and \citet{dai2023gpt} demonstrated that transformers emulate gradient descent during ICL. \citet{xie2022an} offered a Bayesian perspective, while \citet{zhang2023trained, zhang2024trained} showed transformers learn linear models in-context. \citet{ahn2024transformers} established that they implement preconditioned gradient descent, and \citet{mahankali2024one} proved that one-step gradient descent is optimal for single-layer linear attention. Multiple works \citep{li2023transformers,yang2024context,li2024fine,bai2023transformers,gozeten2025test,shen2024training} studied the generalization capability of transformers. 
However, these works largely focus on fully-supervised settings where the in-context examples are input--label pairs for regression or classification. In contrast, our theoretical results show that the same attention-and-residual structure can \emph{emulate} first-order optimization methods when the in-context tokens encode problem data rather than labeled examples.

\paragraph{Neural Networks as Optimizers.} 

In recent research, neural networks have been harnessed as optimizers in several directions. \cite{villarrubia2018artificial} showed that multilayer perceptrons can approximate non‑linear objective functions and transform them into polynomial forms so that classical methods apply when Lagrange multipliers are impractical. Differentiable layers such as \cite{amos2017optnet} treat a quadratic program as a network module, derive gradients via implicit differentiation, and use a custom interior‑point solver, while \cite{magoon2024differentiation} decouple gradient computation from the choice of QP solver by exploiting the active set of constraints. In \cite{nair2020solving}, neural heuristics improve mixed‑integer solvers by learning variable assignments and branching policies. \cite{chen2022neural} treat the objective itself as a neural network and optimize it via backpropagation, handling multi‑objective cases with little sensitivity to variable dimension. \cite{chen2022representing} showed that graph neural networks can predict feasibility, boundedness, and approximate solutions for linear programs. Our study differs by feeding quadratic programs directly into transformers and proving that with appropriate tokenization, these sequence models can emulate iterative solvers while handling linear and $\ell_1$ constraints.
Beyond demonstrating feasibility, we provide a systematic analysis of how transformer attention mechanisms capture the structure of quadratic programs, and we show empirically that our model achieves competitive accuracy. In this way, our work establishes transformers themselves as standalone, general-purpose optimizers for constrained quadratic programs.

\paragraph{Multivariate Time-Series Transformers.}
Early neural approaches to multivariate forecasting relied on encoder–decoder RNNs and temporal convolutional networks; however, their receptive fields or recurrence limited long-range context.  Transformer variants now dominate the landscape. Temporal Fusion Transformer (TFT) augments attention mechanism with gating and interpretable variable selection \citep{Lim2021TFT}. Informer introduces ProbSparse self-attention for efficient long-sequence inference \citep{Zhou2021Informer}, while Autoformer and FEDformer decompose series into trend and seasonal tokens to improve long-horizon extrapolation \citep{Wu2021Autoformer,Zhou2022FEDformer}.  Crossformer \citep{ZhangYan2023Crossformer} and Spacetimeformer \citep{Dai2021Spacetimeformer} rearrange inputs to model cross-dimension dependencies explicitly, and PatchTST \citep{Nie2023PatchTST} treats fixed-length segments as tokens to boost locality.  Most recently, TimePFN uses permutation-equivariant networks pre-trained on synthetic data to provide strong zero-shot generalization \citep{Taga2025TimePFN}. These advances demonstrate the transformer's flexibility in processing diverse token structures and incorporating auxiliary information, setting the foundation for integrating optimization-solving capabilities alongside forecasting tasks in multivariate time-series models.

\section{Theoretical Results}
\label{sec:all-theory}

\subsection{Problem Setup}
\label{sec:setup}

\paragraph{Notation.}
Let $[p]=\{1,\dots,p\}$ for an integer $p\ge 1$.
Lowercase letters (e.g., $x,y$) denote vectors; uppercase letters (e.g., $A,C$) denote matrices.
For $A\in\mathbb{R}^{n\times n}$, let $\lambda_{\max}(A)$ be its largest eigenvalue and $\|A\|_2$ its operator norm.
We write $\langle u,v\rangle=u^\top v$, use $\|\cdot\|_2$ for the Euclidean norm, and $\|\cdot\|_1$ for the $\ell_1$ norm.
For $u\in\mathbb{R}$, set $[u]_+=\max\{u,0\}$ (applied elementwise to vectors).
The soft-thresholding map is $\mathcal{S}_\theta(y)=\operatorname{sign}(y)\odot(|y|-\theta)_+$,
and $\operatorname{Proj}_{\{\|\cdot\|_1\le B\}}$ denotes Euclidean projection onto the $\ell_1$-ball.
We use $\operatorname{prox}_{\psi}(y)=\arg\min_{x}\{\tfrac12\|x-y\|_2^2+\psi(x)\}$ and write $L=\lambda_{\max}(A)$.

\paragraph{Quadratic programs.}
We study four convex QPs, each specified by problem data and a stepsize pair $(\gamma,\eta)$:
\begin{align}
\text{(U)}\quad
&\min_{x\in\mathbb{R}^n}\ \tfrac12\,x^\top A x + b^\top x,
\label{eq:prob-U}\\
\text{(LC)}\quad
&\min_{x\in\mathbb{R}^n}\ \tfrac12\,x^\top A x + b^\top x
\ \ \text{s.t.}\ \ Cx\preceq d,
\label{eq:prob-LC}\\
\text{(R)}\quad
&\min_{x\in\mathbb{R}^n}\ \tfrac12\,x^\top A x + b^\top x + \lambda\|x\|_1,\qquad \lambda>0,
\label{eq:prob-R}\\
\text{(C)}\quad
&\min_{x\in\mathbb{R}^n}\ \tfrac12\,x^\top A x + b^\top x
\ \ \text{s.t.}\ \ \|x\|_1\le B,\quad B>0.
\label{eq:prob-C}
\end{align}
Throughout, we assume $A\succ0$; for (LC) we also assume the feasible set $\{x:\,Cx\preceq d\}$ is nonempty.

\paragraph{Algorithms to be emulated.}
Let $x_k$ (and, when present, $\lambda_k\succeq 0$) denote the current iterate(s), and write $L=\lambda_{\max}(A)$. We have the following algorithms:

\begin{description}
    \item[GD (U)] Unconstrained Gradient Descent:
    \begin{equation}
        x_{k+1}=x_k-\gamma(Ax_k+b), \qquad 0<\gamma<\frac{2}{L}.
        \label{eq:map-U}
    \end{equation}

    \item[Arrow--Hurwicz (LC)] For problems with linear constraints:
    \begin{equation}
        \begin{aligned}
            x_{k+1}&=x_k-\gamma(Ax_k+b+C^\top\lambda_k),\\
            \lambda_{k+1}&=\big[\lambda_k+\eta(Cx_{k+1}-d)\big]_+,
        \end{aligned}
        \qquad
        \begin{gathered} 
            0<\gamma<\frac{2}{L}, \\
            \eta\gamma\|C\|_2^2<1.
        \end{gathered}
        \label{eq:map-LC}
    \end{equation}

    \item[ISTA (R)] Iterative Shrinkage-Thresholding Algorithm for regularization:
    \begin{equation}
        \begin{aligned} 
            y_k &= x_k-\gamma(Ax_k+b), \\
            x_{k+1} &= \mathcal{S}_{\gamma\lambda}(y_k)=\operatorname{prox}_{\gamma\lambda\|\cdot\|_1}(y_k),
        \end{aligned}
        \qquad 0<\gamma\le \frac{1}{L}.
        \label{eq:map-R}
    \end{equation}

    \item[PGD with $\ell_1$-projection (C)] Projected Gradient Descent onto the $\ell_1$-ball:
    \begin{equation}
        \begin{aligned} 
            y_k &= x_k-\gamma(Ax_k+b), \\
            x_{k+1} &= \operatorname{Proj}_{\{\|x\|_1\le B\}}(y_k),
        \end{aligned}
        \qquad 0<\gamma\le \frac{1}{L}.
        \label{eq:map-C}
    \end{equation}
\end{description}

We present explicit weight constructions showing how small transformer fragments implement one iteration of classical first-order methods for QPs. Tokens encode the current iterate \(x_k\), the multiplier \(\lambda_k\) when needed, and rows of problem data (e.g., \(A\), \(C\)). A residual on the \(x\)-row propagates the iterate, and repeating these one-step maps yields the standard convergence behavior under the usual step size conditions of the target algorithms. The summary of the required architectures for each algorithm is presented in Table \ref{tab:algo_summary_text_final}.

\begin{table*}[htbp]
\centering
\caption{Summary of transformer-based optimization algorithms.}
\label{tab:algo_summary_text_final}
\scriptsize
\setlength{\tabcolsep}{3pt}%
\renewcommand{\arraystretch}{1.10}%
\resizebox{0.95\linewidth}{!}{%
\begin{tabular}{@{} l l l @{}}
\toprule
\multicolumn{1}{c}{\textbf{Optimization Program}} &
\multicolumn{1}{c}{\textbf{Architecture}} &
\multicolumn{1}{c}{\textbf{Emulated Algorithm}} \\
\midrule
\rowcolors{2}{black!6}{white}%
Unconstrained QP & 1 Linear Attention Block & Gradient Descent \\
Linearly Constrained QP & 2 Sequential Attention Blocks & Arrow-Hurwicz Primal-Dual Method \\
$\ell_1$-Regularized QP & 1 Linear Attention Block + FFN & ISTA (Proximal Gradient Descent) \\
$\ell_1$-Constrained QP & 1 Linear Attention Block + FFN with Threshold Loop & Projected Gradient Descent \\
\bottomrule
\end{tabular}%
}
\end{table*}

\label{sec:theory}
\subsection{Unconstrained QP via Single-Block Linear Attention}
\label{sec:U-main}

We show that a single linear-attention head with a residual connection simulates one step of gradient descent for the unconstrained QP defined in Section \ref{sec:setup}.

\smallskip

\begin{proposition}
\label{prop:unconstrained}
A single linear-attention head that attends from the \(x\)-row to the \(A\)-rows, followed by a residual on the \(x\)-row, realizes the update \(x_{k+1}=x_k-\gamma(Ax_k+b)\). Hence, for any \(0<\gamma<2/L\), the iterates converge linearly to \(x^\star=-A^{-1}b\).
\end{proposition}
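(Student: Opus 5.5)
The plan has two parts. First, fix a tokenization and give explicit attention weights that realize one gradient step. Second, invoke the standard linear convergence of gradient descent on a strongly convex quadratic.

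\textbf{Tokenization.} We use $n+1$ tokens in $\mathbb{R}^{2n+2}$.
\begin{itemize}
\item The \(x\)-row token is \(z_0=(x_k;\,0_n;\,0;\,1)\).
\item For \(i\in[n]\), the \(A\)-row token is \(z_i=(0_n;\,a_i;\,b_i;\,0)\), where \(a_i^\top\) is the \(i\)-th row of \(A\).
\end{itemize}

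\textbf{One attention step.} A linear attention head computes, at the query position \(0\),
\[
\mathrm{out}_0=\sum_{i=1}^n (W_V z_i)\,(W_K z_i)^\top (W_Q z_0).
\]
We choose the weights as follows.
\begin{itemize}
\item The query extracts the iterate together with a constant: \(W_Q z_0=(x_k;1)\in\mathbb{R}^{n+1}\).
\item The key reads the row data: \(W_K z_i=(a_i;b_i)\).
\item The value places the standard basis vector \(e_i\) into the first block. Since \(e_i\) is not a linear function of \(z_i\), we append a one-hot positional code \(e_i\) to each \(A\)-row token, and \(W_V\) copies this code, scaled by \(-\gamma\), into the first block.
\end{itemize}
Then
\[
(W_K z_i)^\top(W_Q z_0)=a_i^\top x_k+b_i=(Ax_k+b)_i,
\]
so
\[
\mathrm{out}_0=-\gamma\sum_i e_i (Ax_k+b)_i=-\gamma(Ax_k+b),
\]
placed in the \(x\)-block. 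The residual connection then gives \(x_{k+1}=x_k-\gamma(Ax_k+b)\).

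We also need the \(A\)-row tokens to stay unchanged, so that the block can be iterated. We have two options: restrict the head to the single query position (the proposition only asks for attention from the \(x\)-row), or choose weights whose output at the \(A\)-rows vanishes, e.g.\ by giving \(A\)-row tokens a zero query component. With the latter choice the block maps \((x_k,A,b)\mapsto(x_{k+1},A,b)\).

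\textbf{Convergence.} The fixed point \(x^\star=-A^{-1}b\) satisfies \(x_{k+1}-x^\star=(I-\gamma A)(x_k-x^\star)\). Since \(A\succ0\) has eigenvalues in \([\mu,L]\) with \(\mu>0\), the eigenvalues of \(I-\gamma A\) lie in \([1-\gamma L,\,1-\gamma\mu]\). For \(0<\gamma<2/L\), the contraction factor
\[
\rho=\max\{|1-\gamma\mu|,\,|1-\gamma L|\}
\]
is strictly less than \(1\), which gives \(\|x_k-x^\star\|_2\le\rho^k\|x_0-x^\star\|_2\).

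\textbf{Main obstacle.} The only delicate point is the construction, specifically producing the basis vector \(e_i\) in the value. The row data itself cannot yield \(e_i\) linearly. The fix is the appended one-hot positional code; an equivalent alternative is to tokenize by columns and use symmetry of \(A\). The rest is routine linear algebra.
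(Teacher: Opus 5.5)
Your construction is correct and is essentially the paper's. The appended one-hot ID on each $A$-row token serves as the value and the query is $(x_k;1)$. The differences are cosmetic: you fold $b_i$ into the $A$-row keys (as the paper itself does in its (LC) construction) instead of using a separate $b$-token, and you put $-\gamma$ into $W_V$ rather than into a post-map. Your contraction bound $\rho=\max\{|1-\gamma\mu|,|1-\gamma L|\}<1$ correctly supplies the linear-convergence claim that the paper only asserts.

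One small correction: drop the aside that column tokenization plus symmetry is an ``equivalent alternative''. For symmetric $A$ the column tokens coincide with the row tokens, so a single linear head still cannot produce $Ax$ from them without positional IDs.
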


\begin{proof}[Construction]
We store tokens for the rows of \(A\), the vector \(b\), and the current iterate \(x_k\), and we design fixed query, key, and value maps \(W_Q,W_K,W_V\) (see Appendix~\ref{app:U}). Then the resulting \(q,k,v\) yield, when evaluated at the \(x\)-token is $o = A x_k + b$. Applying a residual with post-map \(-\gamma I\) on the \(x\)-row yields
\[
x_{k+1}=x_k-\gamma\,o=x_k-\gamma(Ax_k+b).
\]
Full token layout and selector matrices are given in the Appendix~\ref{app:U}. Furthermore, we empirically validate this construction in Figure~\ref{fig:convU} and discuss the algorithmic complexity in Appendix \ref{app:runtime:unconstrained}.
\end{proof}

\subsection{Linearly Constrained QP via Single Macro-Block Linear Attention}
\label{sec:LCQP-main}

We now turn to the linearly constrained QP, the second formulation presented in Section \ref{sec:setup}.

\begin{proposition}
\label{prop:LC}
A single macro-block consisting of two self-attention blocks and a residual on the \(x\)-row realizes the sequential update
\begin{align}
x_{k+1}=x_k-\gamma\big(Ax_k+b+C^\top\lambda_k\big), \\
\lambda_{k+1}=\big[\lambda_k+\eta\,(C x_{k+1}-d)\big]_+ .
\end{align}
Let \(L=\lambda_{\max}(A)\) and let $\|\cdot\|_2$ denote the operator norm. If the feasible set \(\{x:\,Cx\preceq d\}\) is nonempty and the step sizes satisfy \(0<\gamma<2/L\) and \(\eta\gamma\|C\|_2^2<1\), then the iterates \((x_k,\lambda_k)\) converge to a KKT point. 
\end{proposition}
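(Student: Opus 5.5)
The proof has two parts: a weight construction showing that the macro-block realizes one Arrow--Hurwicz step, and a convergence argument for the resulting primal--dual iteration.

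\emph{Construction.} I would extend the token layout of Proposition~\ref{prop:unconstrained}. There are tokens for the rows of \(A\) and of \(C\), a token carrying \(b\) and \(d\), an \(x\)-token holding \(x_k\), and a \(\lambda\)-token holding \(\lambda_k\). Each token is padded with one-hot type indicators so that the key and value selectors can separate the ``\(A\)-rows'' from the ``\(C\)-rows''.

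The first attention block attends from the \(x\)-token. Exactly as in the unconstrained case, the \(A\)-rows contribute \(Ax_k+b\). A second group of value maps, keyed on the \(\lambda\)-coordinates, contributes \(C^\top\lambda_k=\sum_j \lambda_{k,j}c_j\). This works because linear attention computes \(\sum_i (q^\top k_i)v_i\): with the query selecting \(\lambda_k\) and each \(C\)-row token \(i\) carrying key \(e_i\) and value \(c_i\), the products give the right weights. A residual with post-map \(-\gamma I\) then writes \(x_{k+1}\).

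The second block attends from the \(\lambda\)-token to the updated \(x\)-token and the \(C\)-rows, producing \(Cx_{k+1}-d\). It adds this with a residual scaled by \(\eta\), and applies a ReLU, using an FFN or ReLU post-map restricted to the \(\lambda\)-coordinates. This gives \(\lambda_{k+1}=[\lambda_k+\eta(Cx_{k+1}-d)]_+\).

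Because the blocks are sequential, the second block sees \(x_{k+1}\) rather than \(x_k\). This Gauss--Seidel ordering is exactly what the stated update requires.

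\emph{Convergence.} I would treat the iteration as a projected primal--dual (Arrow--Hurwicz/Uzawa) method on the Lagrangian \(\mathcal L(x,\lambda)=\tfrac12x^\top Ax+b^\top x+\lambda^\top(Cx-d)\). Since \(A\succ0\) and the feasible set is nonempty, the primal problem has a unique minimizer \(x^\star\), and Slater's condition (or linearity of the constraints) guarantees that a KKT multiplier \(\lambda^\star\succeq0\) exists.

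I would write the fixed-point relations
\[
x^\star=x^\star-\gamma(Ax^\star+b+C^\top\lambda^\star),\qquad \lambda^\star=[\lambda^\star+\eta(Cx^\star-d)]_+,
\]
and set \(e_k=x_k-x^\star\) and \(\mu_k=\lambda_k-\lambda^\star\). Nonexpansiveness of \([\cdot]_+\) gives
\[
\|\mu_{k+1}\|^2\le \|\mu_k\|^2+2\eta\mu_k^\top Ce_{k+1}+\eta^2\|Ce_{k+1}\|^2 .
\]
The primal error satisfies \(e_{k+1}=(I-\gamma A)e_k-\gamma C^\top\mu_k\).

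The plan is to build a weighted Lyapunov function
\[
V_k=\tfrac1\gamma\|e_k\|^2+\tfrac1\eta\|\mu_k\|^2,
\]
possibly with a correction term \(-\,\langle e_k, C^\top\mu_k\rangle\)-type or using the \(M\)-norm with \(M=\begin{pmatrix}\gamma^{-1}I & -C^\top\\ -C & \eta^{-1}I\end{pmatrix}\). This \(M\) is positive definite exactly when \(\eta\gamma\|C\|_2^2<1\), which explains that step-size condition. The aim is to show \(V_{k+1}\le V_k-c\|e_k\|_A^2\) for some \(c>0\), using strong convexity (\(A\succ0\)) together with \(\gamma<2/L\) to control the \(I-\gamma A\) factor. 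Summability then gives \(e_k\to0\). Bounded dual iterates plus the fixed-point characterization show that every cluster point of \((x_k,\lambda_k)\) is a KKT point. Fejér monotonicity in \(\|\cdot\|_M\) then upgrades this to convergence of the whole sequence.

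\emph{Main obstacle.} The hard step is the Lyapunov inequality. The cross terms \(\mu_k^\top Ce_{k+1}\) from the dual step and \(-\gamma e_k^\top C^\top\mu_k\) from the primal step do not cancel exactly, because of the Gauss--Seidel offset. My first attempt would be to rewrite the iteration as a preconditioned forward--backward / Chambolle--Pock-type scheme with metric \(M\) and invoke its standard averaged-operator convergence. If the range \(\gamma\in(1/L,2/L)\) breaks cocoercivity, I would instead fall back on the \(\gamma\le 1/L\) regime and treat the remaining range with a direct spectral argument.
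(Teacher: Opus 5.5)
\textbf{The convergence claim fails on part of the stated range.} There is a genuine gap, and it is exactly the step you flag as the main obstacle. The convergence half of the statement is false for part of the stated step-size range. So no Lyapunov function can give $V_{k+1}\le V_k-c\|e_k\|_A^2$ for all $0<\gamma<2/L$, and the \emph{direct spectral argument} you plan for $\gamma\in(1/L,2/L)$ would show instability instead. The paper's proof consists only of the weight construction and never argues convergence, so it offers no route around this.

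Take $n=m=1$, $A=1$, $b=-2$, $C=1$, $d=1$, whose unique KKT point is $(x^\star,\lambda^\star)=(1,1)$. Since $\lambda^\star>0$, the projection is inactive near this point and the error map is linear: $e_{k+1}=(1-\gamma)e_k-\gamma\mu_k$ and $\mu_{k+1}=\mu_k+\eta e_{k+1}$. Its characteristic polynomial is $z^2-(2-\gamma-\eta\gamma)z+(1-\gamma)$, whose value at $z=-1$ is $4-2\gamma-\eta\gamma$. With $\gamma=1.9<2/L$ and $\eta\gamma\|C\|_2^2=0.5<1$, the polynomial is $z^2+0.4z-0.9$, which has a root near $-1.17$. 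So the KKT point is a saddle of the locally affine iteration, and generic iterates do not converge to it.

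\textbf{What your Lyapunov function does prove.} Take your $V_k=\tfrac1\gamma\|e_k\|^2+\tfrac1\eta\|\mu_k\|^2$. Use the projection inequality $\langle\mu_{k+1}-\mu_k,\mu_{k+1}\rangle\le\eta\langle Ce_{k+1},\mu_{k+1}\rangle$, polarization, and Young's inequality on the leftover cross term $\langle Ce_{k+1},\mu_{k+1}-\mu_k\rangle$. This gives
\[
V_{k+1}\le V_k-\|e_k\|_A^2-\|e_{k+1}\|_{A-\eta C^\top C}^2-\|x_{k+1}-x_k\|_{\gamma^{-1}I-A}^2 .
\]
Together with your cluster-point and Fej\'er argument, this proves convergence under $\gamma\le 1/L$ and $\eta C^\top C\preceq A$. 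That is a corrected statement you can actually establish.

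Positive definiteness of your $M$, i.e.\ $\eta\gamma\|C\|_2^2<1$, is the step-size condition of the extrapolated Chambolle--Pock scheme, whose dual step uses $2x_{k+1}-x_k$. Arrow--Hurwicz has no extrapolation, so it is not a forward--backward step in that metric. This is why your cross terms refuse to cancel.

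\textbf{The construction also needs repair.} In linear attention each summand $\langle q_j,k_i\rangle v_i$ takes its key and value from the same token $i$. So a product $\lambda_{k,i}c_i$ (or $c_i^\top x_{k+1}$) can appear at the $x$- or $\lambda$-token only if that token itself contains $\lambda_k$ (resp.\ $x_{k+1}$). With separate $x$- and $\lambda$-tokens, your first block, queried from the $x$-token, cannot form $C^\top\lambda_k$. Likewise, your second block, queried from the $\lambda$-token, cannot form $Cx_{k+1}-d$.

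The paper uses the opposite placement. It evaluates the $C^\top$ head at the $\lambda$-token (query $\lambda_k$, keys $e_i^{(m)}$, values $c_i$). It evaluates the $Cx-d$ head at the $x$-token (query $[x_{k+1};1]$, keys $[c_i;-d_i]$, values $e_i^{(m)}$), and then combines the outputs on the appropriate rows. The cleanest fix is to keep $x_k$ and $\lambda_k$ in a single state token. Then every head is queried from that token, and both the residual updates and the ReLU (applied to the $\lambda$-coordinates by an FFN) remain token-wise.
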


\begin{proof}[Construction]
We fix \(W_Q,W_K,W_V\) so that two heads in the first linear-attention block, evaluated at the \(x\)- and \(\lambda\)-tokens, produce
\[
o^{Ax{+}b}=Ax_k+b, \qquad o^{C^\top\lambda}=C^\top\lambda_k,
\]
which yields the primal update
\[
x_{k+1}
= x_k-\gamma\big(o^{Ax{+}b}+o^{C^\top\lambda}\big)
= x_k-\gamma\big(Ax_k+b+C^\top\lambda_k\big).
\]
We then run a second attention block with a single head evaluated at \(x_{k+1}\), obtaining
\[
o^{Cx{-}d}=C x_{k+1}-d,
\]
and apply a token-wise ReLU for the dual step,
\[
\lambda_{k+1}=\big[\lambda_k+\eta\,o^{Cx{-}d}\big]_+.
\]
Full \(W_Q,W_K,W_V\) specifications and calculations of the linear attention blocks appear in Appendix~\ref{app:LC}, together with the per-iteration cost analysis in Appendix~\ref{app:runtime:linear}.
\end{proof}

\begin{figure*}[t]
  \centering
  \begin{subfigure}{0.32\textwidth}
    \includegraphics[width=\linewidth]{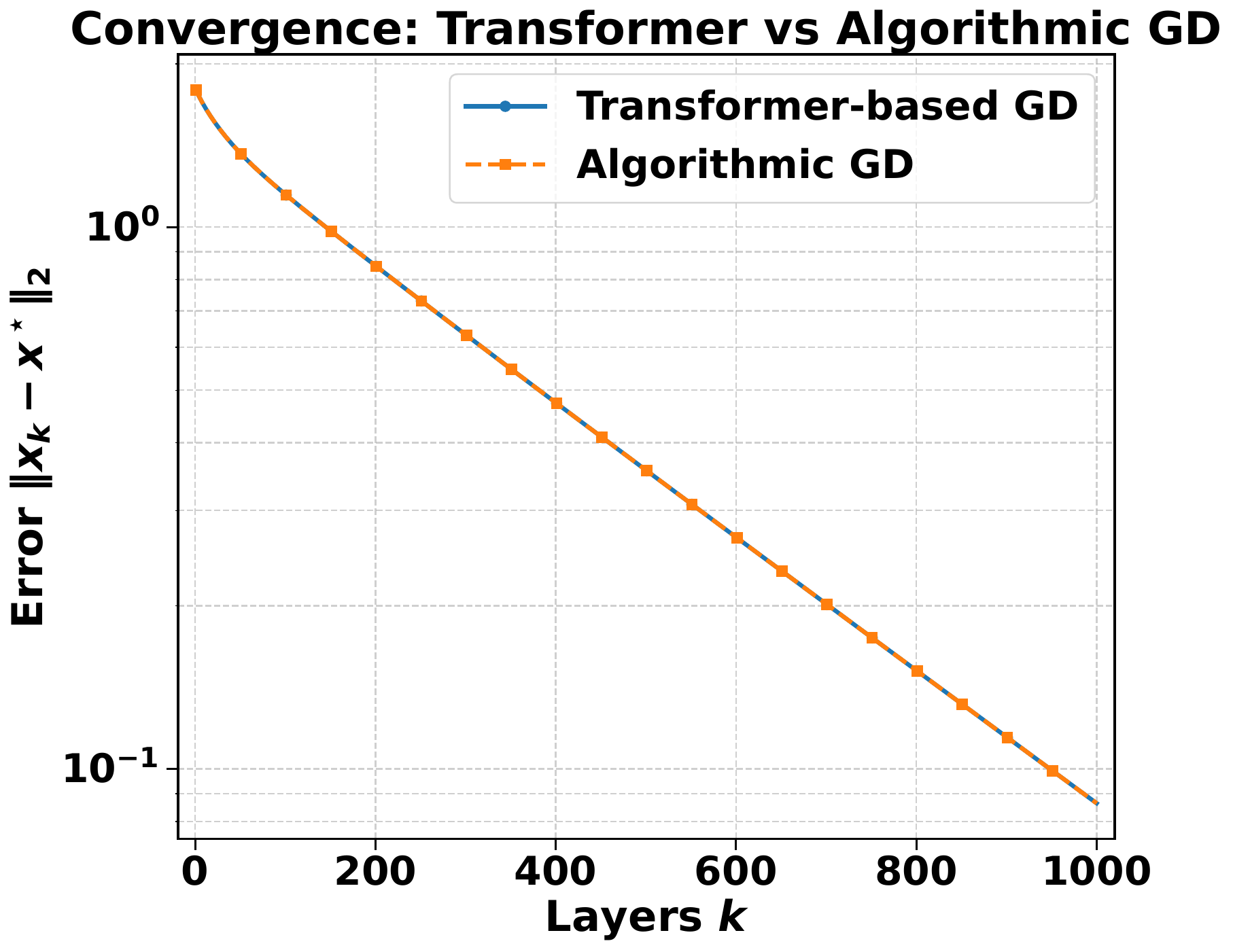}
    \caption{Unconstrained \((\mathrm{U})\).}
    \label{fig:convU}
  \end{subfigure}\hfill
  \begin{subfigure}{0.32\textwidth}
    \includegraphics[width=\linewidth]{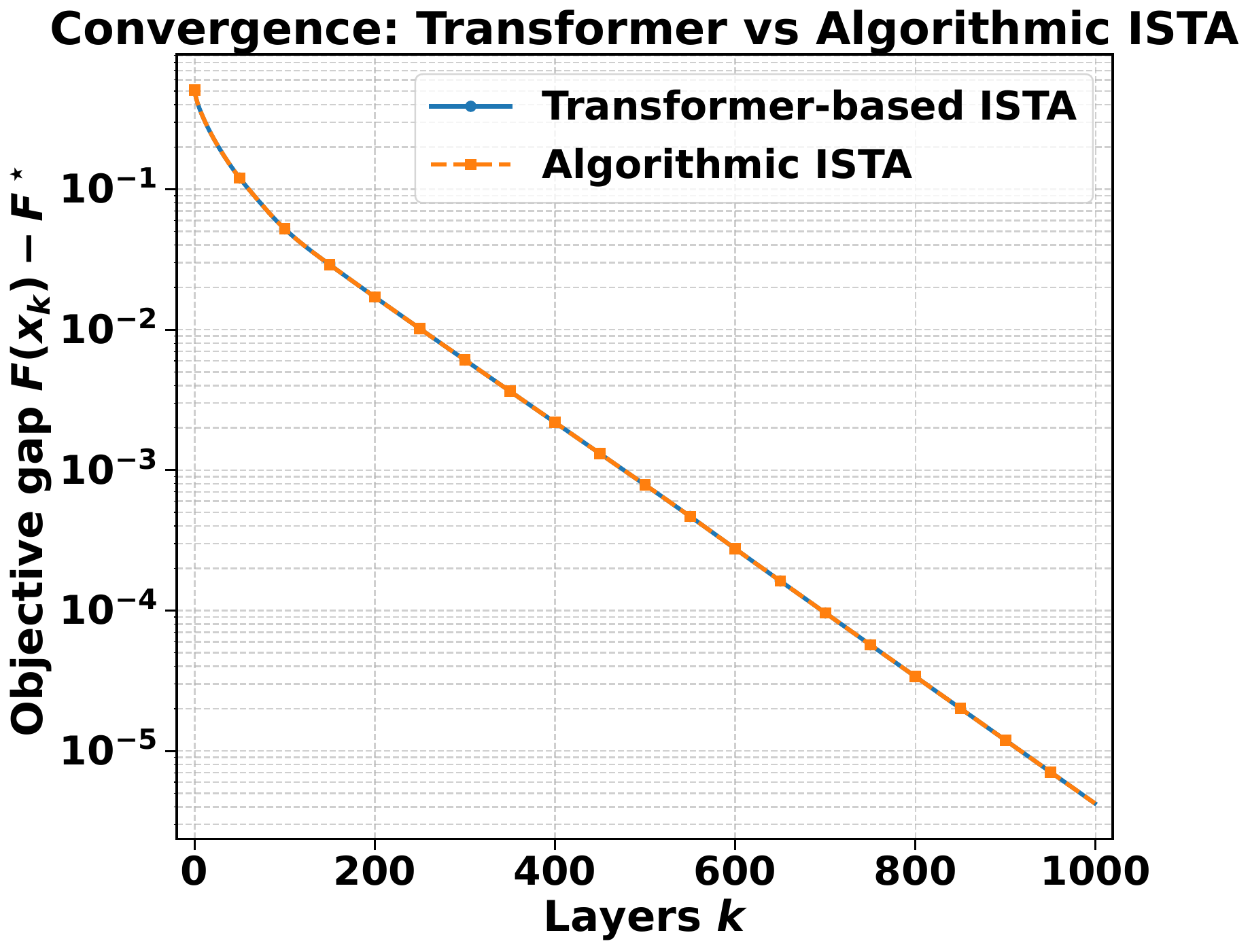}
    \caption{Regularized \((\mathrm{R})\).}
    \label{fig:convR}
  \end{subfigure}\hfill
  \begin{subfigure}{0.32\textwidth}
    \includegraphics[width=\linewidth]{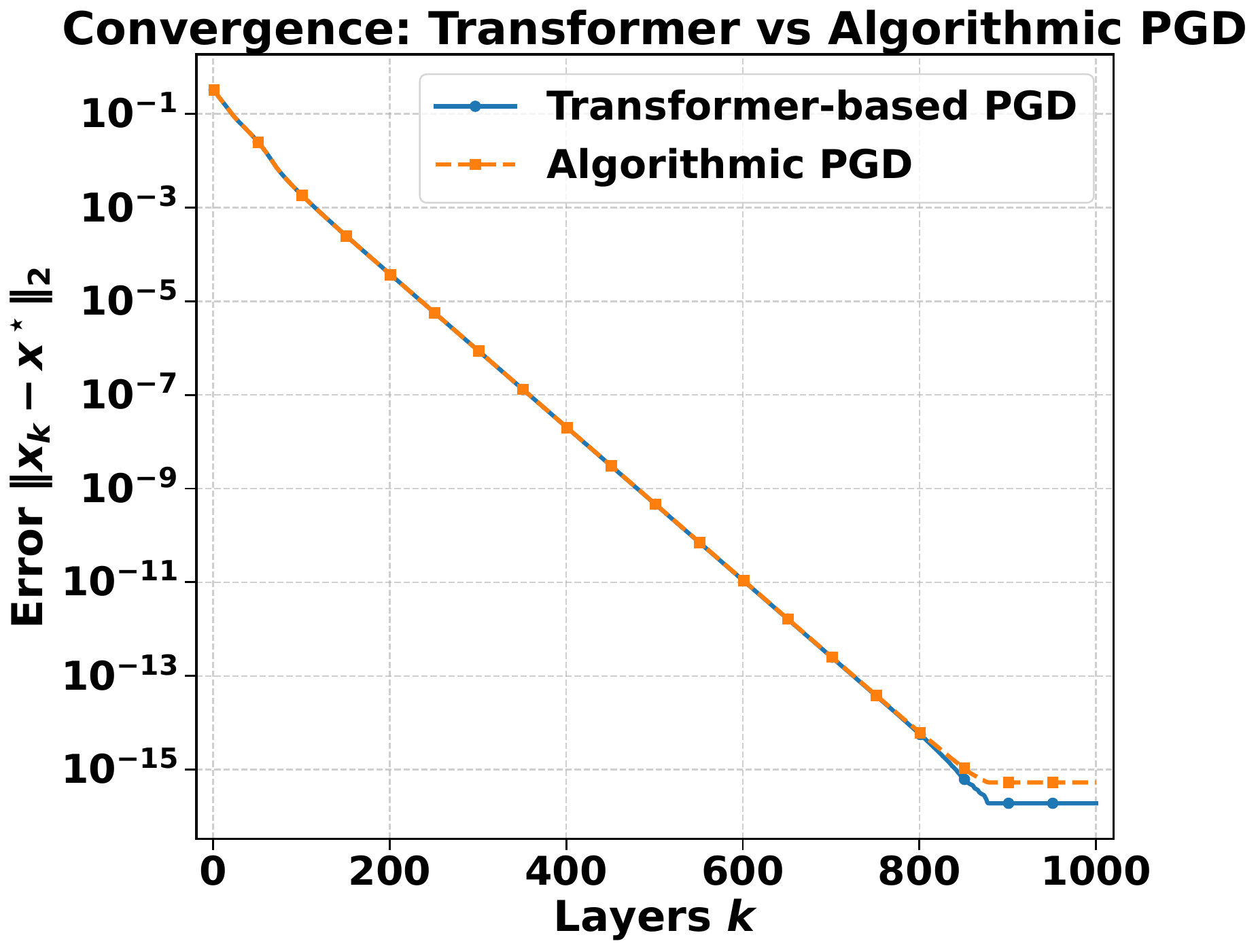}
    \caption{Constrained \((\mathrm{C})\).}
    \label{fig:convC}
  \end{subfigure}
\caption{Convergence on a semilog-\(y\) scale. Each panel compares a standard algorithm (orange) with our transformer-based construction (blue): \textbf{(a)} GD for the unconstrained case (U), \textbf{(b)} ISTA for the regularized case (R), and \textbf{(c)} PGD for the constrained case (C). The curves nearly overlap across layers (depth \(=\) iterations), showing that the transformer mirrors the reference methods.}

  \label{fig:convAll}
\end{figure*}

\subsection{Sparse QPs via Single-Block Linear Attention: \texorpdfstring{\(\ell_1\)}{L1}-Regularized and \texorpdfstring{\(\ell_1\)}{L1}-Constrained}
\label{sec:L1-main}
We now introduce sparsity via an $\ell_1$ penalty (R) or an $\ell_1$ budget (C)—corresponding to the last two QP formulations in Section~\ref{sec:setup}—and extend the (U) block, without changing the attention mechanism, to realize proximal and projected gradient descent algorithms.

\medskip\noindent

\begin{proposition}
\label{prop:ista}
One (U) gradient step followed by a fixed two-layer ReLU FFN with a scalar threshold \(\theta=\gamma\lambda\) yields
\[
x_{k+1}=\mathcal S_\theta\!\big(x_k-\gamma(Ax_k+b)\big)=\operatorname{prox}_{\gamma\lambda\|\cdot\|_1}(y_k).
\]
With \(0<\gamma\le 1/L\), the objective \(F(x)=\tfrac12 x^\top A x + b^\top x + \lambda\|x\|_1\) decreases and \(x_k\) converges to a minimizer.
\end{proposition}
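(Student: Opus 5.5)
The proof has two parts: a construction and a convergence argument.

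\emph{Construction.} We reuse Proposition~\ref{prop:unconstrained} unchanged. The linear-attention head evaluated at the \(x\)-token outputs \(o=Ax_k+b\), and the residual with post-map \(-\gamma I\) gives \(y_k=x_k-\gamma(Ax_k+b)\) on the \(x\)-row. Soft-thresholding then factors coordinatewise through ReLUs:
\[
\mathcal S_\theta(y)=[y-\theta\1]_+-[-y-\theta\1]_+ .
\]
So we take a token-wise two-layer FFN with first-layer weight \(W_1=\begin{pmatrix} I\\ -I\end{pmatrix}\in\mathbb{R}^{2n\times n}\), bias \(b_1=-\theta\1_{2n}\), a ReLU, and second layer \(W_2=(I\ \ -I)\).

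This FFN has no residual on the \(x\)-coordinates; equivalently, it keeps the residual and adds the correction \(\mathcal S_\theta(y)-y\), which is also ReLU-expressible. The threshold \(\theta=\gamma\lambda\) enters only through the bias. The bias can be hard-coded, or read from a constant coordinate of the token so that the same weights work for every \(\lambda\).

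We check that the FFN acts only on the \(x\)-row, for example by gating with a one-hot token-type indicator folded into the bias. We also check that the other tokens (the rows of \(A\) and \(b\)) pass through unchanged, so the block can be stacked. Verifying \(\mathcal S_\theta=\operatorname{prox}_{\theta\|\cdot\|_1}\) is a separable one-dimensional subgradient computation: \(0\in x-y+\theta\,\partial|x|\).

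\emph{Convergence.} This is the standard proximal-gradient argument with \(f(x)=\tfrac12x^\top Ax+b^\top x\), which is \(L\)-smooth and convex, and \(g=\lambda\|\cdot\|_1\).

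1. The descent lemma \(f(x_{k+1})\le f(x_k)+\langle\nabla f(x_k),x_{k+1}-x_k\rangle+\tfrac{L}{2}\|x_{k+1}-x_k\|^2\) holds.

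2. The prox optimality condition \(x_k-\gamma\nabla f(x_k)-x_{k+1}\in\gamma\,\partial g(x_{k+1})\) holds.

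3. Combining steps 1 and 2 with \(\gamma\le1/L\) gives the sufficient decrease
\[
F(x_{k+1})\le F(x_k)-\Big(\tfrac1\gamma-\tfrac L2\Big)\|x_{k+1}-x_k\|^2\le F(x_k)-\tfrac{1}{2\gamma}\|x_{k+1}-x_k\|^2 .
\]
This yields the monotone-decrease claim.

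4. Comparing with any minimizer \(x^\star\) gives \(F(x_{k+1})-F^\star\le \tfrac1{2\gamma}\big(\|x_k-x^\star\|^2-\|x_{k+1}-x^\star\|^2\big)\). This gives Fejér monotonicity and the \(O(1/k)\) rate.

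5. Since \(A\succ0\), \(F\) is strongly convex with a unique minimizer, and the prox-gradient map is a contraction with factor \(\max|1-\gamma\mu|<1\) for \(\mu=\lambda_{\min}(A)\). The reason is that \(\mathcal S_\theta\) is nonexpansive and \(\|I-\gamma A\|_2<1\) when \(0<\gamma\le 1/L\). So \(x_k\to x^\star\), even linearly.

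The main obstacle is not the optimization theory but the architectural bookkeeping. We must ensure that the FFN's ReLU pair and the residual interact correctly: soft-thresholding must replace \(y_k\) rather than be added to it. The FFN must also leave non-\(x\) tokens intact, which is needed for stacking the blocks into a depth-\(K\) network.

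I would resolve this by writing the block output as \(y+\mathrm{FFN}(y)\) with
\[
\mathrm{FFN}(y)=-[y]_++[-y]_++[y-\theta\1]_+-[-y-\theta\1]_+ ,
\]
using \(y=[y]_+-[-y]_+\). This is a width-\(4n\) ReLU layer, and its output is exactly \(\mathcal S_\theta(y)\). Applying it only on the \(x\)-token is done via a type-indicator coordinate: a large negative bias on the other tokens kills all ReLUs there, so those tokens pass through the residual unchanged.
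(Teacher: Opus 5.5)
Your construction matches the paper's. The (U) head produces $y_k$. The width-$2n$ FFN with $W_1=[I_n;-I_n]$, $W_2=[I_n\ \ {-I_n}]$ and bias $-\theta[\mathbf 1_n;\mathbf 1_n]$ then outputs $(y_k-\theta\mathbf 1_n)_+-(-y_k-\theta\mathbf 1_n)_+=\mathcal S_\theta(y_k)$, which is checked by the same coordinatewise case analysis, and $\theta=\gamma\lambda$ gives the prox.

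The paper stops at the construction and treats the decrease and convergence claims as standard ISTA facts. Your proximal-gradient argument (sufficient decrease, Fej\'er monotonicity, contraction factor $1-\gamma\lambda_{\min}(A)$) fills in what the paper leaves implicit, as does your residual and token-gating bookkeeping. Neither changes the route.

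One small caveat: the large-negative-bias gating is exact only under an a priori bound on the token entries. You can avoid this by storing $x_k$ in a slot that is zero on all other tokens, since $\mathcal S_\theta(0)=0$.
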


\begin{proof}[Construction]
We compute \(y_k=x_k-\gamma(Ax_k+b)\) with the (U) head and implement soft-thresholding with a fixed two-layer width-\(2n\) ReLU FFN with weights 
\[
W_1=\begin{bmatrix}I_n\\[-2pt]-I_n\end{bmatrix}\!\in\R^{2n\times n},\qquad
W_2=\begin{bmatrix}I_n & -I_n\end{bmatrix}\!\in\R^{n\times 2n},
\]
and bias \(-\theta[\mathbf 1_n;\mathbf 1_n]\). Then
\[
\begin{split}
x_{k+1}
&= W_2\,\mathrm{ReLU}\!\Big(W_1 y_k-\theta\!\begin{bmatrix}\mathbf 1_n\\ \mathbf 1_n\end{bmatrix}\Big) \\
&= (y_k-\theta\mathbf 1_n)_+ - (-y_k-\theta\mathbf 1_n)_+ = \mathcal S_\theta(y_k).
\end{split}
\]

since \((u-\theta)_+ - (-u-\theta)_+=\mathrm{sign}(u)\,(|u|-\theta)_+\) coordinatewise.  
With \(\theta=\gamma\lambda\), this equals \(\operatorname{prox}_{\gamma\lambda\|\cdot\|_1}(y_k)\), i.e., the ISTA update. Full steps are provided in Appendix \ref{app:L1-reg}. Figure~\ref{fig:convR} shows that the transformer construction matches ISTA’s convergence behavior across depth. 
\end{proof}

\begin{proposition}
\label{prop:pgd}
One (U) gradient step followed by the same fixed two-layer ReLU FFN wrapped in a scalar threshold loop performs the exact Euclidean projection onto the \(\ell_1\)-ball:
\[
x_{k+1}=\mathrm{Proj}_{\{\|x\|_1\le B\}}\!\big(x_k-\gamma(Ax_k+b)\big).
\]
With \(0<\gamma\le 1/L\), this is projected gradient descent for (C), so \(x_k\) converges to an optimal solution.
\end{proposition}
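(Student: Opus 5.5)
The construction has two parts. The gradient step is taken verbatim from Proposition~\ref{prop:unconstrained}. The projection uses the classical fact that Euclidean projection onto the $\ell_1$-ball is a soft-thresholding with a data-dependent threshold. Concretely, for $y=y_k$ we have
\[
\mathrm{Proj}_{\{\|x\|_1\le B\}}(y)=
\begin{cases} y, & \|y\|_1\le B,\\ \mathcal S_{\theta^\star}(y), & \|y\|_1>B,\end{cases}
\]
where $\theta^\star>0$ is the unique root of $\phi(\theta):=\|\mathcal S_\theta(y)\|_1-B=\sum_i(|y_i|-\theta)_+-B$. I would first verify this from the KKT conditions of $\min_x \tfrac12\|x-y\|_2^2$ s.t.\ $\|x\|_1\le B$. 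If the constraint is inactive, $x=y$. Otherwise the multiplier $\mu>0$ gives $x=\operatorname{prox}_{\mu\|\cdot\|_1}(y)=\mathcal S_\mu(y)$, and complementary slackness forces $\|\mathcal S_\mu(y)\|_1=B$. Uniqueness of $\theta^\star$ follows because $\phi$ is continuous and piecewise linear, strictly decreasing on $[0,\|y\|_\infty]$, with $\phi(0)=\|y\|_1-B>0$ and $\phi(\|y\|_\infty)=-B<0$. The case $\|y\|_1\le B$ is also covered by $\mathcal S_0=\mathrm{id}$, so in all cases $x_{k+1}=\mathcal S_{\theta^\star}(y)$ with $\theta^\star=\max\{0,\text{root}\}$.

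The architecture therefore reuses the width-$2n$ ReLU FFN of Proposition~\ref{prop:ista}, now with a variable bias $\theta$. This FFN outputs $\mathcal S_\theta(y)$, and its hidden layer already contains $(y-\theta\mathbf 1)_+$ and $(-y-\theta\mathbf 1)_+$. Summing all $2n$ hidden units with a fixed all-ones readout gives $\|\mathcal S_\theta(y)\|_1$, so one pass yields both the candidate and $\phi(\theta)+B$. The scalar threshold loop then updates $\theta$ using only this scalar feedback, and I see two options.
\begin{itemize}
\item \textbf{Exact breakpoint search.} On each linear piece, $\phi(\theta)=\sum_{i\in S}|y_i|-|S|\theta-B$, where $S$ is the active set, and $|S|$ is readable as the number of positive hidden units (via a step/ReLU-difference count). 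The Newton-type update $\theta\leftarrow\theta+\phi(\theta)/|S(\theta)|$ is exactly the Michelot / Condat fixed-point iteration. Started at $\theta_0=0$, it increases monotonically and terminates exactly at $\theta^\star$ after at most $n$ loop passes, since the active set strictly shrinks or the root is hit.
\item \textbf{Bisection.} Bisection on $[0,\|y\|_\infty]$ is simpler but gives only approximate projection.
\end{itemize}
Since the statement claims an \emph{exact} projection, I would use the first option and prove finite termination by a monotonicity argument: $\theta_t\le\theta^\star$ is preserved by convexity of $\phi$, and each iteration either reaches the root or removes at least one index from $S$.

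The convergence claim is then standard. With the projection exact, the iteration is projected gradient descent on the convex $L$-smooth objective over a closed convex set. For $0<\gamma\le 1/L$ the objective is nonincreasing, and since $A\succ0$ the problem is strongly convex. The map $x\mapsto\mathrm{Proj}(x-\gamma(Ax+b))$ is a contraction with factor $\max_i|1-\gamma\lambda_i(A)|<1$, because projection is nonexpansive. Hence $x_k\to x^\star$ linearly.

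The main obstacle is the threshold loop. I need to show that the scalar feedback (the $\ell_1$ mass and the active count, both readable from the FFN's hidden layer) suffices to reach $\theta^\star$ exactly in finitely many passes. I also need to make sure the active-count readout can be realized without discontinuous units, for instance by noting that $|S|$ is constant on pieces and can be recovered from two evaluations of $\phi$ at nearby thresholds, or by allowing a hard indicator in the loop controller.
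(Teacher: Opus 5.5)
Your proof is correct, and most of it coincides with the paper's. The (U) head produces $y_k$, the width-$2n$ FFN with bias $-\theta[\mathbf 1_n;\mathbf 1_n]$ produces $\mathcal S_\theta(y_k)$, your all-ones readout gives $s(\theta)=\|\mathcal S_\theta(y_k)\|_1$, and the KKT conditions identify $\mathcal S_{\theta^\star}(y_k)$ with the projection.

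The genuine difference is the loop. The paper uses the damped update $\theta_{t+1}=\theta_t+\eta\,\mathrm{ReLU}(s_t-B)$ with $\theta_0=0$ and $0<\eta\le 1/n$. It shows that $r_t=\mathrm{ReLU}(s_t-B)$ contracts by a factor $1-m\eta$ on each linear piece, so $\theta_t\uparrow\theta^\star$ geometrically and the projection is exact only in the limit $t\to\infty$. For $\eta=1/n$ this is precisely your Newton step with $|S(\theta)|$ replaced by its upper bound $n$. The shorter step inherits your no-overshoot property from convexity of $\phi$, and in exchange the controller needs nothing beyond the scalar $s_t$.

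Your Michelot/Condat step buys true finite termination in at most $n$ passes; your convexity plus shrinking-active-set argument is right. Your contraction bound $1-\gamma\lambda_{\min}(A)$ also supplies the PGD convergence that the paper only invokes.

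The price is architectural. $|S(\theta)|=\#\{i:|y_{k,i}|>\theta\}$ is a discontinuous function of $(y_k,\theta)$, so no fixed ReLU network computes it exactly. Your finite-difference fallback recovers it only when the offset does not exceed the gap from $\theta$ to the next breakpoint $|y_{k,i}|$, and that gap is data-dependent. So you really do need a hard indicator and a scalar divider in the controller, beyond the primitives the statement names.

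Finally, gate the step, e.g.\ $\theta\leftarrow\theta+[\phi(\theta)]_+/\max\{|S(\theta)|,1\}$, as the paper does with its ReLU. As written, when $\|y_k\|_1<B$ your update drives $\theta$ below $0$, where the FFN no longer returns $y_k$ and the readout no longer equals $\|y_k\|_1$.
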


\begin{proof}[Construction]
From the (U) head we form \(y_k=x_k-\gamma(Ax_k+b)\).
We then run a scalar threshold loop
\[
\theta_{t+1}=\theta_t+\eta\,\big[\!\|\mathcal S_{\theta_t}(y_k)\|_1-B\big]_+,\qquad
\theta_0=0,\ \ 0<\eta\le \tfrac{1}{n},
\]
and set \(x_{k+1}=\mathcal S_{\theta^\star}(y_k)\) with \(\theta^\star=\lim_t\theta_t\).
This yields the exact Euclidean projection \(x_{k+1}=\mathrm{Proj}_{\{\|x\|_1\le B\}}(y_k)\).
Full calculation is given in Appendix~\ref{app:L1-Cons}. We empirically verify that the explicit transformer constructions mirror the corresponding first-order iterations across depth.
Figure~\ref{fig:convAll} compares GD/ISTA/PGD with their transformer counterparts for (U), (R), and (C), respectively, showing near-overlapping convergence curves. Finally, we provide a corresponding empirical check for the $\ell_1$-constrained case in Figure~\ref{fig:convC}.

\end{proof}

\section{Neural QP Solver Experiments}
\label{sec:neural_qp}

Our theoretical findings demonstrate that a transformer architecture has the expressive power to represent the computational steps of various QP-solving algorithms. In this section, we move from theoretical possibility to empirical validation. We conduct a series of experiments to determine if a transformer can learn the complex mapping from a QP's parameters to its optimal solution through data-driven training alone, without being explicitly programmed with an iterative algorithm.
\footnote{Our main code is available on \url{https://github.com/mziycfh/QP_transformer}.}

\subsection{Experimental Design}

We conduct a controlled function-approximation study to assess whether transformer architectures can generalize across families of linearly-constrained QPs. The task is to learn a mapping from a tokenized QP instance to its optimal solution. Each instance is of the form:
\[
\min_{x\in\mathbb R^{n}} \ \tfrac12\,x^\top A x + b^\top x\quad \text{s.t.}\quad C x \preceq d,
\]
where the dataset generator samples the problem data ($A\succ 0$, $b$, $C$, and $d$) and computes the ground-truth optimal solution $x^\star$ for supervision.

Our default configuration uses $n{=}5$ variables and $m{=}3$ constraints with datasets of 50k/10k/10k train/validation/test instances. Problem coefficients are sampled as: (i) $A$ is constructed as $A = GG^\top + 0.1I$ where $G \sim \mathcal{N}(0, I)$ to ensure it is positive definite; (ii) $b \sim \mathcal{N}(0, I)$; (iii) $C \sim \mathcal{N}(0, I)$; and (iv) $d \sim \text{Uniform}(1, 2)$.

To prepare the data for the models, we tokenize each QP instance into a sequence of length $n+m+3$. This sequence consists of one token for each row of $A \in \mathbb{R}^{n \times n}$ and $C \in \mathbb{R}^{m \times n}$ (total $n+m$ tokens), plus three additional tokens for the vector $b$, a zero-padded version of $d$, and a random initializer $x_{\mathrm{init}}$. Each token is a vector in $\mathbb{R}^n$. More details can be found in Appendix \ref{app:neural-exp-details}.

\paragraph{Model architectures and metrics.}
We compare two encoder-only transformer variants: a \textit{SoftmaxTransformer} and a \textit{LinearTransformer}. The SoftmaxTransformer employs standard multi-head self-attention with softmax normalization, computing attention weights as $\text{softmax}(QK^T/\sqrt{d_k})$. In contrast, the LinearTransformer uses a more efficient linear attention mechanism that replaces the softmax operation with element-wise products.  For a comparison against parameter-matched non-attention baselines (MLP, LSTM) on the same QP tokenization, see Appendix \ref{sec:ablation}.

To evaluate these architectures, we perform a hyperparameter sweep over model depth (number of layers) and the number of attention heads. Performance is primarily assessed using the coefficient of determination ($R^2$) and Normalized Mean Squared Error (NMSE) on the solution vector, with Mean Squared Error (MSE) as the training loss. The metrics are defined as:
\begin{equation}
\begin{gathered}
\text{MSE} = \tfrac{1}{n}\|\hat{x} - x^\star\|_2^2, \quad
\text{NMSE} = \tfrac{\|\hat{x} - x^\star\|_2^2}{\|x^\star\|_2^2}, \\[6pt]
R^2 = 1 - \tfrac{\|\hat{x} - x^\star\|_2^2}{\|\bar{x}^\star - x^\star\|_2^2},
\end{gathered}
\end{equation}
where \(\hat{x}\) is the predicted solution, \(x^\star\) is the ground-truth optimal solution, and \(\bar{x}^\star\) is the mean of all ground-truth solutions in the test set. In addition, we report downstream feasibility via constraint violation and optimality via objective sub-optimality in Appendix \ref{sec:additional-qp-results}. We report $n=5,m=3$ in the main text and defer other settings to the supplementary materials.

\subsection{Experimental Results}

\begin{wrapfigure}{r}{0.52\textwidth}
  \centering
  \vspace{-25pt} 
  \includegraphics[
    width=\linewidth,
    trim=6 6 6 6,
    clip
  ]{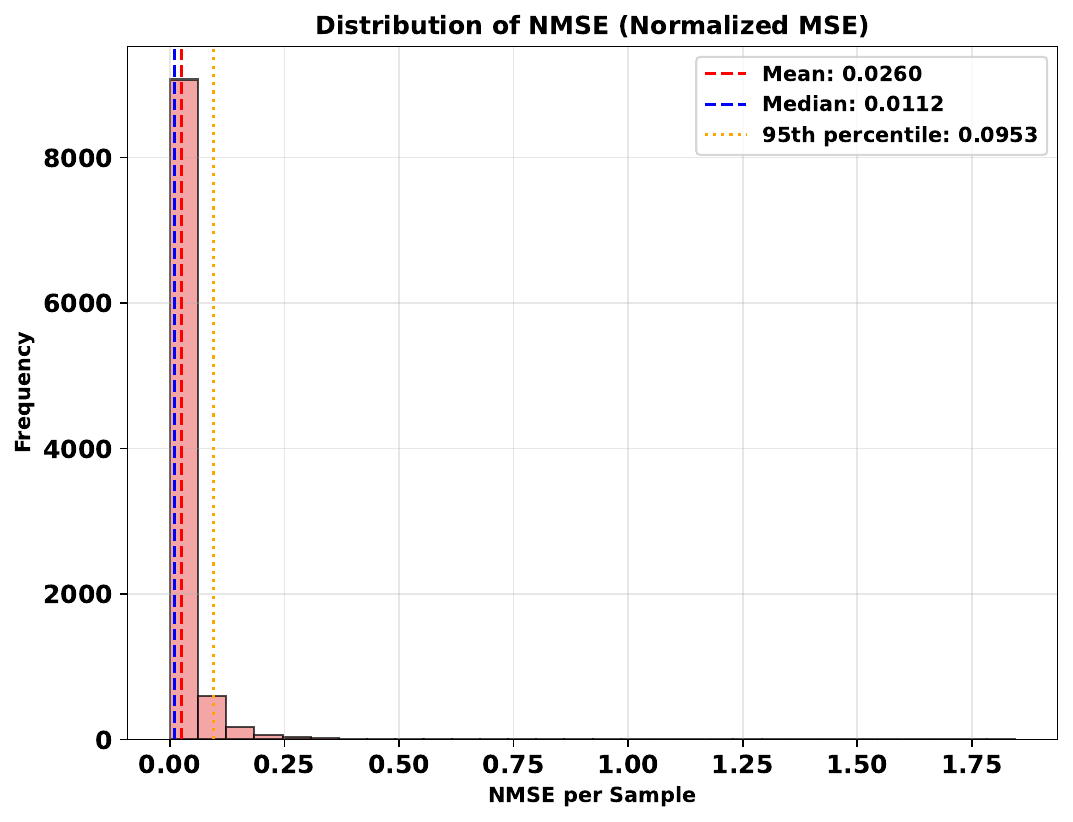}
\caption{\small
Test-set NMSE histogram for the LinearTransformer (8 layers, 2 heads).
Dashed lines mark the mean and median; the dotted line marks the 95\textsuperscript{th} percentile.
Errors are concentrated near zero (median $\approx$ 0.011; 95\% $\approx$ 0.095).
}

  \label{fig:qp_lin_best_distros}
\end{wrapfigure}

\paragraph{Function-approximation accuracy.} Our experimental results demonstrate that transformers are capable of learning to solve QP problems. The best-performing LinearTransformer (8 layers, 2 heads) and SoftmaxTransformer achieved an $R^2$ of \textbf{0.974} and \textbf{0.935}, respectively. Notably, there is a performance advantage for the LinearTransformer architecture. As summarized in Table~\ref{tab:qp_r2_n5m3}, the LinearTransformer consistently outperforms the SoftmaxTransformer across the hyperparameter grid.  This finding suggests that linear attention mechanisms are particularly well-suited for the numerical reasoning required in QP function approximation. The superior accuracy of the LinearTransformer is further detailed in Figure~\ref{fig:qp_lin_best_distros}, which shows the tight distribution of its Normalized Mean Squared Error (NMSE) on the test set. Additional neural QP solver results for higher-dimensional settings are provided in Appendix \ref{sec:additional-qp-results}.


\paragraph{Robustness to distribution shift.}
\label{sec:distribution_shift}
To investigate the robustness of our models, we conduct experiments where the training and test distributions differ in the condition number of the QP problems. The condition number, $\kappa(A) = \lambda_{\max}(A)/\lambda_{\min}(A)$, controls the numerical difficulty of the optimization, with higher values corresponding to more ill-conditioned problems that are harder to solve. We generate this distribution shift by controlling the eigenvalue spread of the $A$ matrix using a scaling parameter $\kappa$, which creates exponentially more difficult problems for higher $\kappa$ values. To isolate this effect, we normalize the Frobenius norm of $A$ to match the baseline distribution.

\begin{table*}[t]

\caption{$R^2$ on the QP function-approximation task ($n{=}5$, $m{=}3$) across layers and attention heads.}
  \centering
  
  \label{tab:qp_r2_n5m3}
  \small
  \setlength{\tabcolsep}{6pt}
  \renewcommand{\arraystretch}{1.15}
  \begin{tabular}{c | cccc | cccc}
    \noalign{\hrule height 1.1pt}
    & \multicolumn{4}{c|}{\textbf{SoftmaxTransformer}} & \multicolumn{4}{c}{\textbf{LinearTransformer}} \\
    \cline{2-5}\cline{6-9}
    \textbf{Number of Layers} & \multicolumn{4}{c|}{\textbf{Number of Heads}} & \multicolumn{4}{c}{\textbf{Number of Heads}} \\
    \cline{2-5}\cline{6-9}
     & \textbf{1\vphantom{gy}} & \textbf{2\vphantom{gy}} & \textbf{4\vphantom{gy}} & \textbf{8\vphantom{gy}}
     & \textbf{1\vphantom{gy}} & \textbf{2\vphantom{gy}} & \textbf{4\vphantom{gy}} & \textbf{8\vphantom{gy}} \\
    \hline
    \textbf{1}  & 0.275 & 0.334 & 0.468 & 0.635 & 0.331 & 0.401 & 0.561 & 0.632 \\
    \textbf{2}  & 0.578 & 0.658 & 0.711 & 0.779 & 0.693 & 0.817 & 0.865 & 0.881 \\
    \textbf{4}  & 0.863 & 0.894 & 0.898 & 0.917 & 0.934 & 0.958 & 0.962 & 0.965 \\
    \textbf{8}  & 0.923 & 0.934 & 0.934 & 0.934 & 0.972 & \textbf{0.974} & 0.970 & 0.971 \\
    \textbf{16} & 0.929 & \textbf{0.935} & 0.926 & 0.931 & 0.960 & 0.964 & 0.969 & 0.967 \\
    \noalign{\hrule height 1.1pt} 
  \end{tabular}
\end{table*}

\begin{table}[htbp]
  \centering
  \caption{R$^2$ Performance under distribution shift in condition number $\kappa$.}
  \label{tab:qp_distribution_shift}
  \small
  \renewcommand{\arraystretch}{1.15}
  \begin{tabular}{c|cc|cc}
    \noalign{\hrule height 1.1pt}
    \multirow{2}{*}{\textbf{$\kappa$ Range}} 
      & \multicolumn{2}{c|}{\textbf{LinearTransformer}} 
      & \multicolumn{2}{c}{\textbf{SoftmaxTransformer}} \\
    \cline{2-3}\cline{4-5}
      & \textbf{R$^2$} & \textbf{Drop} & \textbf{R$^2$} & \textbf{Drop} \\
    \hline
    \textbf{No $\kappa$} & 0.973832 & --      & 0.935439 & --      \\
    \hline
    \textbf{1.2--2} & 0.971755 & -0.21\% & 0.930962 & -0.48\% \\
    \hline
    \textbf{2--5}   & 0.968110 & -0.59\% & 0.924842 & -1.13\% \\
    \hline
    \textbf{5--10}  & 0.960139 & -1.41\% & 0.909774 & -2.74\% \\
    \hline
    \textbf{10--20} & 0.951526 & -2.29\% & 0.888270 & -5.04\% \\
    \noalign{\hrule height 1.1pt}
  \end{tabular}
\end{table}

The results of this analysis are shown in Table~\ref{tab:qp_distribution_shift}. We trained the best-performing models (LinearTransformer with 8 layers/2 heads and SoftmaxTransformer with 16 layers/2 heads) on the baseline distribution and tested them on shifted distributions with varying $\kappa$ ranges. Both architectures demonstrate considerable robustness, indicating they can generalize reasonably well to more challenging numerical conditions. Again, the LinearTransformer consistently shows superior performance and a more graceful degradation across all $\kappa$ ranges as the problem difficulty increases.

The strong performance and robustness of our transformer-based QP solvers motivate their integration into foundation models for sequential decision-making. We demonstrate that QP solving can be seamlessly incorporated through supervised fine-tuning by simply adding relevant tokens to the input sequence. The following section showcases this integration through a portfolio optimization case study, where we embed QP solving within a time series foundation model to enable end-to-end decision-making under complex structural constraints.

\section{Decision Making with Covariance-Aware Transformers}
\label{sec:app_portfolio}
In many high-stakes domains like finance and operations, downstream decision rules are solved via quadratic programs with structural constraints. A natural approach is to first forecast the relevant parameters and then solve the QP explicitly—the \emph{predict-then-optimize} (PtO) pipeline. However, this two-stage approach can be suboptimal as noisy forecasts can lead to overly aggressive decisions.

\vspace*{-5pt}

\begin{proposition}[Informal: PtO is suboptimal under estimation noise]
\label{prop:pto_suboptimal}
In a two-asset mean--variance problem with a Gaussian prior on the mean gap and noisy observations, the PtO plug-in rule is strictly dominated in expected utility by the Bayes-optimal policy, which shrinks the estimate by a factor $\rho < 1$ before optimizing.
\end{proposition}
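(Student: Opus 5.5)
We first fix a concrete two-asset model that makes the informal statement precise, and then compute both expected utilities in closed form. Parametrize the portfolio as $w=(\tfrac12+s,\tfrac12-s)$, so the only free decision is the tilt $s\in\mathbb{R}$. The mean--variance utility becomes $U(s)=s\,\mu-\tfrac{\alpha}{2}\sigma^2 s^2$ plus a constant independent of $s$. Here $\mu$ is the mean gap between the two assets, $\alpha>0$ is the risk aversion, and $\sigma^2>0$ is the variance of the return difference; take equal variances and known covariance to fix ideas. Place a Gaussian prior $\mu\sim\mathcal N(0,\tau^2)$ and observe $\hat\mu=\mu+\varepsilon$ with $\varepsilon\sim\mathcal N(0,\nu^2)$ independent of $\mu$. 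A decision rule is any measurable map $s(\hat\mu)$, and it is scored by $\mathbb E[U(s(\hat\mu))]$ over the joint law of $(\mu,\hat\mu)$.

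Next, derive the Bayes-optimal rule. Because $U$ is linear in $\mu$, conditioning on $\hat\mu$ gives $\mathbb E[U(s)\mid\hat\mu]=s\,\mathbb E[\mu\mid\hat\mu]-\tfrac{\alpha}{2}\sigma^2 s^2$. This is maximized pointwise at $s^{\rm B}=\mathbb E[\mu\mid\hat\mu]/(\alpha\sigma^2)$. Standard Gaussian conjugacy gives $\mathbb E[\mu\mid\hat\mu]=\rho\hat\mu$ with $\rho=\tau^2/(\tau^2+\nu^2)$, and $\rho<1$ whenever $\nu^2>0$. So the Bayes policy is exactly ``shrink by $\rho$, then optimize.'' The PtO rule instead plugs in $\hat\mu$ directly, giving $s^{\rm PtO}=\hat\mu/(\alpha\sigma^2)$.

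To compare the two, consider the whole family $s_c=c\hat\mu/(\alpha\sigma^2)$. Using $\mathbb E[\mu\hat\mu]=\tau^2$ and $\mathbb E[\hat\mu^2]=\tau^2+\nu^2$,
\[
\mathbb E[U(s_c)]=\frac{1}{\alpha\sigma^2}\Big(c\,\tau^2-\tfrac{c^2}{2}(\tau^2+\nu^2)\Big).
\]
This is a strictly concave quadratic in $c$, maximized at $c=\rho$. The gap between the Bayes and PtO choices is
\[
\mathbb E[U(s_\rho)]-\mathbb E[U(s_1)]=\frac{(\tau^2+\nu^2)(1-\rho)^2}{2\alpha\sigma^2}>0 .
\]
Strict dominance over all measurable rules, not just linear ones, follows from the pointwise optimality of $s^{\rm B}$. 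The inequality is strict because $s^{\rm PtO}\neq s^{\rm B}$ on a set of positive probability, and the conditional objective is strictly concave in $s$.

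The main obstacle is choosing the modelling conventions so the statement is genuinely nontrivial. One issue is whether PtO uses the raw observation $\hat\mu$ rather than a posterior mean, since otherwise PtO already coincides with the Bayes rule. Another is handling a simplex constraint $s\in[-\tfrac12,\tfrac12]$, if one insists on long-only portfolios. For the latter, I would first state and prove the unconstrained version above. I would then note that under the constraint, the Bayes rule is the clipped $\rho\hat\mu/(\alpha\sigma^2)$, by pointwise maximization of a concave quadratic on an interval. Strict dominance still holds because the two rules differ on the positive-probability event where $\hat\mu$ is small enough that neither is clipped.
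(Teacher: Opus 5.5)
Your proposal is correct and follows the same route as the paper's appendix argument. Gaussian conjugacy gives $\mathbb{E}[\mu\mid\hat\mu]=\rho\hat\mu$ with $\rho=\tau^2/(\tau^2+\nu^2)<1$; the Bayes rule maximizes the conditional expected utility pointwise (clipped to the box $w\in[0,1]$ in the paper's model, which is your constrained case); and the tower property lifts this to unconditional expected utility.

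You in fact go slightly further than the paper, whose written argument only derives the weak inequality $\ge$. What actually justifies the word ``strictly'' in the statement is your strict-concavity argument on the positive-probability event where $\hat\mu\neq 0$ and neither rule is clipped. The explicit gap $(\tau^2+\nu^2)(1-\rho)^2/(2\alpha\sigma^2)$ you compute for the unconstrained case supports this as well.
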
\vspace*{-5pt}

While the optimal shrinkage depends on unknown noise parameters, an end-to-end model can learn analogous attenuation directly from the decision loss. Providing covariance information helps the model calibrate how aggressively to react to forecast signals. We formalize this result in Appendix~\ref{app:predict_opt_suboptimal}.

We validate this insight through a portfolio optimization case study, where allocations must satisfy non-negativity, budget, and $\ell_1$ rebalancing constraints. This setting highlights a practical way to integrate second-order information into foundation-model backbones for decision-making in an end-to-end fashion.

\subsection{Problem Formulation and Data}
\label{sec:prob_data}
We consider a portfolio optimization problem with $m$ assets over $T$ time periods. At time $t$, we observe the vector of asset returns $\mathbf{r}_t\in\mathbb{R}^m$ (where $r_{t,i}$ represents the return of asset $i$ at time $t$) and must choose portfolio weights $\mathbf{s}_t\in\mathbb{R}^m$ (where $s_{t,i}$ represents the fraction of wealth allocated to asset $i$ at time $t$) subject to the constraints:

\begin{equation}
\label{eq:portfolio_constraints}
\begin{aligned}
\mathbf{s}_t &\succeq \mathbf{0} && \text{(no short selling for simplicity)},\\
\mathbf{1}^\top \mathbf{s}_t &= 1 && \text{(allocation weights sum to 1)},\\
\|\mathbf{s}_t-\mathbf{s}_{t-1}\|_1 &\leq \gamma && \text{(transaction cost limit)}.
\end{aligned}
\end{equation}

The rebalancing constraint $\|\mathbf{s}_t-\mathbf{s}_{t-1}\|_1 \leq \gamma$ restricts the total absolute change in portfolio weights, where $\gamma$ represents the maximum allowed rebalancing budget. This constraint proxies transaction costs and prevents excessive portfolio turnover.

To establish a target for supervised training, we compute an \textit{oracle allocation}, $\mathbf{s}_t^\star$. This oracle represents the ideal decision at each step, calculated by solving a mean-variance optimization problem assuming perfect foresight of the next period's returns, $\mathbf{r}_t$. The objective is to maximize risk-adjusted returns:
\[
\mathbf{s}_t^\star=\arg\max_{\mathbf{s}_t}\ \mathbf{s}_t^\top \mathbf{r}_t-\lambda\,\mathbf{s}_t^\top \hat{\boldsymbol{\Sigma}}_t \mathbf{s}_t
\quad \text{s.t. }  (\ref{eq:portfolio_constraints}) \text{ holds. }
\]  
Here, $\hat{\boldsymbol{\Sigma}}_t$ is the empirical covariance matrix estimated from historical returns, and $\lambda=0.1$ is the risk aversion parameter that balances expected return against portfolio variance. The oracle solution $\mathbf{s}_t^\star$ represents the optimal allocation given perfect knowledge of future returns $\mathbf{r}_t$.

\paragraph{Data generation.} We synthesize multivariate time series using the Linear Model of Coregionalization (LMC) as described in \cite{geostatistics_LMC}, $\mathbf{X}_i(t)=\sum_{j=1}^{r} w_{ij} f_j(t)$, where $f_j(t)$ are latent Gaussian processes with varied kernels, $w_{ij}$ are mixing weights, and $r$ sets the covariance rank. We then process these time series by scaling with $\epsilon=0.01$ and clipping to the range $[-0.3, 0.8]$ to simulate asset returns. We generate separate training and evaluation datasets: 300 training series and 30 evaluation series, each with $m=16$ assets and $T=1024$ time steps. After obtaining the series of returns, we finalize the input sequences by concatenating historical returns and allocations (a total of \(2m\) channels), enabling the model to reason jointly over past market dynamics and portfolio positions. We use sequence length $L=96$, forecast horizon $H=96$, and a stride of 1 for training data augmentation.

\subsection{Experimental Setup}

\paragraph{Proposed architecture.} There exist many transformer-based multivariate time series models. Among them, \emph{TimePFN}—adapted from one of the strongest baselines, PatchTST \citep{Nie2023PatchTST}—serves as the backbone of our approach. \emph{Time2Decide} augments TimePFN with \emph{Covariance-Augmented Tokenization} (CAT): it computes the empirical covariance matrix over the normalized input window of 96 previous steps, maps each covariance row to a token via a multi-layer nonlinear projection, prepends the resulting \(m\) covariance tokens to the standard patch embeddings, applies a unified positional encoding to the entire sequence, and processes them jointly through self-attention. After encoding, we remove the covariance tokens and pass only the temporal tokens to the output projectors to generate predictions for both returns and allocations.

\paragraph{Baselines.} We evaluate our proposed model, \textit{Time2Decide}, against a suite of baselines:  
(i) Oracle (perfect-foresight allocation, the upper bound in performance for any strategy);  
(ii) Predict-then-Optimize (TimePFN-predicted returns are fed into a conventional mathematical QP solver to compute allocations under the constraints);  
(iii) Pretrained (TimePFN used purely as a neural forecaster without covariance tokens);  
(iv) SFT (supervised fine-tuned TimePFN variant);  
(v) Uniform (equal-weight allocation).

\paragraph{Training and Inference.}
SFT and Time2Decide are trained with a combined loss that balances return and allocation prediction:
\[
\mathcal{L}=\lambda_r\|\hat{\mathbf{r}}-\mathbf{r}\|_2^2+\|\hat{\mathbf{s}}-\mathbf{s}^\star\|_2^2,
\]
where $\lambda_r$ is a weighting hyperparameter. We find the optimal $\lambda_r$ for each model via a hyperparameter sweep.

In inference, to ensure all portfolio constraints are satisfied, the model's raw output $\hat{\mathbf{s}}$ is projected into the feasible set by solving the following QP:
\begin{equation}
\begin{gathered}
\mathbf{s}_{t} = \arg\min_{\mathbf{x}} \|\mathbf{x}-\hat{\mathbf{s}}\|_2^2, \\
\text{s.t. } \mathbf{x}\succeq 0,\ \mathbf{1}^\top \mathbf{x}=1,\ 
             \|\mathbf{x}-\mathbf{s}_{t-1}\|_1 \leq \gamma
\end{gathered}
\end{equation}

\paragraph{Evaluation.}
We use a training set of 300 multivariate return series and an evaluation set of 30 series. All experiments use a risk coefficient of $\lambda=0.1$. We evaluate each strategy across a range of rebalancing budgets $\gamma\in\{0.5, 0.75, 1.0, 1.25, 1.5, 1.75, 2.0\}$. Performance is measured by the Mean Squared Error (MSE) between the model's predicted allocations and the oracle's allocations. Crucially, the MSE is computed on the raw model predictions \textit{before} the feasibility projection, providing a stringent standard for evaluation. More experimental details can be found in Appendix \ref{app:decision-exp-details}. 

\subsection{Experimental Results}
Table~\ref{tab:time2decide_mse_summary} summarizes the MSE of each strategy across rebalancing budgets $\gamma$. We report the best MSE for each learning-based method after selecting the optimal combined-loss weights $\lambda_r$  from our sweep, and provide additional results and hyperparameter details in Appendix~\ref{app:additional_portfolio}. Overall, Time2Decide consistently improves over SFT-TimePFN and Pretrained-TimePFN, highlighting the benefit of incorporating covariance information and learning decisions end-to-end. Predict-then-Optimize (PtO), which relies on exact conventional solvers, achieves the lowest errors at small rebalancing budgets (e.g., $0.0245$ at $\gamma=0.5$ and $0.0294$ at $\gamma=0.75$). 

\begin{table*}[htbp]
\caption{MSE performance for each strategy across different rebalancing budgets $\gamma$ (clean data). Lower is better; bold indicates the best method for each $\gamma$.}
\centering
\small
\setlength{\tabcolsep}{6pt}
\renewcommand{\arraystretch}{1.15}
\begin{tabular}{c|ccccccc}
\noalign{\hrule height 1.1pt}
\textbf{Strategy} & \textbf{$\gamma=0.5$} & \textbf{$\gamma=0.75$} & \textbf{$\gamma=1.0$} & \textbf{$\gamma=1.25$} & \textbf{$\gamma=1.5$} & \textbf{$\gamma=1.75$} & \textbf{$\gamma=2.0$} \\
\hline
\textbf{Uniform} & 0.0425 & 0.0444 & 0.0479 & 0.0485 & 0.0505 & 0.0539 & 0.0586 \\
\textbf{Pretrained} & 0.0447 & 0.0458 & 0.0500 & 0.0514 & 0.0523 & 0.0567 & 0.0604 \\
\textbf{SFT} & 0.0408 & 0.0411 & 0.0413 & 0.0446 & 0.0445 & 0.0515 & 0.0565 \\
\textbf{Predict-then-Optimize} & \textbf{0.0245} & \textbf{0.0294} & \textbf{0.0345} & \textbf{0.0376} & 0.0413 & 0.0472 & 0.0553 \\
\textbf{Time2Decide} & 0.0323 & 0.0346 & 0.0386 & 0.0390 & \textbf{0.0410} & \textbf{0.0442} & \textbf{0.0488} \\
\noalign{\hrule height 1.1pt}
\end{tabular}
\label{tab:time2decide_mse_summary}
\end{table*}

A natural interpretation is that for small $\gamma$, the turnover constraint tightens the feasible set and effectively clips how much the portfolio can change from one step to the next; in this conservative regime, even imperfect return estimates translate to limited action variation, making the plug-in PtO baseline particularly strong. On the other hand, as $\gamma$ increases, the feasible set loosens, leaving more room for estimation noise to induce larger portfolio changes, so calibration becomes more important. This aligns with the mechanism highlighted in Appendix~\ref{app:predict_opt_suboptimal}: when decisions are sensitive to noisy inputs, an optimal policy benefits from attenuating overconfident reactions, whereas a plug-in rule corresponds to a fixed, no-attenuation response. Consistent with this view, Time2Decide shows superior performance at larger budgets, surpassing PtO and achieving the best results at higher $\gamma$ values (e.g., $0.0442$ at $\gamma=1.75$ and $0.0488$ at $\gamma=2.0$).

\paragraph{Under Noise.}
We additionally evaluate performance in a \emph{noisy-environment} setting. Concretely, we inject Gaussian noise into the return series (scaling by $0.01$, adding $\mathcal{N}(0,0.01)$, and clipping), and then recompute all downstream quantities using the corrupted data. This includes re-estimating any covariance inputs and recomputing the oracle allocations from the noisy returns. We then report MSE between each strategy's allocations and the corresponding \emph{noisy-data oracle}. As shown in Table~\ref{tab:time2decide_mse_summary_noisy}, Time2Decide dominates most strategies across budgets, while PtO degrades as $\gamma$ increases. This confirms that end-to-end learning effectively mitigates input noise, aligning with the theory in Appendix \ref{app:predict_opt_suboptimal}.

\begin{table*}[htbp]
\caption{MSE performance in a noisy-environment setting with oracle recomputed from noisy returns across rebalancing budgets $\gamma$. Lower is better; bold indicates the best method for each $\gamma$.}
\centering
\small
\setlength{\tabcolsep}{6pt}
\renewcommand{\arraystretch}{1.15}
\begin{tabular}{c|ccccccc}
\noalign{\hrule height 1.1pt}
\textbf{Strategy} & \textbf{$\gamma=0.5$} & \textbf{$\gamma=0.75$} & \textbf{$\gamma=1.0$} & \textbf{$\gamma=1.25$} & \textbf{$\gamma=1.5$} & \textbf{$\gamma=1.75$} & \textbf{$\gamma=2.0$} \\
\hline
\textbf{Uniform} & 0.0336 & 0.0356 & 0.0403 & 0.0415 & 0.0449 & 0.0506 & 0.0585 \\
\textbf{Pretrained} & 0.0409 & 0.0428 & 0.0485 & 0.0479 & 0.0529 & 0.0582 & 0.0664 \\
\textbf{SFT} & 0.0324 & 0.0347 & 0.0408 & 0.0420 & 0.0434 & 0.0496 & 0.0562 \\
\textbf{Predict-then-Optimize} & \textbf{0.0224} & 0.0300 & 0.0407 & 0.0446 & 0.0522 & 0.0633 & 0.0781 \\
\textbf{Time2Decide} & 0.0232 & \textbf{0.0245} & \textbf{0.0347} & \textbf{0.0332} & \textbf{0.0344} & \textbf{0.0394} & \textbf{0.0476} \\
\noalign{\hrule height 1.1pt}
\end{tabular}
\label{tab:time2decide_mse_summary_noisy}
\end{table*}

\vspace{-5pt}
\section{Conclusion}

In this paper, we presented a theoretical and empirical study of how transformer architectures can be explicitly constructed to solve classes of quadratic programs. By carefully designing attention heads and feed-forward modules, we showed that single transformer blocks can emulate gradient descent, primal–dual iterations, and proximal updates for unconstrained, linearly constrained, and sparse QPs. These constructions demonstrate that transformers can effectively replicate the iterative structures of classical optimization algorithms.

On the experimental side, we verified that transformers can learn to solve QPs when trained on tokenized problem instances, with linear attention variants consistently achieving higher accuracy. Extending beyond the original task, we showed that incorporating covariance information as tokens enables end-to-end decision making under structural constraints. Our results highlight that embedding second-order problem data into the transformer input leads to meaningful improvements over baselines. Overall, our results connect classical optimization methods with transformers, showing that transformers can efficiently emulate iterative solvers. Future directions include extending to broader optimization tasks, integrating with other foundation models or reinforcement learning for decision making, and studying the theoretical boundaries of such algorithmic emulation.

\newpage

\bibliography{additional}
\clearpage

\appendix
\thispagestyle{empty}

\onecolumn
\section{Proofs of Main Results from Section 3}

\subsection{Full Proof of the Construction for Unconstrained QP}
\label{app:U}

\begin{proof}[Construction]
At iteration \(k\), we assemble the token matrix
\[
Z_k=\big[\tilde a_1^\top;\ldots;\tilde a_n^\top;\ \tilde b^\top;\ \tilde x_k^\top\big]\in\R^{(n+2)\times (2n+1)},
\]
where
\[
\tilde a_i^\top=[a_i^\top\ \ e_i^\top\ \ 0],\qquad
\tilde b^\top=[0_n^\top\ \ b^\top\ \ 1],\qquad
\tilde x_k^\top=[x_k^\top\ \ 0_n^\top\ \ 1].
\]
Here \(a_i^\top\) is the \(i\)-th row of \(A\) and \(\{e_i\}_{i=1}^n\) are standard basis vectors.

We split each token into \emph{content} vs.\ \emph{ID} using
\[
E=[I_n\ 0_{n\times(n+1)}]\in\R^{n\times(2n+1)},\qquad
\mathbf R=[0_{n\times n}\ I_n\ 0_{n\times 1}]\in\R^{n\times(2n+1)},
\]
and we expose a trailing constant via \(S=[0_{1\times (2n)}\ 1]\).

We use a single linear-attention head with
\[
W_Q=[E^\top\ S^\top],\qquad
W_K=[E^\top\ S^\top],\qquad
W_V=\mathbf R^\top .
\]
Then the \(x\)-token query is \(q=\tilde x_k W_Q=[x_k;1]\).
For \(A\)-rows, \(\tilde a_i W_K=[a_i;0]\) and \(\tilde a_i W_V=e_i\);
for the \(b\)-token, \(\tilde b W_K=[0;1]\) and \(\tilde b W_V=b\).
Evaluating the head at the \(x\)-token,
\[
o=\sum_{i=1}^n \langle[x_k;1],[a_i;0]\rangle e_i + \langle[x_k;1],[0;1]\rangle b
= \sum_{i=1}^n (a_i^\top x_k)\,e_i + b = Ax_k+b.
\]
Applying a residual on the \(x\)-row with post-map \(-\gamma I_n\) yields
\[
x_{k+1}=x_k-\gamma(Ax_k+b).
\]
\end{proof}

\subsection{Full Proof of the Construction for Linearly Constrained QP}
\label{app:LC}

\begin{proof}[Construction]
We add a dual token \(\lambda_k\in\mathbb{R}^m_{\ge 0}\). As in the unconstrained case, we split tokens into \emph{content} and \emph{ID} parts and embed every token to a common width \(p:=2n+m\) (zero-padded), then append a constant slot to reach \(p':=p+1\).
Let
\[
S=\big[0_{1\times n}\ 0_{1\times n}\ 0_{1\times m}\ 1\big]\in\mathbb{R}^{1\times p'} .
\]
Define fixed selectors
\[
\begin{aligned}
E_1 &= \big[I_n\ 0_{n\times n}\ 0_{n\times m}\ 0\big]\in\mathbb{R}^{n\times p'}, &
\mathbf R_1 &= \big[0_{n\times n}\ I_n\ 0_{n\times m}\ 0\big]\in\mathbb{R}^{n\times p'},\\
E_2 &= \big[0_{m\times n}\ 0_{m\times n}\ I_m\ 0\big]\in\mathbb{R}^{m\times p'}, &
\mathbf R_2 &= \big[0_{m\times n}\ 0_{m\times n}\ I_m\ 0\big]\in\mathbb{R}^{m\times p'}.
\end{aligned}
\]
Tokens are
\[
\begin{aligned}
\hat a_i^\top &= \big[a_i^\top\ \ e_i^{(n)\top}\ \ 0_m^\top\ \ b_i\big], &
\hat c_i^\top &= \big[c_i^\top\ \ 0_n^\top\ \ e_i^{(m)\top}\ \ {-}d_i\big],\\
\hat x_k^\top &= \big[x_k^\top\ \ 0_n^\top\ \ 0_m^\top\ \ 1\big], &
\hat\lambda_k^\top &= \big[0_n^\top\ \ 0_n^\top\ \ \lambda_k^\top\ \ 1\big].
\end{aligned}
\]

We use three fixed (linear) attention heads with \(W_Q,W_K,W_V\) chosen as below.

\paragraph{Head H\!A{+}b (evaluated at \(x_k\)).}
Queries/keys/values:
\[
q=\hat x_k\!\begin{bmatrix}E_1^\top\\[0pt] S^\top\end{bmatrix}=[x_k;1],\quad
k_i=\hat a_i\!\begin{bmatrix}E_1^\top\\[0pt] S^\top\end{bmatrix}=[a_i;b_i],\quad
v_i=\hat a_i\,\mathbf R_1^\top=e_i^{(n)}.
\]
Output:
\[
o^{Ax{+}b}=\sum_{i=1}^n \langle[x_k;1],[a_i;b_i]\rangle\,e_i^{(n)}=Ax_k+b\in\mathbb{R}^n.
\]

\paragraph{Head H\!C\(^\top\) (evaluated at \(\lambda_k\)).}
\[
q=\hat\lambda_k E_2^\top=\lambda_k,\quad
k_i=\hat c_i\,\mathbf R_2^\top=e_i^{(m)},\quad
v_i=\hat c_i E_1^\top=c_i,
\]
so \(o^{C^\top\lambda}=\sum_{i=1}^m(\lambda_k)_i c_i=C^\top\lambda_k\in\mathbb{R}^n\).

\paragraph{Head H\!C{-}d (evaluated at \(x\)).}
\[
q=\hat x\!\begin{bmatrix}E_1^\top\\[0pt] S^\top\end{bmatrix}=[x;1],\quad
k_i=\hat c_i\!\begin{bmatrix}E_1^\top\\[0 pt] S^\top\end{bmatrix}=[c_i;-d_i],\quad
v_i=\hat c_i\,\mathbf R_2^\top=e_i^{(m)},
\]
so \(o^{Cx{-}d}=\sum_{i=1}^m \langle[x;1],[c_i;-d_i]\rangle e_i^{(m)}=Cx-d\in\mathbb{R}^m\).

\paragraph{Two-block macro.}
We first compute the two head outputs at the \(x\)- and \(\lambda\)-tokens, \(o^{Ax{+}b}=Ax_k+b\) and \(o^{C^\top\lambda}=C^\top\lambda_k\), concatenate them, and apply the linear map \(W_O^{(1)}=[-\gamma I_n\;\;-\gamma I_n]\) with a residual on the \(x\)-row. This yields
\[
x_{k+1}=x_k-\gamma\big(Ax_k+b+C^\top\lambda_k\big).
\]

For the second block, using the updated \(x_{k+1}\), we evaluate the head \(o^{Cx{-}d}=C x_{k+1}-d\), scale it with \(W_O^{(2)}=\eta I_m\), and perform a token-wise ReLU to enforce non-negativity:
\[
\lambda_{k+1}=\big[\lambda_k+\eta\,(C x_{k+1}-d)\big]_+.
\]

Together, these two blocks implement exactly the Arrow–Hurwicz (projected primal–dual) step of Proposition~\ref{prop:LC}.

\end{proof}

\subsection{Full Proof of the Construction for \texorpdfstring{\(\ell_1\)}{L1}-Regularized QP}
\label{app:L1-reg}

\begin{proof}[Construction]
We compute \(y_k=x_k-\gamma(Ax_k+b)\) with the (U) head and implement soft-thresholding with a fixed two-layer width-\(2n\) ReLU FFN with weights 
\[
W_1=\begin{bmatrix}I_n\\[-2pt]-I_n\end{bmatrix}\!\in\R^{2n\times n},\qquad
W_2=\begin{bmatrix}I_n & -I_n\end{bmatrix}\!\in\R^{n\times 2n},\qquad
h=W_1 y_k-\theta\begin{bmatrix}\mathbf 1_n\\ \mathbf 1_n\end{bmatrix},
\]
Let us write \(h=\begin{bmatrix}h^{+}\\ h^{-}\end{bmatrix}\) with
\[
h^{+}=y_k-\theta\mathbf 1,\qquad h^{-}=-y_k-\theta\mathbf 1.
\]
After the ReLU, we have
\[
r=ReLU(h)=\begin{bmatrix}ReLU(h^{+})\\ ReLU(h^{-})\end{bmatrix}
=\begin{bmatrix}(y_k-\theta\mathbf 1_n)_+\\ (-y_k-\theta\mathbf 1_n)_+\end{bmatrix}.
\]
Applying the second linear layer gives
\[
x_{k+1}=W_2 r
= (y_k-\theta\mathbf 1_n)_+ - (-y_k-\theta\mathbf 1_n)_+.
\]
Coordinatewise, for each \(i\in[n]\),
\[
x_{k+1,i}=\max\{y_{k,i}-\theta,0\}-\max\{-y_{k,i}-\theta,0\}.
\]
A short case analysis shows
\[
x_{k+1,i}=
\begin{cases}
y_{k,i}-\theta, & y_{k,i}>\theta,\\[2pt]
0, & |y_{k,i}|\le \theta,\\[2pt]
y_{k,i}+\theta, & y_{k,i}<-\theta,
\end{cases}
\quad\Longleftrightarrow\quad
x_{k+1,i}=\operatorname{sign}(y_{k,i})\,\big(|y_{k,i}|-\theta\big)_+ \;=\; \mathcal S_\theta(y_{k,i}).
\]
Thus, in vector form,
\[
x_{k+1}=W_2\,ReLU\!\Big(W_1 y_k-\theta\!\begin{bmatrix}\mathbf 1_n\\ \mathbf 1_n\end{bmatrix}\Big)
=\mathcal S_\theta(y_k)=\operatorname{prox}_{\gamma\lambda\|\cdot\|_1}(y_k),
\]
with \(\theta=\gamma\lambda\). This realizes the ISTA update we stated.
\end{proof}

\subsection{Full Proof of the Construction for \texorpdfstring{\(\ell_1\)}{L1}-Constrained QP}
\label{app:L1-Cons}

\begin{proof}[Construction]
We reuse the (U) head to compute \(y_k=x_k-\gamma(Ax_k+b)\). The fixed two-layer ReLU FFN from Proposition~\ref{prop:ista} implements soft-thresholding, so
\[
x_t=\mathcal S_{\theta_t}(y_k),\qquad
s_t=\|x_t\|_1.
\]
We drive the scalar threshold toward the \(\ell_1\)-budget by
\[
\theta_{t+1}=\theta_t+\eta\,\mathrm{ReLU}(s_t-B),\qquad \theta_0=0,\ \ 0<\eta\le\frac{1}{n}.
\]

Let \(s(\theta)=\|\mathcal S_\theta(y_k)\|_1=\sum_{i=1}^n(|y_{k,i}|-\theta)_+\) and \(r_t:=\mathrm{ReLU}(s_t-B)\ge0\).
On any interval where the active set \(A(\theta)=\{i:\ |y_{k,i}|>\theta\}\) is fixed with size \(m\ge1\),
\[
s(\theta)=\sum_{i\in A(\theta)}|y_{k,i}|-m\,\theta,
\quad\Rightarrow\quad
r_{t+1}=s(\theta_t+\eta r_t)-B\le (1-m\eta)\,r_t.
\]
Thus \(0<\eta\le 1/n\) yields \(r_{t+1}\in[0,r_t]\) and \(r_t\downarrow0\); at breakpoints \(s\) is nonincreasing, so monotonicity persists. Hence \(\theta_t\to\theta^\star\ge0\) with \(s(\theta^\star)=\min\{B,\|y_k\|_1\}\), and we return
\[
x_{k+1}=\mathcal S_{\theta^\star}(y_k).
\]
Finally, \(\mathcal S_{\theta^\star}(y_k)\) is the unique solution of
\(\min_x \tfrac12\|x-y_k\|_2^2\ \text{s.t.}\ \|x\|_1\le B\) by the KKT conditions
\(0\in x-y_k+\theta^\star\partial\|x\|_1\), \(\theta^\star\ge0\), \(\theta^\star(\|x\|_1-B)=0\).
Therefore, the loop computes the exact Euclidean projection onto the \(\ell_1\)-ball.
\end{proof}

\section{Computational Cost per Iteration}
\label{app:runtime}

Our constructions implement \emph{one solver iteration per transformer layer}. Therefore, it is natural to compare the cost of a single transformer step to the corresponding first-order iteration. We organize our analysis by problem class: unconstrained and $\ell_1$-penalized QPs (Subsection~\ref{app:runtime:unconstrained}), which depend only on the variable dimension $n$, and linearly constrained QPs (Subsection~\ref{app:runtime:linear}), which depend on both $n$ and the number of constraints $m$. We report the multiplicative overhead:
\[
\text{Overhead} \;=\; \frac{\text{Transformer}}{\text{Classical}}\,.
\]

\subsection{Per-Iteration Cost for (U), (R), and (C)}
\label{app:runtime:unconstrained}

These three problem classes---corresponding to Propositions~\ref{prop:unconstrained}, \ref{prop:ista}, and~\ref{prop:pgd}---involve only the quadratic matrix $A \in \mathbb{R}^{n \times n}$ and vectors in $\mathbb{R}^n$, with no explicit linear inequality constraints. The computational cost depends solely on the variable dimension $n$.

\paragraph{Asymptotic complexity.}
For dense QPs, both classical GD/ISTA and our transformer steps are dominated by the matrix--vector product $Ax$, which costs $O(n^2)$ time per iteration.
ISTA additionally applies soft-thresholding at $O(n)$ cost, negligible relative to $n^2$ in the dense regime.
For $\ell_1$-constrained PGD, the classical exact projection onto the $\ell_1$-ball runs in $O(n\log n)$ time, whereas our threshold-loop projection costs $O(Tn)$ for $T$ inner updates; the overall step remains dominated by the $O(n^2)$ product.

\paragraph{Wall-clock time per step.}
Table~\ref{tab:runtime_unconstrained} reports per-step timing for these three problem classes.

\begin{table}[htbp]
\centering
\small
\setlength{\tabcolsep}{3pt}
\renewcommand{\arraystretch}{1.05}
\caption{Per-step timing (ms) for unconstrained QP (U), $\ell_1$-regularized QP (R), and $\ell_1$-constrained QP (C). These problems depend only on the variable dimension $n$.}
\label{tab:runtime_unconstrained}
\begin{tabular}{@{}c|ccc|ccc|ccc@{}}
\toprule
& \multicolumn{3}{c|}{\textbf{Gradient Descent (U)}} 
& \multicolumn{3}{c|}{\textbf{ISTA (R)}} 
& \multicolumn{3}{c}{\textbf{Projected GD (C)}} \\
\cmidrule(lr){2-4}\cmidrule(lr){5-7}\cmidrule(lr){8-10}
$n$ 
& Class. & Transf. & Overhead 
& Class. & Transf. & Overhead
& Class. & Transf. & Overhead \\
\midrule
16  & 0.040 & 0.050 & 1.25$\times$ & 0.052 & 0.070 & 1.35$\times$ & 0.108 & 0.364 & 3.37$\times$ \\
32  & 0.286 & 0.310 & 1.08$\times$ & 0.305 & 0.341 & 1.12$\times$ & 0.416 & 0.696 & 1.67$\times$ \\
64  & 0.327 & 0.341 & 1.04$\times$ & 0.358 & 0.389 & 1.09$\times$ & 0.700 & 0.970 & 1.39$\times$ \\
128 & 0.595 & 0.642 & 1.08$\times$ & 0.658 & 0.719 & 1.09$\times$ & 1.452 & 1.653 & 1.14$\times$ \\
\bottomrule
\end{tabular}
\end{table}

\paragraph{Discussion.}
For Gradient Descent and ISTA, the transformer step achieves near iteration-level parity with overheads of $1.04$--$1.35\times$ across dimensions. 
Projected Gradient Descent exhibits larger overhead at small $n$ due to the $O(Tn)$ threshold-loop projection, but this decreases from $3.37\times$ at $n{=}16$ to $1.14\times$ at $n{=}128$ as the $O(n^2)$ matrix--vector product dominates.

\subsection{Per-Iteration Cost for (LC)}
\label{app:runtime:linear}

The linearly constrained problem class---corresponding to Proposition~\ref{prop:LC}---involves explicit linear inequality constraints $Cx \preceq d$ with constraint matrix $C \in \mathbb{R}^{m \times n}$. The Arrow--Hurwicz primal-dual method requires two sequential attention blocks per iteration: one for the primal update and one for the dual update. The computational cost depends on both the variable dimension $n$ and the number of constraints $m$.

\paragraph{Asymptotic complexity.}
Each Arrow--Hurwicz iteration requires forming $Ax$ at $O(n^2)$ cost, plus $Cx$ and $C^\top\lambda$ at $O(mn)$ cost each, giving $O(n^2 + mn)$ per step. The ReLU operation for dual feasibility adds $O(m)$ cost, which is negligible.

\paragraph{Wall-clock time per step.}
Table~\ref{tab:runtime_arrow_hurwicz} reports per-step timing for various $(n, m)$ configurations.

\begin{table}[htbp]
\centering
\small
\setlength{\tabcolsep}{5pt}
\renewcommand{\arraystretch}{1.05}
\caption{Per-step timing (ms) for linearly constrained QP (LC) using Arrow--Hurwicz iterations. This problem depends on both the variable dimension $n$ and the number of constraints $m$.}
\label{tab:runtime_arrow_hurwicz}
\begin{tabular}{@{}cc|ccc@{}}
\toprule
$n$ & $m$ & Classical (ms) & Transformer (ms) & Overhead \\
\midrule
16  & 8   & 0.136 & 0.169 & 1.25$\times$ \\
16  & 16  & 0.178 & 0.210 & 1.18$\times$ \\
32  & 8   & 0.543 & 0.580 & 1.07$\times$ \\
32  & 16  & 1.241 & 1.263 & 1.02$\times$ \\
32  & 32  & 1.316 & 1.336 & 1.02$\times$ \\
64  & 16  & 1.339 & 1.388 & 1.04$\times$ \\
64  & 32  & 1.448 & 1.432 & 0.99$\times$ \\
64  & 64  & 1.499 & 1.572 & 1.05$\times$ \\
128 & 32  & 2.282 & 2.712 & 1.19$\times$ \\
128 & 64  & 2.911 & 3.501 & 1.20$\times$ \\
\bottomrule
\end{tabular}
\end{table}

\paragraph{Discussion.} The Arrow--Hurwicz transformer construction achieves overheads ranging from $0.99\times$ to $1.25\times$ across all tested configurations. Notably, at $(n, m) = (64, 32)$, the transformer construction is \emph{faster} than the classical implementation, likely due to improved memory access patterns from the unified token representation $Z = [Q; C; c; d; x; \lambda]$.
As problem size increases, overheads remain modest ($1.19$--$1.20\times$ at $n{=}128$), demonstrating that the two-block attention structure required for primal-dual updates does not introduce significant computational burden beyond the single-block constructions.

\newpage
\section{Experimental Details} 

\label{sec:exp-details}
In this section, we provide comprehensive details regarding the setup of our experiments to ensure reproducibility. All experiments are implemented in PyTorch and executed on NVIDIA L40S GPUs.

\subsection{Neural QP Solver}
\label{app:neural-exp-details}

\subsubsection{QP Dataset Generation}
\label{app:qp-data-gen}
We generate synthetic QP instances of the form $\min_{x} \frac{1}{2}x^\top A x + b^\top x \text{ s.t. } Cx \preceq d$. The symmetric positive definite matrix $A \in \mathbb{R}^{n \times n}$ is constructed as $A = G G^\top + 0.1 I$, where entries of $G \in \mathbb{R}^{n \times n}$, as well as $b \in \mathbb{R}^n$ and constraint matrix $C \in \mathbb{R}^{m \times n}$, are sampled from a standard normal distribution $\mathcal{N}(0, 1)$. The bound vector $d \in \mathbb{R}^m$ is sampled from the uniform distribution $\mathcal{U}[1, 2]$. Ground truth solutions $x^*$ are computed using the OSQP solver accessed via CVXPY. For problem dimension configurations $(n, m) \in \{(5, 3), (7, 3), (7, 6), (10, 3), (10, 6), (10, 9)\}$, we generate a dataset consisting of $50,000$ training samples, $10,000$ validation samples, and $10,000$ test samples. For larger instances with configurations $(n, m) \in \{(15, 10), (20, 8), (20, 16)\}$, we increase the dataset size to $100,000$ training samples, $20,000$ validation samples, and $20,000$ test samples to ensure sufficient coverage of the higher-dimensional space.

\subsubsection{Model Architectures}
\label{app:model-arch}
We evaluate two architectures: \texttt{SoftmaxTransformer}, which uses standard scaled dot-product attention, and \texttt{LinearTransformer}, which employs the linear attention mechanism. Both models project the input problem data (tokenized rows of $A, C$ and vectors $b, d$) into a hidden embedding dimension of $d_{\text{model}} = 256$. We conduct a grid search over model depth $L \in \{1, 2, 4, 8, 16, 32\}$ and the number of attention heads $H \in \{1, 2, 4, 8\}$ to analyze the impact of model capacity. A dropout rate of $0.1$ is applied during training to prevent overfitting.

\subsubsection{Training}
\label{app:training}
All models are trained to minimize the Mean Squared Error (MSE) between the predicted $x$ and the ground truth optimal solution $x^*$. We use the AdamW optimizer with an initial learning rate of $1 \times 10^{-4}$ and a weight decay of $0.02$. We utilize a learning rate scheduler (\texttt{ReduceLROnPlateau}) that reduces the learning rate by a factor of $0.5$ if the validation loss does not improve for $5$ epochs, with a minimum learning rate of $10^{-6}$. The random seed is fixed at $42$ for all experiments to ensure reproducibility. Training proceeds for a maximum of $500$ epochs. To prevent overfitting, we employ early stopping with a patience of $30$ epochs, monitoring the validation loss.

\subsection{Decision Making with Covariance-Aware Transformers}
\label{app:decision-exp-details}

\subsubsection{Data Generation}
We utilize LMC to generate synthetic financial return data with dynamic covariance structures. The generative process uses the following specific hyperparameters to simulate market conditions: \texttt{dirichlet\_min=0.01}, \texttt{dirichlet\_max=0.75}, \texttt{weibull\_shape=6}, and \texttt{weibull\_scale=125}. Each generated time series has a length of $T=1024$. For our experiments, we construct training datasets of size $N_{\text{train}}=300$ and evaluation datasets of size $N_{\text{eval}}=30$. The input sequence length (lookback window) $L$ is set to 96.

\subsubsection{Inference}
During inference, we utilize a rolling window protocol across all strategies. At each time step $t$, the system accesses a historical context window to update the portfolio. The specific mechanism for computing the next allocation $\mathbf{s}_{t}$ depends on the strategy:

\begin{itemize}
    \item \textbf{(i) Oracle}: At time $t$, it uses the \textit{ground-truth} return $\mathbf{r}_{t}$ (perfect foresight) and the empirical covariance matrix $\hat{\mathbf{\Sigma}}_t$ estimated from the previous $L=96$ returns. It solves the constrained QP problem using the OSQP solver via the CVXPY interface:
    \begin{align*}
    \max_{\mathbf{s}_{t}} \quad & \mathbf{s}_{t}^\top \mathbf{r}_{t} - \lambda \mathbf{s}_{t}^\top \hat{\mathbf{\Sigma}}_t \mathbf{s}_{t} \\
    \text{s.t.} \quad & \mathbf{s}_{t} \in \Delta_m, \quad \|\mathbf{s}_{t} - \mathbf{s}_{t-1}\|_1 \leq \gamma
    \end{align*}
    where $\Delta_m$ is the standard simplex. The resulting $\mathbf{s}_{t}$ is utilized for the next step's turnover constraint.
    
    \item \textbf{(ii) Uniform}: A naive baseline where the portfolio is equally distributed across all assets at every step, i.e., $\mathbf{s}_{t, i} = \frac{1}{m}$.
    
    \item \textbf{(iii) Predict-then-Optimize}: First, a pretrained TimePFN model uses the history of actual returns $\mathbf{r}_{t-L:t-1}$ to forecast the expected return vector $\hat{\mathbf{r}}_{t}$. The empirical covariance matrix $\hat{\mathbf{\Sigma}}_t$ is estimated from the actual history $\mathbf{r}_{t-L:t-1}$. We then solve the same QP problem as the Oracle, but using the \textit{predicted} return $\hat{\mathbf{r}}_{t}$ instead of the ground truth.
    
    \item \textbf{(iv) End-to-End Neural Strategies (Time2Decide, Pretrained, SFT)}: These models generate allocations directly without solving a QP explicitly. At time $t$, the input window $H_t$ consists of past \textit{actual} returns $\mathbf{r}_{t-L:t-1}$ and past \textit{projected} allocations $\mathbf{s}_{t-L:t-1}$. The models output a raw prediction vector $\tilde{\mathbf{s}}_{t}$.
    To ensure validity, we apply a differentiable projection layer that maps $\tilde{\mathbf{s}}_{t}$ to the closest valid allocation $\hat{\mathbf{s}}_{t}$ satisfying the simplex and turnover constraints:
    \begin{align*}
    \min_{\hat{\mathbf{s}}_{t}} \quad & \| \hat{\mathbf{s}}_{t} - \tilde{\mathbf{s}}_{t} \|_2^2 \\
    \text{s.t.} \quad & \hat{\mathbf{s}}_{t} \in \Delta_m, \quad \| \hat{\mathbf{s}}_{t} - \mathbf{s}_{t-1} \|_1 \leq \gamma
    \end{align*}
    This projection is solved using an SLSQP solver. The window for the next time step $t+1$ is updated autoregressively: we discard the oldest observation and append the \textit{actual} realized return $\mathbf{r}_{t}$ and the \textit{valid} projected allocation $\hat{\mathbf{s}}_{t}$ to the history.

\end{itemize}

\subsubsection{Training}
We employ a Supervised Fine-Tuning (SFT) approach to train both the \textit{Time2Decide} model and the \textit{SFT} baseline. Both models are initialized from a pretrained TimePFN checkpoint. The training objective is a combined loss function that jointly minimizes the error in return forecasting and portfolio allocation:
\[
\mathcal{L} = \lambda_r \|\hat{\mathbf{r}} - \mathbf{r}\|_2^2 + \|\hat{\mathbf{s}} - \mathbf{s}^\star\|_2^2
\]
where $\hat{\mathbf{r}}$ and $\mathbf{r}$ denote the predicted and ground truth return vectors, $\hat{\mathbf{s}}$ and $\mathbf{s}^\star$ denote the predicted and Oracle (optimal) allocation vectors, and $\lambda_r$ is a hyperparameter that weights the return loss. We conduct a comprehensive hyperparameter sweep for $\lambda_r \in \{0, 10, 50, \dots, 320000\}$ to identify the optimal configuration for each model.
Optimization is performed using the Adam optimizer with a maximum learning rate of $1 \times 10^{-4}$ and a OneCycleLR scheduler with \texttt{pct\_start=0.3}. The models are trained for a maximum of 50 epochs with a batch size of 32, employing early stopping with a patience of 5 epochs to prevent overfitting. 

\subsubsection{Evaluation Metrics}
The metric for experimental evaluation is the Mean Squared Error (MSE) relative to the optimal Oracle strategy.

Crucially, to rigorously assess the models' learning capability, we compute the MSE using the \textit{raw} model predictions $\tilde{\mathbf{s}}$ (before the feasibility projection described in the Inference section) against the Oracle allocations:
\[
\text{MSE} = \frac{1}{T \cdot N_{\text{eval}}} \sum_{n=1}^{N_{\text{eval}}} \sum_{t=1}^{T} \| \tilde{\mathbf{s}}_{n,t} - \mathbf{s}^\star_{n,t} \|_2^2
\]

\section{Architectural Ablation: MLP, LSTM vs. Transformer}
\label{sec:ablation}

In our neural QP solver experiments, we include MLP and LSTM baselines designed specifically for QP inputs for comparison. The \textbf{MLP} is a deep residual network that flattens the input tokens into a single vector. It employs a 6-layer architecture with widths ranging from $6\times$ to $4\times$ the hidden dimension ($d=256$), using pre-activation LayerNorm, ReLU, dropout, and projection-based residual connections to ensure trainability. The \textbf{LSTM} baseline processes the QP tokens sequentially using a 4-layer stacked LSTM with the same hidden dimension. Crucially, the LSTM shares the exact same input embedding and output MLP head as the transformer models, ensuring that performance differences arise solely from the sequence modeling mechanism (recurrence vs. attention). To ensure a fair comparison, the parameter counts of both baselines match those of our largest transformer model. In the following, we present the $R^2$ results for the different models, where each problem dimension is evaluated using its best-performing transformer architecture.

\begin{table*}[htbp]
  \centering
  \caption{$R^2$ Comparison: Transformer architectures vs. MLP and LSTM baselines.}
  \label{tab:arch_ablation}
  \small
  \renewcommand{\arraystretch}{1.15}
  \begin{tabular}{cc|cccc}
    \noalign{\hrule height 1.1pt}
    \multicolumn{2}{c|}{\textbf{Problem Dimensions}} 
      & \multicolumn{4}{c}{\textbf{Model Architecture}} \\
    \cline{1-2}\cline{3-6}
      \textbf{$n$} & \textbf{$m$} & \textbf{LinearTransformer} & \textbf{SoftmaxTransformer} & \textbf{MLP} & \textbf{LSTM} \\
    \hline
    5 & 3   & 0.9738 & 0.9354 & 0.6715 & 0.7045 \\
    7 & 3   & 0.9600 & 0.8914 & 0.5271 & 0.4734 \\
    7 & 6   & 0.9355 & 0.8713 & 0.5071 & 0.4880 \\
    10 & 3  & 0.9054 & 0.8423 & 0.3566 & 0.2189 \\
    10 & 6  & 0.8921 & 0.8157 & 0.3642 & 0.2357 \\
    10 & 9  & 0.9062 & 0.8430 & 0.4043 & 0.2693 \\
    \noalign{\hrule height 1.1pt}
  \end{tabular}
\end{table*}

\newpage

\section{Additional Neural QP Solver Results} 
\label{sec:additional-qp-results}

We present more results for neural QP solver experiments across higher problem dimensions beyond the main text's focus on $n=5, m=3$ configurations. See Tables~\ref{tab:qp_r2_n7m3_appendix}, \ref{tab:qp_r2_n7m6_appendix}, \ref{tab:qp_r2_n10m3_appendix}, \ref{tab:qp_r2_n10m6_appendix}, \ref{tab:qp_r2_n10m9_appendix}, 
 \ref{tab:qp_r2_n15m10_appendix},  \ref{tab:qp_r2_n20m8_appendix}, and  \ref{tab:qp_r2_n20m16_appendix} across problem sizes $(n=7, m=3)$, $(n=7, m=6)$, $(n=10, m=3)$, $(n=10, m=6)$, $(n=10, m=9)$, $(n=15, m=10)$, $(n=20, m=8)$, and $(n=20, m=16)$  respectively.

These comprehensive results demonstrate that transformers are capable of solving unseen QP problems of higher dimensions. Again, LinearTransformer consistently outperforms SoftmaxTransformer across problem dimensions and hyperparameter combinations. The entries marked as NaN indicate that the Transformer architecture with the specified depth $l$ and hidden dimension $h$ was unable to learn the QP solution for the given higher-dimensional problem and the data size, resulting in a negative $R^2$.

\vspace{0.5cm}

\begin{table*}[ht]

  \caption{$R^2$ on the QP Function-Approximation Task ($n{=}7$, $m{=}3$).}
  \centering
  
  \label{tab:qp_r2_n7m3_appendix}
  \small
  \setlength{\tabcolsep}{6pt}
  \renewcommand{\arraystretch}{1.15}
  \begin{tabular}{c | cccc | cccc}
    \noalign{\hrule height 1.1pt}
    & \multicolumn{4}{c|}{\textbf{SoftmaxTransformer}} & \multicolumn{4}{c}{\textbf{LinearTransformer}} \\
    \cline{2-5}\cline{6-9}
    \textbf{Number of Layers} & \multicolumn{4}{c|}{\textbf{Number of Heads}} & \multicolumn{4}{c}{\textbf{Number of Heads}} \\
    \cline{2-5}\cline{6-9}
     & \textbf{1\vphantom{gy}} & \textbf{2\vphantom{gy}} & \textbf{4\vphantom{gy}} & \textbf{8\vphantom{gy}}
     & \textbf{1\vphantom{gy}} & \textbf{2\vphantom{gy}} & \textbf{4\vphantom{gy}} & \textbf{8\vphantom{gy}} \\
    \hline
    \textbf{1}  & 0.226 & 0.253 & 0.315 & 0.472 & 0.239 & 0.299 & 0.377 & 0.478 \\
    \textbf{2}  & 0.485 & 0.544 & 0.482 & 0.485 & 0.652 & 0.755 & 0.784 & 0.811 \\
    \textbf{4}  & 0.782 & 0.829 & 0.844 & 0.814 & 0.928 & 0.948 & 0.942 & 0.937 \\
    \textbf{8}  & 0.864 & 0.891 & 0.890 & 0.883 & \textbf{0.960} & 0.952 & 0.949 & 0.943 \\
    \textbf{16} & 0.871 & 0.878 & 0.876 & \textbf{0.891} & 0.907 & 0.927 & 0.937 & 0.937 \\
    \noalign{\hrule height 1.1pt} 
  \end{tabular}
\end{table*}

\vspace{0.5cm}

\begin{table*}[h]

  \caption{$R^2$ on the QP Function-Approximation Task ($n{=}7$, $m{=}6$).}
  \centering
  
  \label{tab:qp_r2_n7m6_appendix}
  \small
  \setlength{\tabcolsep}{6pt}
  \renewcommand{\arraystretch}{1.15}
  \begin{tabular}{c | cccc | cccc}
    \noalign{\hrule height 1.1pt}
    & \multicolumn{4}{c|}{\textbf{SoftmaxTransformer}} & \multicolumn{4}{c}{\textbf{LinearTransformer}} \\
    \cline{2-5}\cline{6-9}
    \textbf{Number of Layers} & \multicolumn{4}{c|}{\textbf{Number of Heads}} & \multicolumn{4}{c}{\textbf{Number of Heads}} \\
    \cline{2-5}\cline{6-9}
     & \textbf{1\vphantom{gy}} & \textbf{2\vphantom{gy}} & \textbf{4\vphantom{gy}} & \textbf{8\vphantom{gy}}
     & \textbf{1\vphantom{gy}} & \textbf{2\vphantom{gy}} & \textbf{4\vphantom{gy}} & \textbf{8\vphantom{gy}} \\
    \hline
    \textbf{1}  & 0.260 & 0.286 & 0.335 & 0.484 & 0.285 & 0.329 & 0.394 & 0.488 \\
    \textbf{2}  & 0.452 & 0.462 & 0.511 & 0.535 & 0.721 & 0.729 & 0.749 & 0.766 \\
    \textbf{4}  & 0.751 & 0.807 & 0.797 & 0.763 & 0.897 & 0.920 & 0.925 & 0.915 \\
    \textbf{8}  & 0.847 & 0.870 & 0.867 & \textbf{0.871} & 0.929 & 0.932 & \textbf{0.936} & 0.928 \\
    \textbf{16} & 0.863 & 0.866 & 0.865 & 0.869 & 0.914 & 0.913 & 0.922 & 0.927 \\
    \noalign{\hrule height 1.1pt} 
  \end{tabular}
\end{table*}

\vspace{0.5cm}

\newpage
\begin{table*}[h]

  \caption{$R^2$ on the QP Function-Approximation Task ($n{=}10$, $m{=}3$).}
  \centering
  
  \label{tab:qp_r2_n10m3_appendix}
  \small
  \setlength{\tabcolsep}{6pt}
  \renewcommand{\arraystretch}{1.15}
  \begin{tabular}{c | cccc | cccc}
    \noalign{\hrule height 1.1pt}
    & \multicolumn{4}{c|}{\textbf{SoftmaxTransformer}} & \multicolumn{4}{c}{\textbf{LinearTransformer}} \\
    \cline{2-5}\cline{6-9}
    \textbf{Number of Layers} & \multicolumn{4}{c|}{\textbf{Number of Heads}} & \multicolumn{4}{c}{\textbf{Number of Heads}} \\
    \cline{2-5}\cline{6-9}
     & \textbf{1\vphantom{gy}} & \textbf{2\vphantom{gy}} & \textbf{4\vphantom{gy}} & \textbf{8\vphantom{gy}}
     & \textbf{1\vphantom{gy}} & \textbf{2\vphantom{gy}} & \textbf{4\vphantom{gy}} & \textbf{8\vphantom{gy}} \\
    \hline
    \textbf{1}  & 0.186 & 0.195 & 0.213 & 0.242 & 0.191 & 0.221 & 0.261 & 0.298 \\
    \textbf{2}  & 0.336 & 0.361 & 0.357 & 0.357 & 0.645 & 0.674 & 0.669 & 0.656 \\
    \textbf{4}  & 0.666 & 0.657 & 0.670 & 0.661 & 0.867 & 0.893 & 0.896 & 0.879 \\
    \textbf{8}  & 0.785 & 0.813 & 0.842 & 0.799 & 0.900 & \textbf{0.905} & 0.893 & 0.899 \\
    \textbf{16} & 0.774 & 0.786 & 0.822 & \textbf{0.842} & 0.600 & 0.860 & 0.884 & 0.873 \\
    \noalign{\hrule height 1.1pt} 
  \end{tabular}
\end{table*}

\vspace{1cm}

\begin{table*}[h]

  \caption{$R^2$ on the QP Function-Approximation Task ($n{=}10$, $m{=}6$).}
  \centering
  
  \label{tab:qp_r2_n10m6_appendix}
  \small
  \setlength{\tabcolsep}{6pt}
  \renewcommand{\arraystretch}{1.15}
  \begin{tabular}{c | cccc | cccc}
    \noalign{\hrule height 1.1pt}
    & \multicolumn{4}{c|}{\textbf{SoftmaxTransformer}} & \multicolumn{4}{c}{\textbf{LinearTransformer}} \\
    \cline{2-5}\cline{6-9}
    \textbf{Number of Layers} & \multicolumn{4}{c|}{\textbf{Number of Heads}} & \multicolumn{4}{c}{\textbf{Number of Heads}} \\
    \cline{2-5}\cline{6-9}
     & \textbf{1\vphantom{gy}} & \textbf{2\vphantom{gy}} & \textbf{4\vphantom{gy}} & \textbf{8\vphantom{gy}}
     & \textbf{1\vphantom{gy}} & \textbf{2\vphantom{gy}} & \textbf{4\vphantom{gy}} & \textbf{8\vphantom{gy}} \\
    \hline
    \textbf{1}  & 0.207 & 0.215 & 0.230 & 0.258 & 0.212 & 0.242 & 0.270 & 0.320 \\
    \textbf{2}  & 0.341 & 0.342 & 0.363 & 0.364 & 0.554 & 0.665 & 0.667 & 0.641 \\
    \textbf{4}  & 0.603 & 0.663 & 0.651 & 0.642 & 0.816 & 0.868 & 0.852 & 0.856 \\
    \textbf{8}  & 0.792 & 0.799 & 0.786 & 0.796 & \textbf{0.892} & 0.889 & 0.869 & 0.873 \\
    \textbf{16} & 0.795 & 0.797 & 0.783 & \textbf{0.816} & 0.614 & 0.843 & 0.855 & 0.864 \\
    \noalign{\hrule height 1.1pt} 
  \end{tabular}
\end{table*}

\vspace{1cm}

\begin{table*}[h!]

  \caption{$R^2$ on the QP Function-Approximation Task ($n{=}10$, $m{=}9$).}
  \centering
  
  \label{tab:qp_r2_n10m9_appendix}
  \small
  \setlength{\tabcolsep}{6pt}
  \renewcommand{\arraystretch}{1.15}
  \begin{tabular}{c | cccc | cccc}
    \noalign{\hrule height 1.1pt}
    & \multicolumn{4}{c|}{\textbf{SoftmaxTransformer}} & \multicolumn{4}{c}{\textbf{LinearTransformer}} \\
    \cline{2-5}\cline{6-9}
    \textbf{Number of Layers} & \multicolumn{4}{c|}{\textbf{Number of Heads}} & \multicolumn{4}{c}{\textbf{Number of Heads}} \\
    \cline{2-5}\cline{6-9}
     & \textbf{1\vphantom{gy}} & \textbf{2\vphantom{gy}} & \textbf{4\vphantom{gy}} & \textbf{8\vphantom{gy}}
     & \textbf{1\vphantom{gy}} & \textbf{2\vphantom{gy}} & \textbf{4\vphantom{gy}} & \textbf{8\vphantom{gy}} \\
    \hline
    \textbf{1}  & 0.247 & 0.256 & 0.273 & 0.309 & 0.251 & 0.279 & 0.317 & 0.371 \\
    \textbf{2}  & 0.382 & 0.397 & 0.505 & 0.485 & 0.645 & 0.683 & 0.679 & 0.667 \\
    \textbf{4}  & 0.626 & 0.659 & 0.755 & 0.746 & 0.850 & 0.880 & 0.892 & 0.882 \\
    \textbf{8}  & 0.800 & 0.833 & 0.825 & 0.838 & 0.900 & \textbf{0.906} & 0.895 & 0.896 \\
    \textbf{16} & 0.828 & 0.826 & 0.824 & \textbf{0.843} & 0.762 & 0.864 & 0.890 & 0.900 \\
    \noalign{\hrule height 1.1pt} 
  \end{tabular}
\end{table*}

\newpage

\begin{table*}[h!]

  \caption{$R^2$ on the QP Function-Approximation Task ($n{=}15$, $m{=}10$).}
  \centering
  
  \label{tab:qp_r2_n15m10_appendix}
  \small
  \setlength{\tabcolsep}{6pt}
  \renewcommand{\arraystretch}{1.15}
  \begin{tabular}{c | cccc | cccc}
    \noalign{\hrule height 1.1pt}
    & \multicolumn{4}{c|}{\textbf{SoftmaxTransformer}} & \multicolumn{4}{c}{\textbf{LinearTransformer}} \\
    \cline{2-5}\cline{6-9}
    \textbf{Number of Layers} & \multicolumn{4}{c|}{\textbf{Number of Heads}} & \multicolumn{4}{c}{\textbf{Number of Heads}} \\
    \cline{2-5}\cline{6-9}
     & \textbf{1\vphantom{gy}} & \textbf{2\vphantom{gy}} & \textbf{4\vphantom{gy}} & \textbf{8\vphantom{gy}}
     & \textbf{1\vphantom{gy}} & \textbf{2\vphantom{gy}} & \textbf{4\vphantom{gy}} & \textbf{8\vphantom{gy}} \\
    \hline
    \textbf{1}  & 0.198 & 0.208 & 0.213 & 0.226 & 0.210 & 0.219 & 0.236 & 0.259 \\
    \textbf{2}  & 0.306 & 0.325 & 0.338 & 0.333 & 0.608 & 0.636 & 0.639 & 0.628 \\
    \textbf{4}  & 0.636 & 0.706 & 0.739 & 0.753 & 0.861 & 0.895 & 0.889 & 0.876 \\
    \textbf{8}  & 0.767 & 0.815 & \textbf{0.824} & 0.820 & 0.854 & 0.902 & \textbf{0.911} & 0.911 \\
    \noalign{\hrule height 1.1pt} 
  \end{tabular}
\end{table*}

\vspace{0.5cm}

\begin{table*}[h!]

  \caption{$R^2$ on the QP Function-Approximation Task ($n{=}20$, $m{=}8$).}
  \centering
  
  \label{tab:qp_r2_n20m8_appendix}
  \small
  \setlength{\tabcolsep}{6pt}
  \renewcommand{\arraystretch}{1.15}
  \begin{tabular}{c | cccc | cccc}
    \noalign{\hrule height 1.1pt}
    & \multicolumn{4}{c|}{\textbf{SoftmaxTransformer}} & \multicolumn{4}{c}{\textbf{LinearTransformer}} \\
    \cline{2-5}\cline{6-9}
    \textbf{Number of Layers} & \multicolumn{4}{c|}{\textbf{Number of Heads}} & \multicolumn{4}{c}{\textbf{Number of Heads}} \\
    \cline{2-5}\cline{6-9}
     & \textbf{1\vphantom{gy}} & \textbf{2\vphantom{gy}} & \textbf{4\vphantom{gy}} & \textbf{8\vphantom{gy}}
     & \textbf{1\vphantom{gy}} & \textbf{2\vphantom{gy}} & \textbf{4\vphantom{gy}} & \textbf{8\vphantom{gy}} \\
    \hline
    \textbf{1}  & 0.150 & 0.156 & 0.155 & 0.158 & 0.161 & 0.158 & 0.157 & 0.155 \\
    \textbf{2}  & 0.208 & 0.201 & 0.199 & 0.196 & 0.462 & 0.561 & 0.540 & 0.537 \\
    \textbf{4}  & 0.351 & 0.360 & 0.370 & 0.297 & 0.821 & 0.828 & 0.754 & 0.793 \\
    \textbf{8}  & 0.510 & \textbf{0.523} & 0.469 & 0.466 & NaN & \textbf{0.846} & 0.842 & 0.828 \\
    \noalign{\hrule height 1.1pt} 
  \end{tabular}
\end{table*}

\vspace{0.5cm}

\begin{table*}[h!]

  \caption{$R^2$ on the QP Function-Approximation Task ($n{=}20$, $m{=}16$).}
  \centering
  
  \label{tab:qp_r2_n20m16_appendix}
  \small
  \setlength{\tabcolsep}{6pt}
  \renewcommand{\arraystretch}{1.15}
  \begin{tabular}{c | cccc | cccc}
    \noalign{\hrule height 1.1pt}
    & \multicolumn{4}{c|}{\textbf{SoftmaxTransformer}} & \multicolumn{4}{c}{\textbf{LinearTransformer}} \\
    \cline{2-5}\cline{6-9}
    \textbf{Number of Layers} & \multicolumn{4}{c|}{\textbf{Number of Heads}} & \multicolumn{4}{c}{\textbf{Number of Heads}} \\
    \cline{2-5}\cline{6-9}
     & \textbf{1\vphantom{gy}} & \textbf{2\vphantom{gy}} & \textbf{4\vphantom{gy}} & \textbf{8\vphantom{gy}}
     & \textbf{1\vphantom{gy}} & \textbf{2\vphantom{gy}} & \textbf{4\vphantom{gy}} & \textbf{8\vphantom{gy}} \\
    \hline
    \textbf{1}  & 0.211 & 0.220 & 0.222 & 0.229 & 0.218 & 0.226 & 0.224 & 0.228 \\
    \textbf{2}  & 0.305 & 0.342 & 0.332 & 0.342 & 0.616 & 0.637 & 0.634 & 0.532 \\
    \textbf{4}  & 0.599 & 0.671 & 0.720 & 0.732 & \textbf{0.887} & 0.877 & 0.849 & 0.868 \\
    \textbf{8}  & 0.750 & \textbf{0.800} & 0.764 & 0.795 & NaN & NaN & 0.347 & 0.883 \\
    \noalign{\hrule height 1.1pt} 
  \end{tabular}
\end{table*}

\vspace{0.5cm}

\paragraph{Additional Error Distributions.}
Figures~\ref{fig:qp_lin_best_distros_n10m3} through \ref{fig:qp_softmax_best_distros_n10m9} present error distribution analyses for the best-performing configurations in higher-dimensional challenging QP problems ($n=10)$. These distributions complement performance metrics $R^2$ by providing detailed information on the error characteristics of our transformer-based solvers. The figures show NMSE distributions, constraint violation patterns, and objective error distributions, revealing how both LinearTransformer and SoftmaxTransformer architectures handle the increased complexity of larger-scale QP instances.

\clearpage
\begin{figure} [H]
  \centering
  \begin{subfigure}{0.32\textwidth}
    \includegraphics[width=\linewidth]{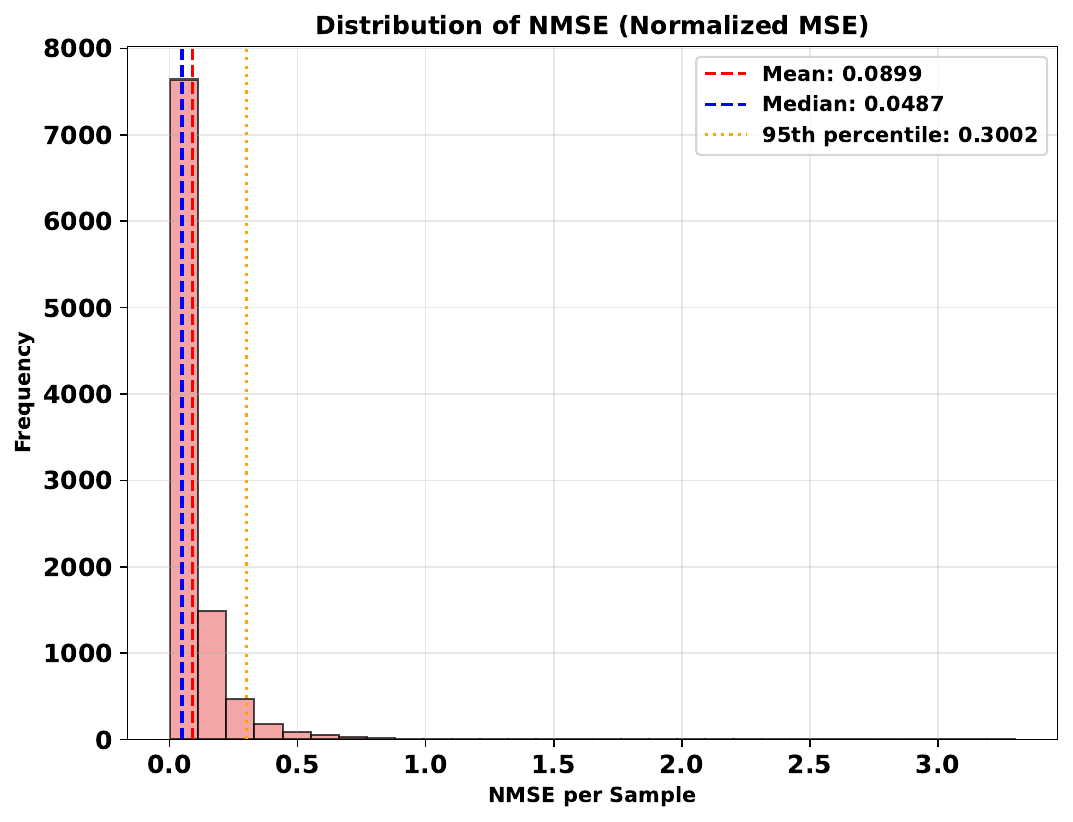}
    \caption{NMSE distribution}
  \end{subfigure}\hfill
  \begin{subfigure}{0.32\textwidth}
    \includegraphics[width=\linewidth]{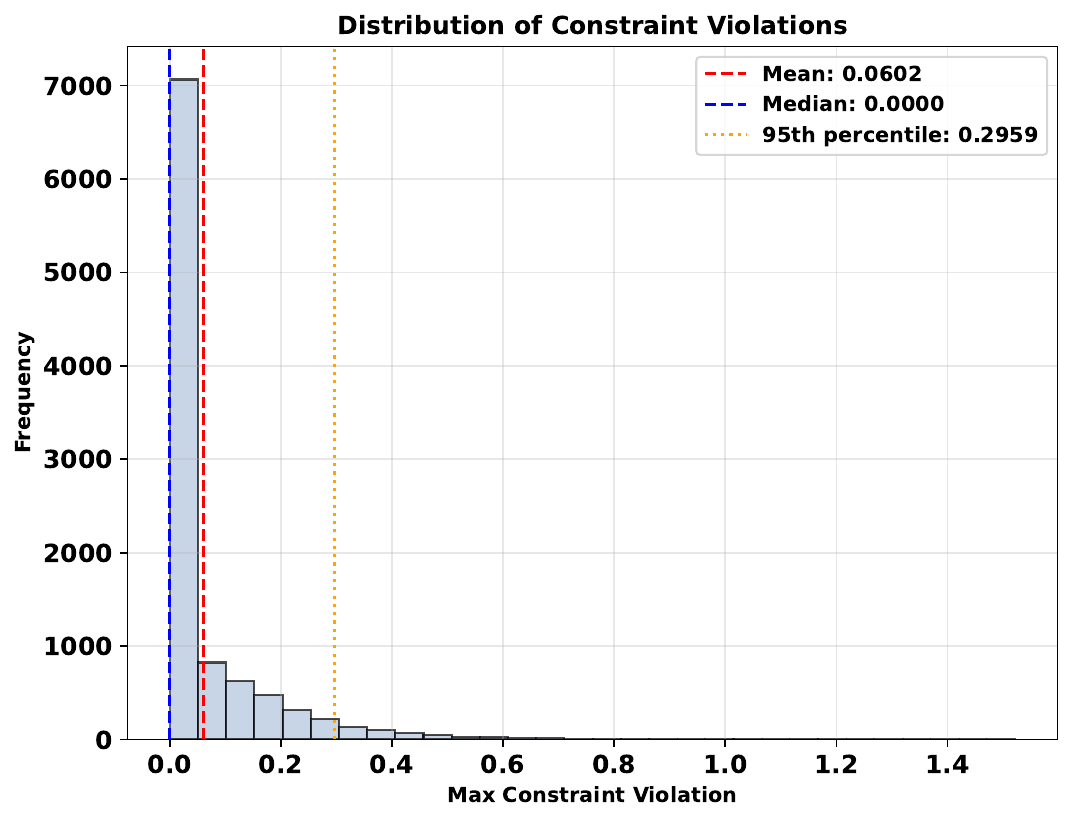}
    \caption{Constraint violation}
  \end{subfigure}\hfill
  \begin{subfigure}{0.32\textwidth}
    \includegraphics[width=\linewidth]{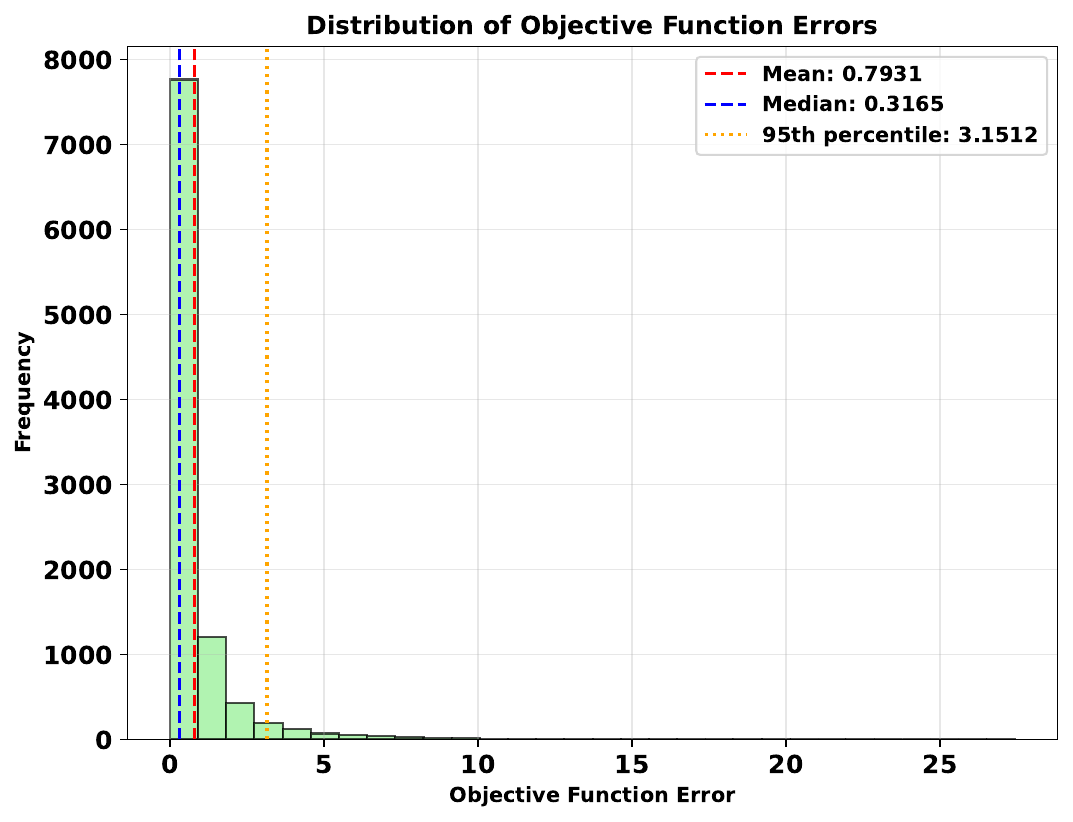}
    \caption{Objective error}
  \end{subfigure}
  \caption{Error distributions for the best LinearTransformer configuration ($n=10, m=3$, layers=8, heads=2).}
  \label{fig:qp_lin_best_distros_n10m3}
\end{figure}

\begin{figure} [H]
  \centering
  \begin{subfigure}{0.32\textwidth}
    \includegraphics[width=\linewidth]{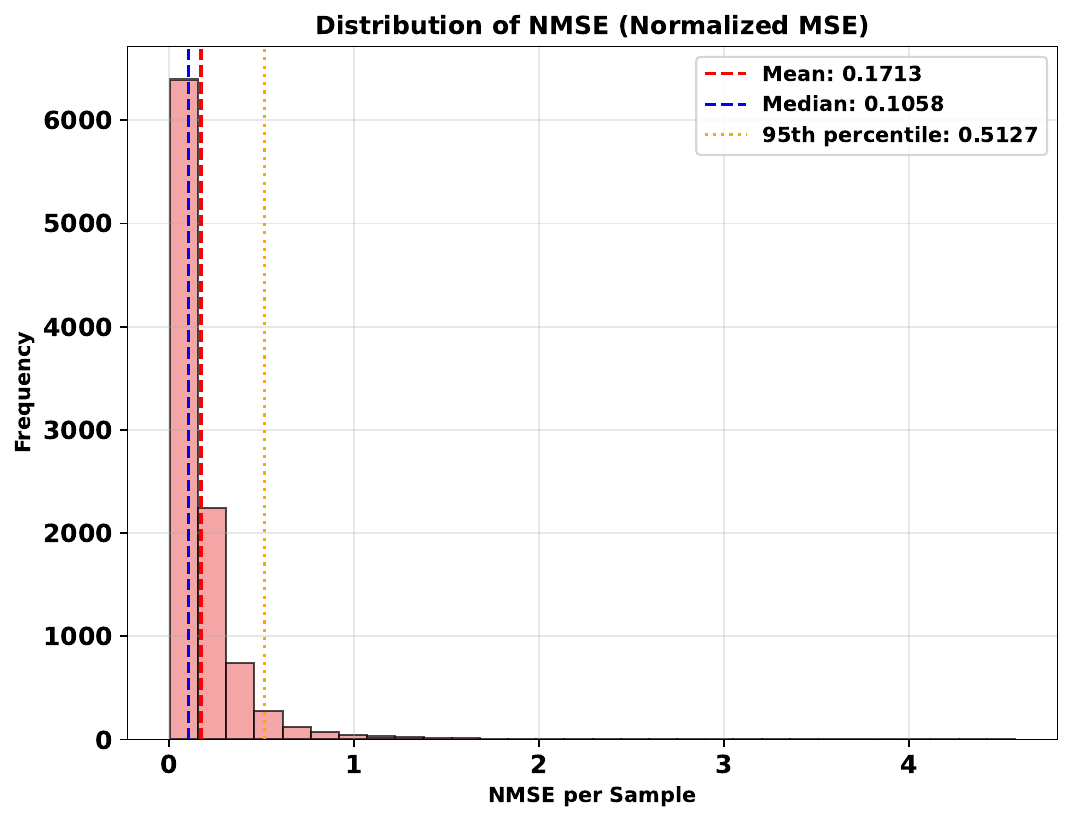}
    \caption{NMSE distribution}
  \end{subfigure}\hfill
  \begin{subfigure}{0.32\textwidth}
    \includegraphics[width=\linewidth]{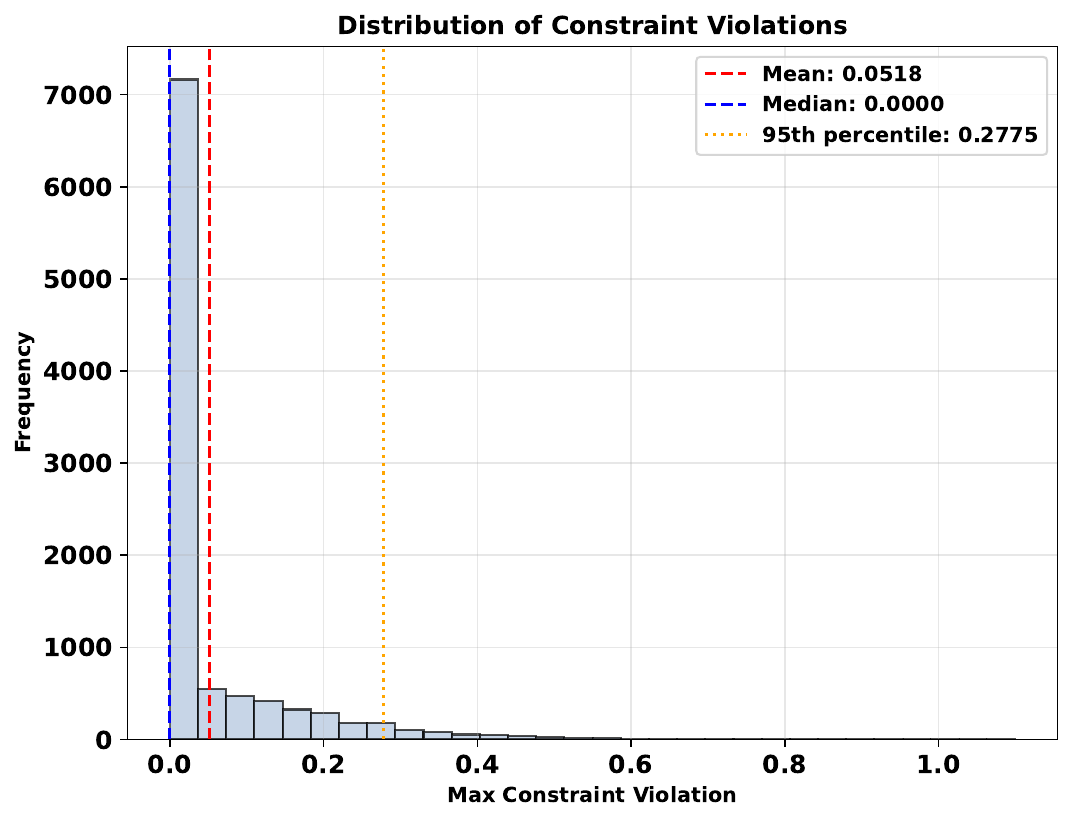}
    \caption{Constraint violation}
  \end{subfigure}\hfill
  \begin{subfigure}{0.32\textwidth}
    \includegraphics[width=\linewidth]{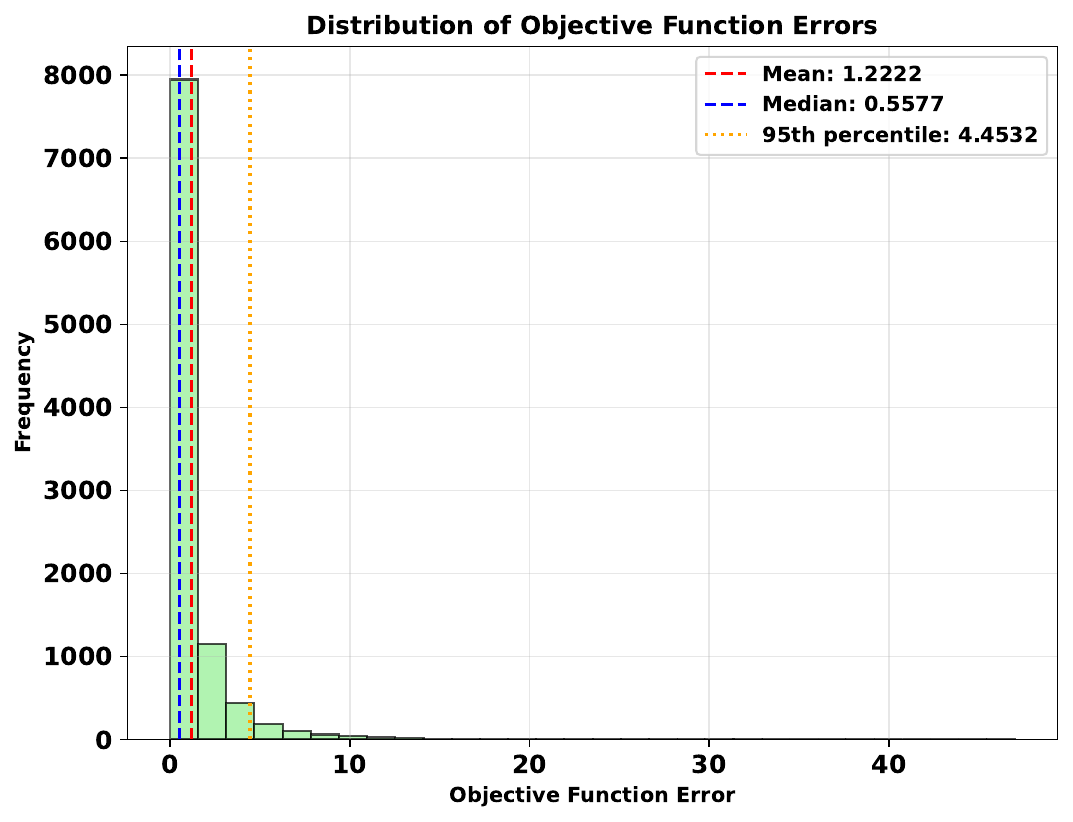}
    \caption{Objective error}
  \end{subfigure}
  \caption{Error distributions for the best SoftmaxTransformer configuration ($n=10, m=3$, layers=16, heads=8).}
  \label{fig:qp_softmax_best_distros_n10m3}
\end{figure}

\begin{figure} [H]
  \centering
  \begin{subfigure}{0.32\textwidth}
    \includegraphics[width=\linewidth]{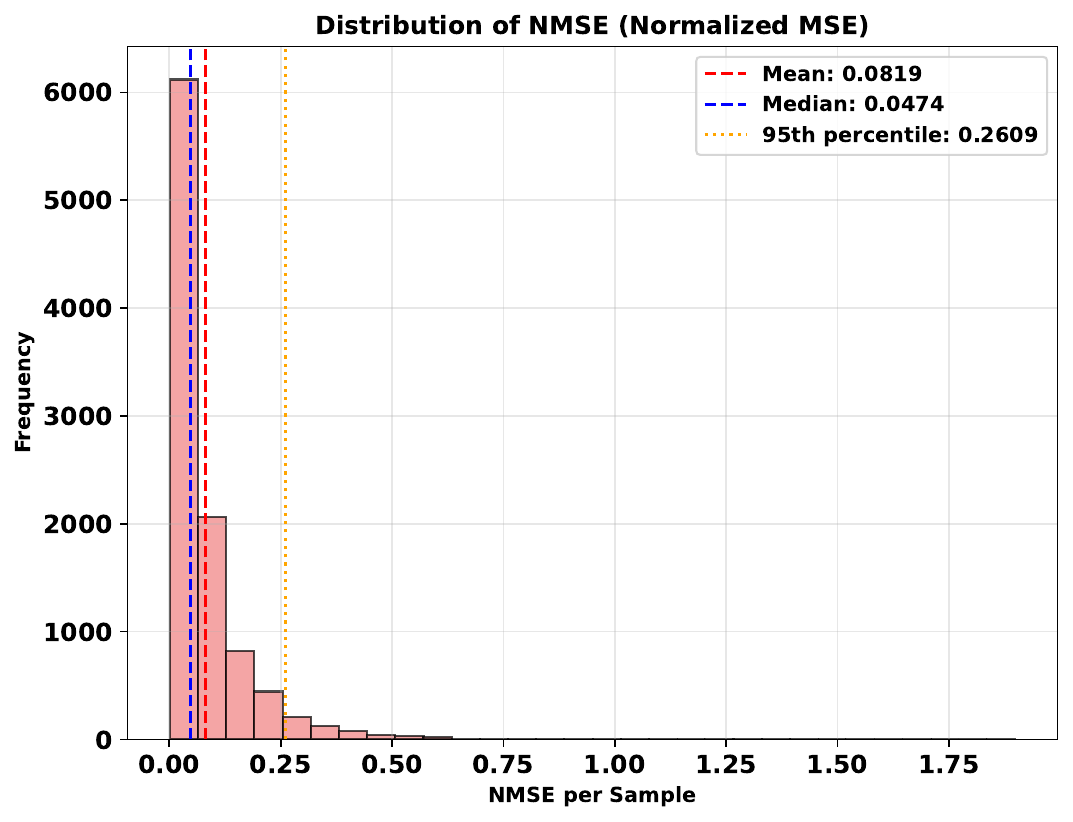}
    \caption{NMSE distribution}
  \end{subfigure}\hfill
  \begin{subfigure}{0.32\textwidth}
    \includegraphics[width=\linewidth]{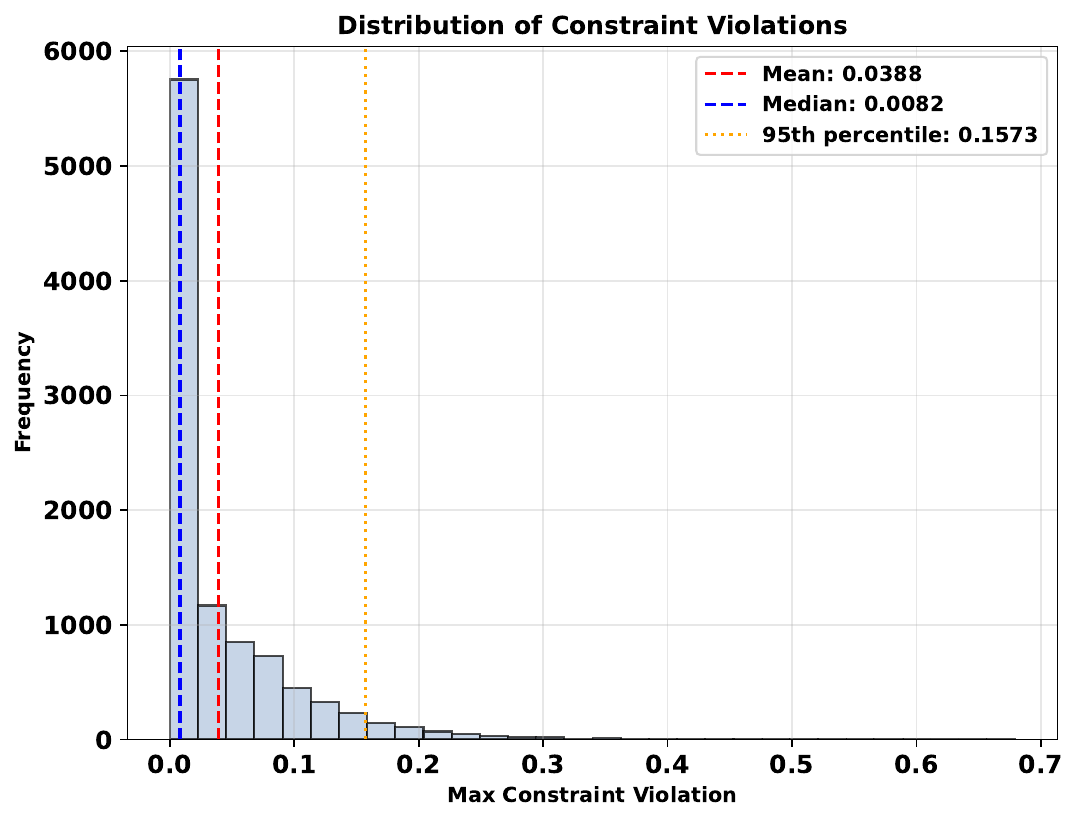}
    \caption{Constraint violation}
  \end{subfigure}\hfill
  \begin{subfigure}{0.32\textwidth}
    \includegraphics[width=\linewidth]{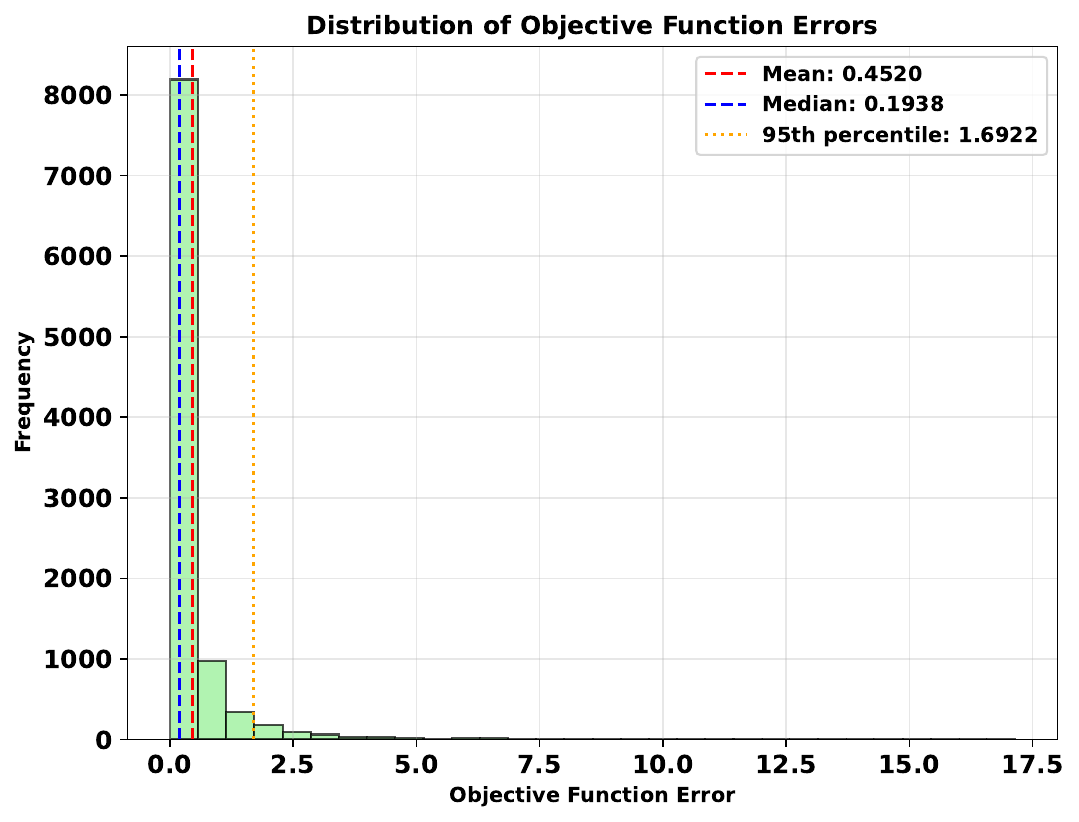}
    \caption{Objective error}
  \end{subfigure}
  \caption{Error distributions for the best LinearTransformer configuration ($n=10, m=6$, layers=8, heads=1).}
  \label{fig:qp_lin_best_distros_n10m6}
\end{figure}

\clearpage

\begin{figure} [H]
  \centering
  \begin{subfigure}{0.32\textwidth}
    \includegraphics[width=\linewidth]{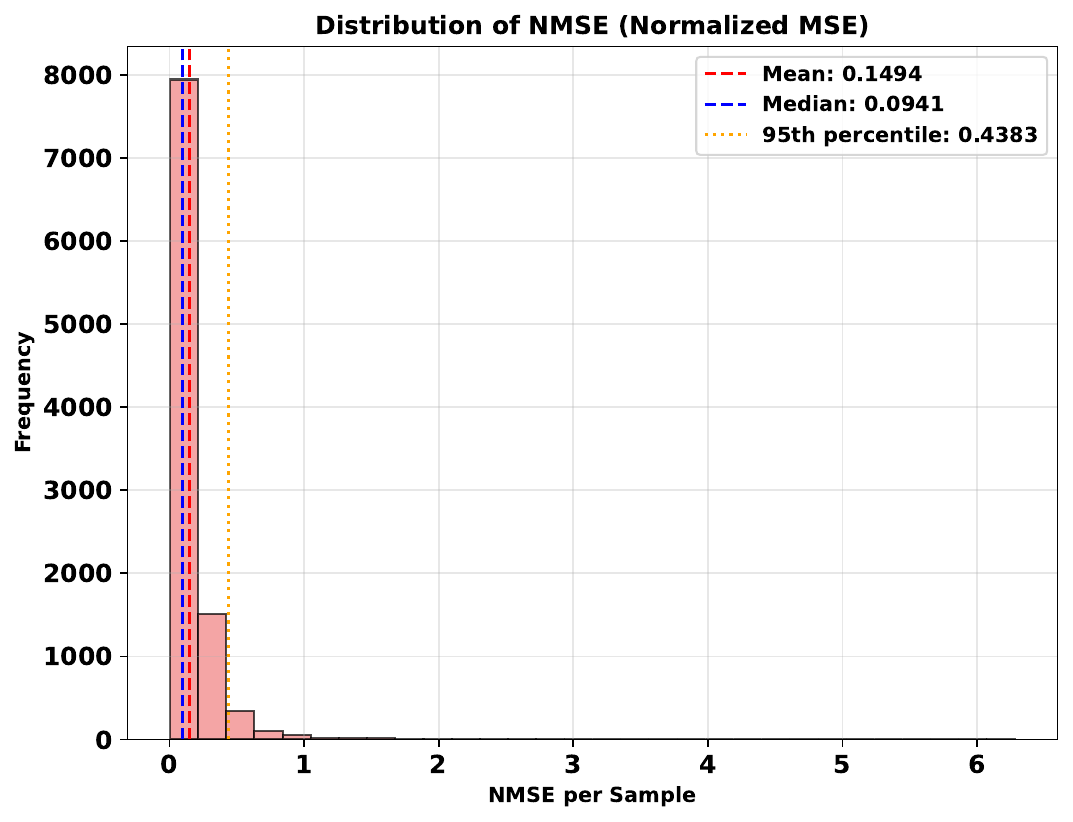}
    \caption{NMSE distribution}
  \end{subfigure}\hfill
  \begin{subfigure}{0.32\textwidth}
    \includegraphics[width=\linewidth]{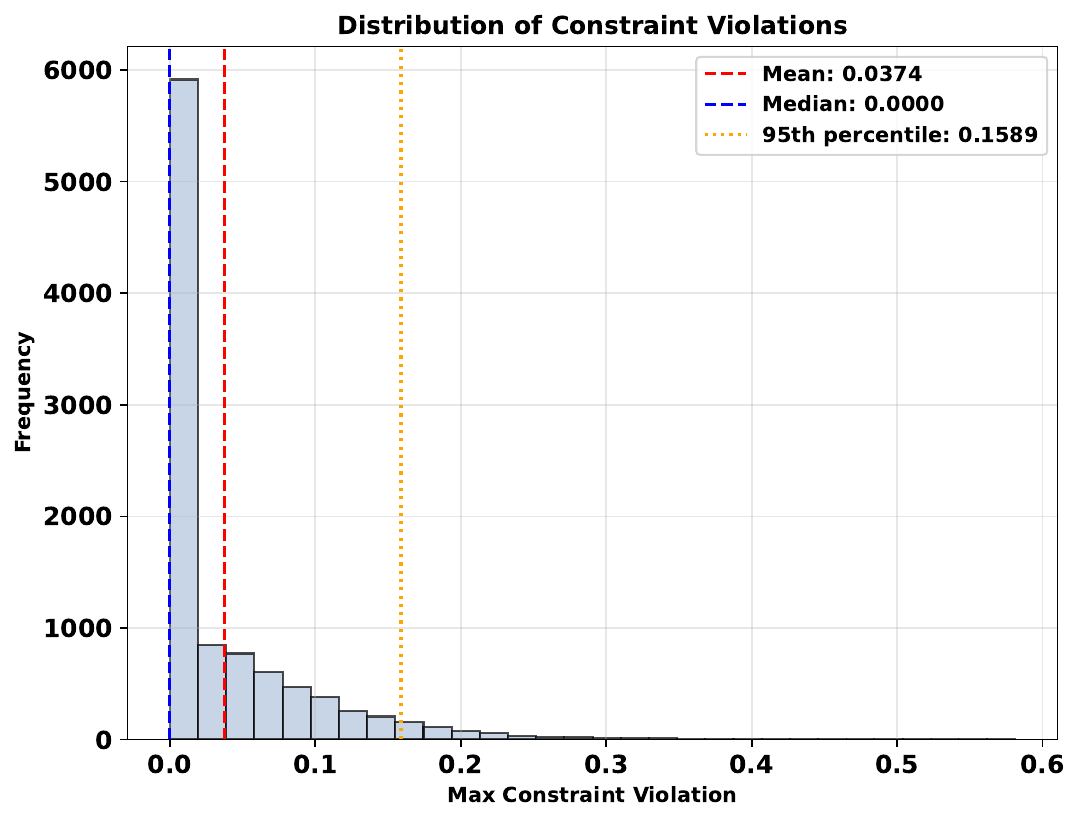}
    \caption{Constraint violation}
  \end{subfigure}\hfill
  \begin{subfigure}{0.32\textwidth}
    \includegraphics[width=\linewidth]{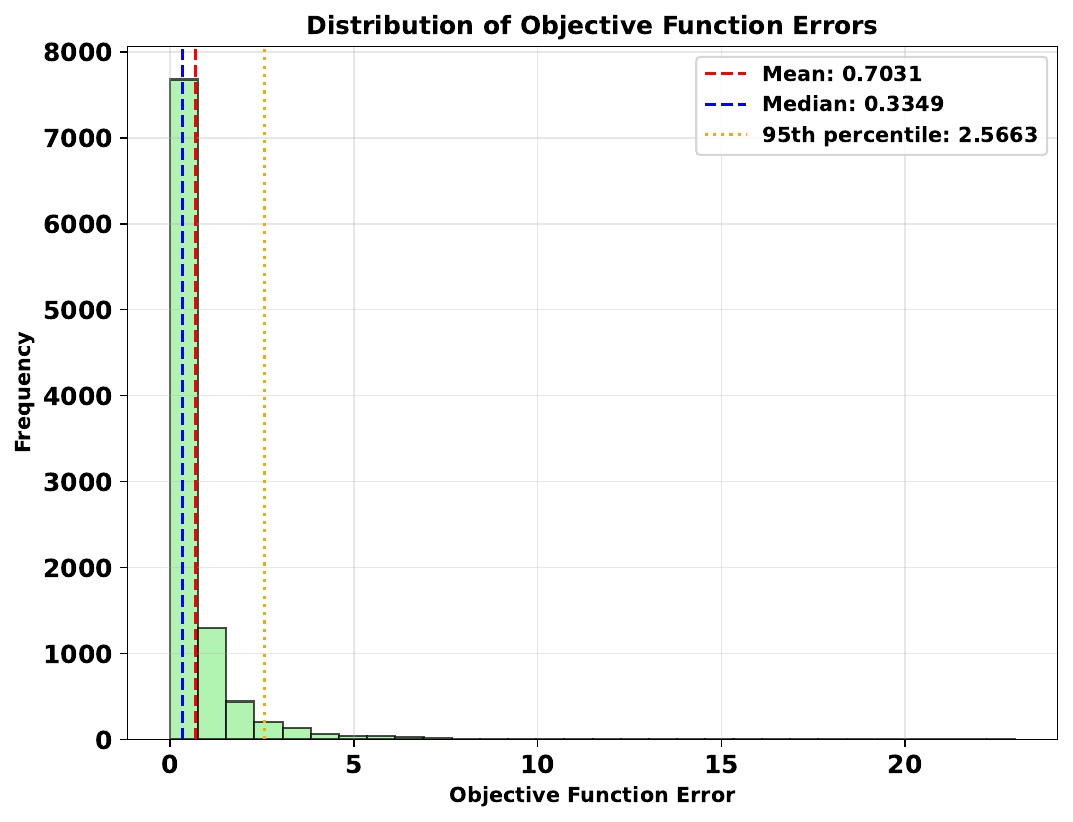}
    \caption{Objective error}
  \end{subfigure}
  \caption{Error distributions for the best SoftmaxTransformer configuration ($n=10, m=6$, layers=16, heads=8).}
  \label{fig:qp_softmax_best_distros_n10m6}
\end{figure}

\begin{figure} [H]
  \centering
  \begin{subfigure}{0.32\textwidth}
    \includegraphics[width=\linewidth]{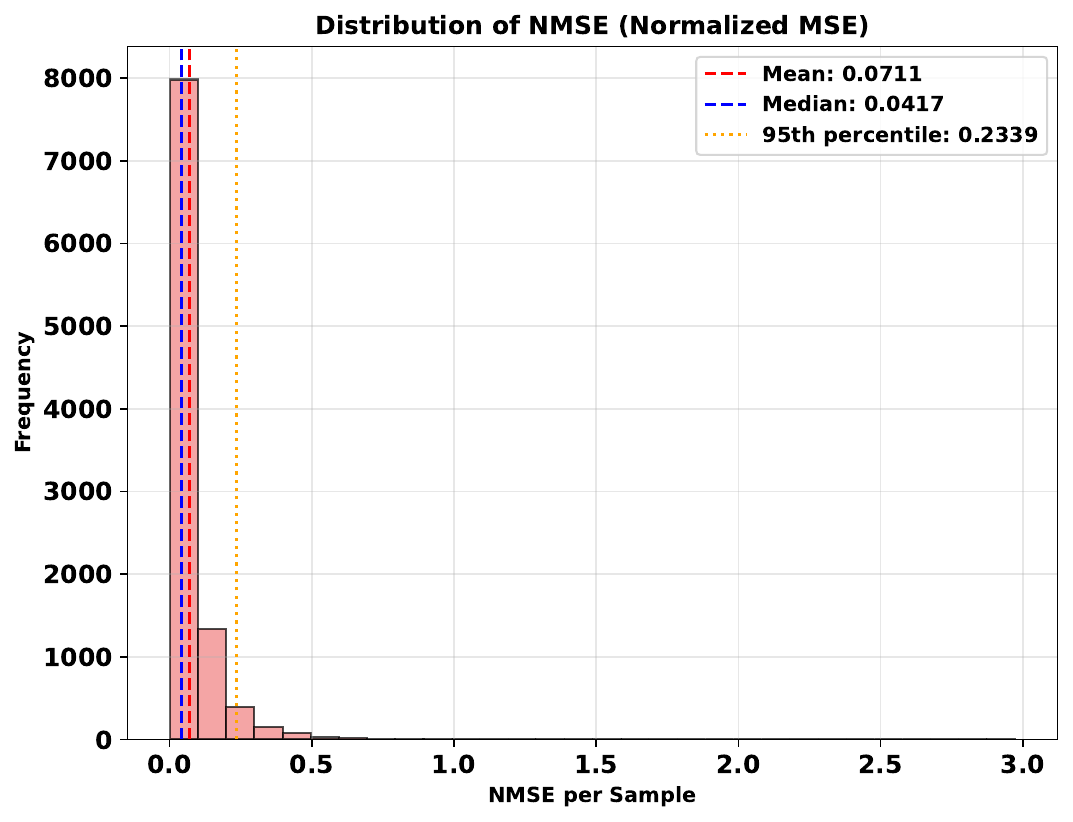}
    \caption{NMSE distribution}
  \end{subfigure}\hfill
  \begin{subfigure}{0.32\textwidth}
    \includegraphics[width=\linewidth]{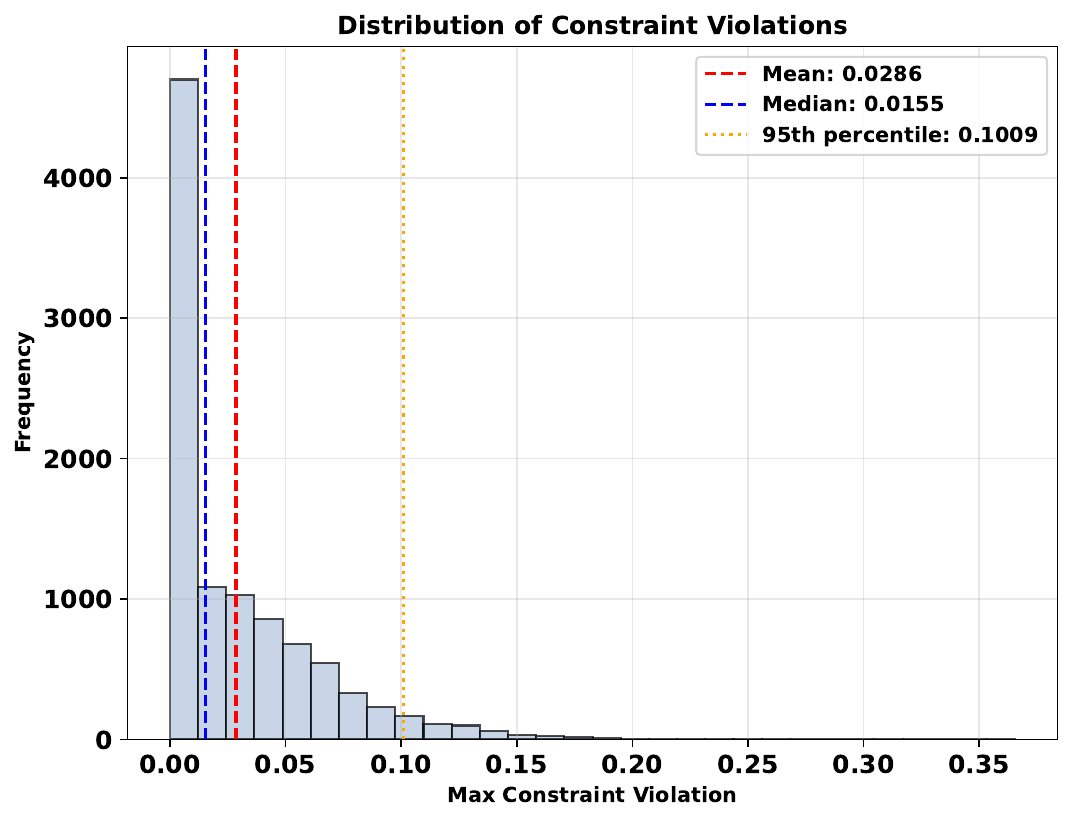}
    \caption{Constraint violation}
  \end{subfigure}\hfill
  \begin{subfigure}{0.32\textwidth}
    \includegraphics[width=\linewidth]{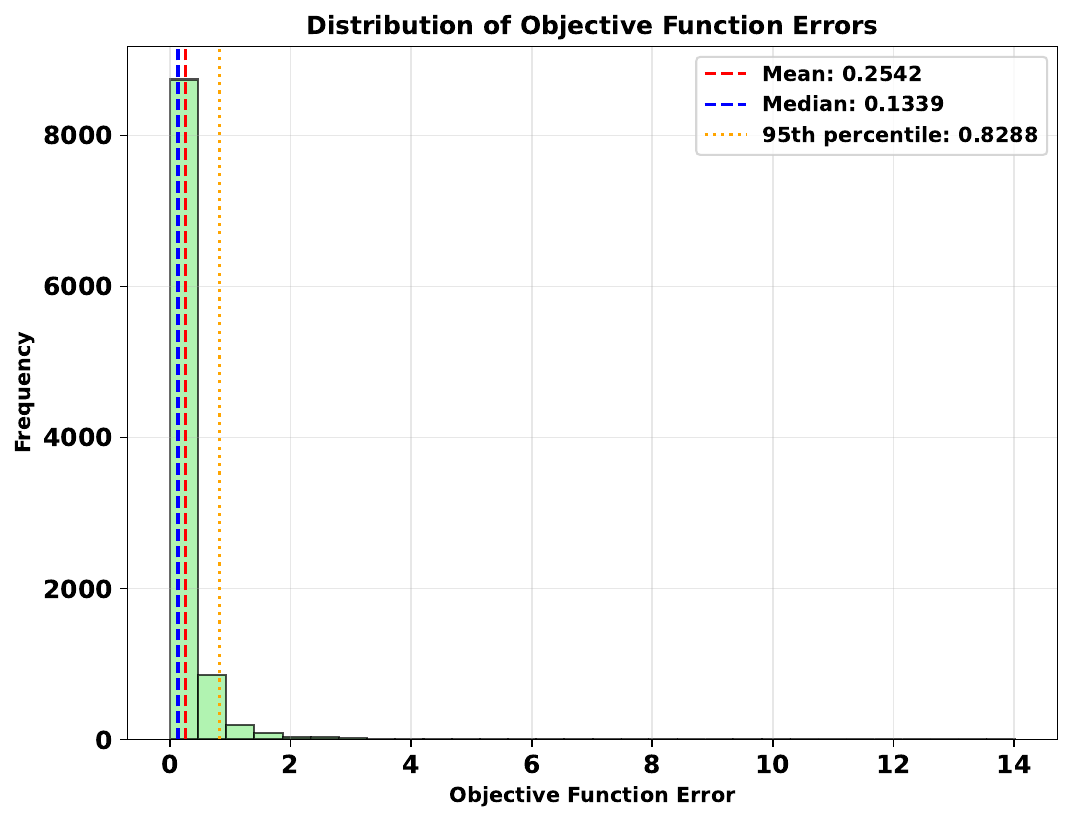}
    \caption{Objective error}
  \end{subfigure}
  \caption{Error distributions for the best LinearTransformer configuration ($n=10, m=9$, layers=8, heads=2).}
  \label{fig:qp_lin_best_distros_n10m9}
\end{figure}

\begin{figure} [H]
  \centering
  \begin{subfigure}{0.32\textwidth}
    \includegraphics[width=\linewidth]{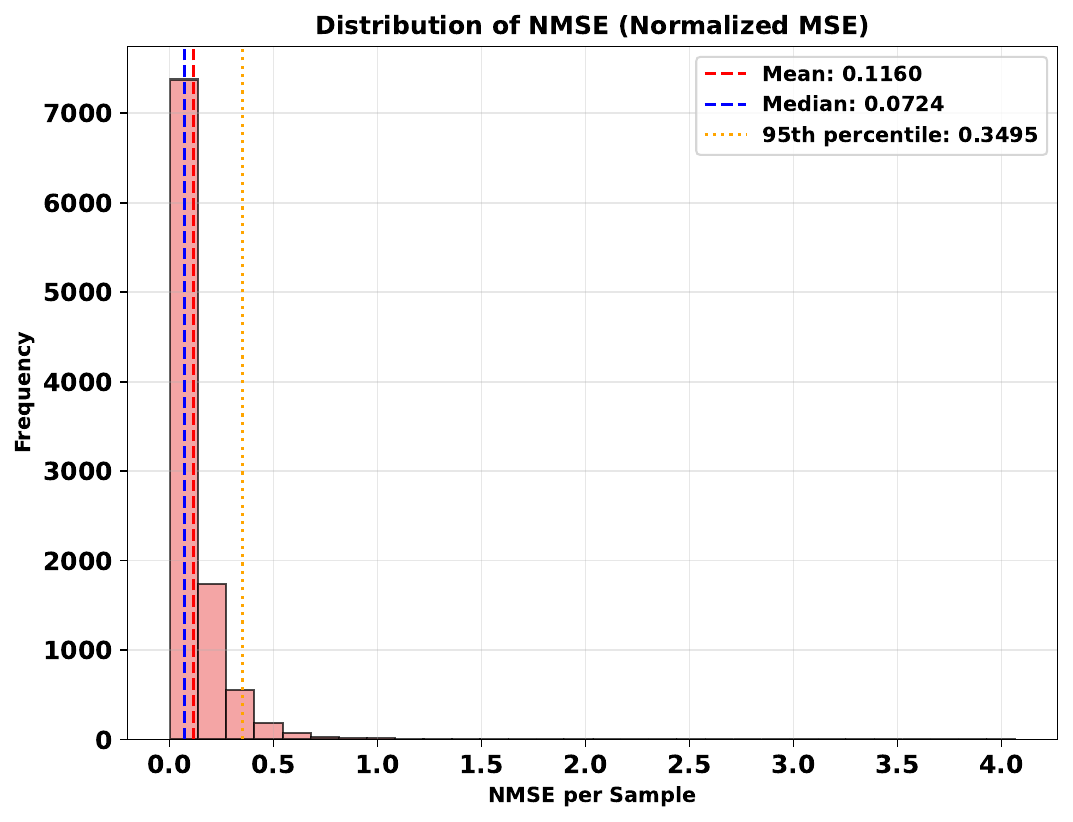}
    \caption{NMSE distribution}
  \end{subfigure}\hfill
  \begin{subfigure}{0.32\textwidth}
    \includegraphics[width=\linewidth]{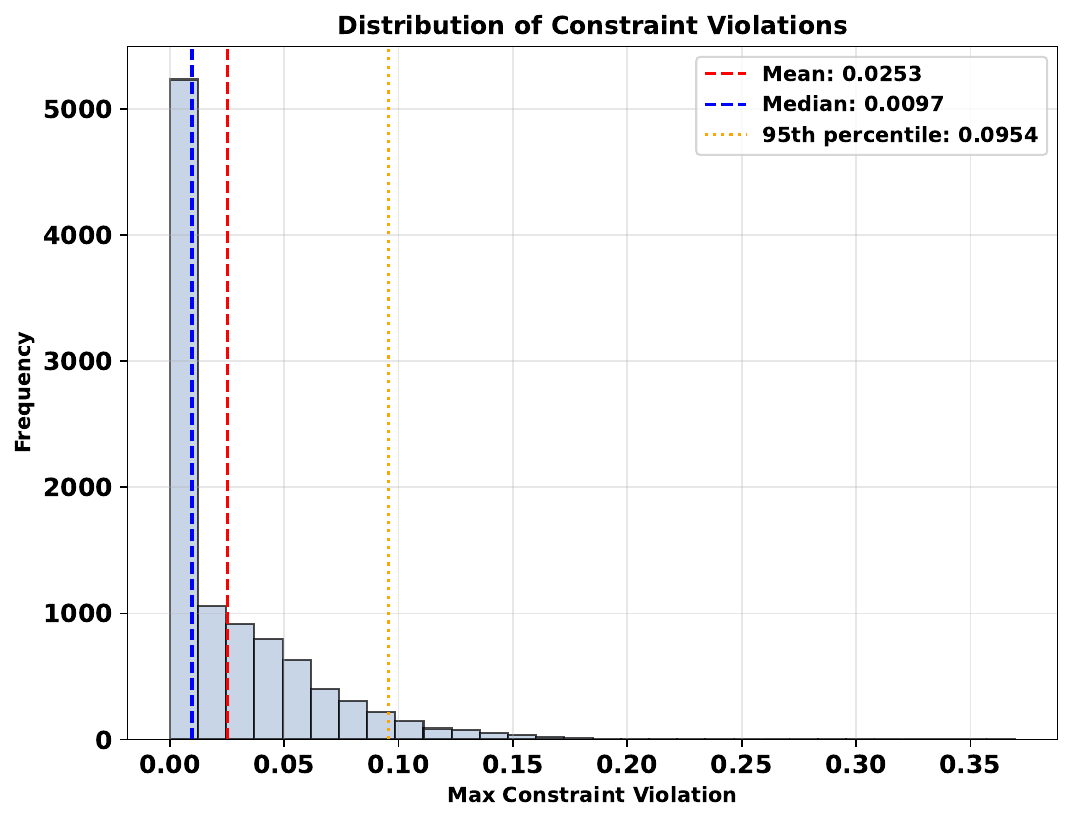}
    \caption{Constraint violation}
  \end{subfigure}\hfill
  \begin{subfigure}{0.32\textwidth}
    \includegraphics[width=\linewidth]{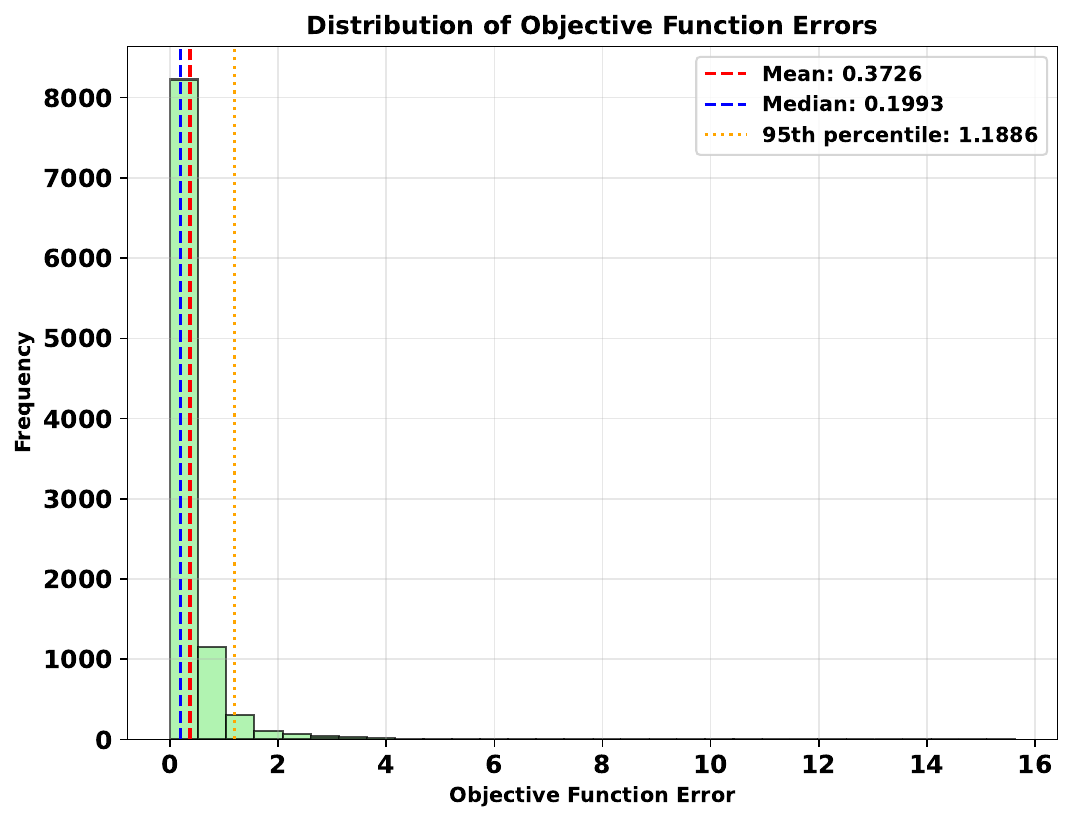}
    \caption{Objective error}
  \end{subfigure}
  \caption{Error distributions for the best SoftmaxTransformer configuration ($n=10, m=9$, layers=16, heads=8).}
  \label{fig:qp_softmax_best_distros_n10m9}
\end{figure}

\clearpage

\section{Toy Model: Statistical Suboptimality of Predict-then-Optimize}
\label{app:predict_opt_suboptimal}

\paragraph{Problem Setup.}
Consider two assets with random returns $r_1, r_2$ having unknown means $\mu_1, \mu_2$. Let $w \in [0,1]$ be the weight on asset 1. Define the mean gap $\delta := \mu_1 - \mu_2$.
For simplicity, we assume the assets are uncorrelated with equal variance. Consequently, the portfolio variance is proportional to $w^2 + (1-w)^2$, i.e.,
\[
    \mathrm{Var}\!\big(w r_1 + (1-w) r_2\big) \;=\; \sigma^2\big(w^2 + (1-w)^2\big).
\]
Without loss of generality, we absorb the scale factor $\sigma^2$ into the risk-aversion parameter by redefining $\lambda \leftarrow \lambda \sigma^2$; thus $\lambda$ below should be interpreted as the coefficient on the variance term. We adopt a standard mean-variance utility:
\[
    U(w; \delta) \;:=\; \mu_2 + w\delta - \lambda\big(w^2 + (1-w)^2\big), \quad \lambda > 0.
\]
By completing the square for the quadratic penalty, maximizing the utility is equivalent to minimizing the distance between $w$ and a shifted target. Letting $u := w - 1/2$, the objective simplifies to:
\[
    \max_{u \in [-1/2, 1/2]} \quad u\delta - 2\lambda u^2.
\]
Because the toy-model utility is quadratic in $w$ with
curvature $2\lambda$, the utility suboptimality satisfies
$U(w^\star;\delta)-U(w;\delta)=2\lambda\,(w-w^\star)^2$,
so MSE to the oracle allocation is directly proportional
to the utility gap. This justifies MSE-to-oracle as the
evaluation metric.

\paragraph{Assumptions.}
This toy model makes several simplifying assumptions to enable closed-form insights.
First, we consider only two assets with uncorrelated returns and equal variances, so the portfolio variance simplifies to
$\sigma^2\!\left(w^2+(1-w)^2\right)$.
Second, we use a mean--variance objective with fixed risk aversion $\lambda>0$ and a box constraint $w\in[0,1]$, which induces the clipped linear decision rule.
Third, we adopt a Bayesian formulation in which the true mean gap $\delta$ follows a Gaussian prior $\mathcal{N}(0,\tau^2)$ and the estimate $\hat{\delta}$ is a noisy Gaussian signal; this conjugate structure yields the posterior mean $\mathbb{E}[\delta\mid\hat{\delta}]$ and the corresponding linear shrinkage factor in closed form.

\paragraph{Oracle vs. Predict-then-Optimize.}
If the true gap $\delta$ were known, the optimal allocation $w^\star$ (derived by setting the derivative to zero and clipping to the feasible set) would be:
\[
    w^\star(\delta) \;=\; \Pi_{[0,1]}\left( \frac{1}{2} + \frac{\delta}{4\lambda} \right).
\]
In the standard \textbf{Predict-then-Optimize (PtO)} approach, we observe samples, compute the empirical mean gap $\hat{\delta} = \hat{\mu}_1 - \hat{\mu}_2$, and simply plug it into the oracle formula:
\[
    w_{\text{pto}}(\hat{\delta}) \;=\; \Pi_{[0,1]}\left( \frac{1}{2} + \frac{\hat{\delta}}{4\lambda} \right).
\]

\paragraph{End-to-End Learning. (E2E)}
In contrast, an end-to-end learner seeks the action that maximizes expected utility \emph{given} the noisy signal $\hat{\delta}$. This corresponds to maximizing the conditional expectation:
\[
    \mathbb{E}[U(w;\delta)\mid\hat{\delta}]
    \;=\;
    \text{const} + u\,\mathbb{E}[\delta\mid\hat{\delta}] - 2\lambda u^2.
\]
The solution to this problem uses the \emph{posterior mean} of the gap, not the raw estimate:
\[
    w_{\text{e2e}}(\hat{\delta}) \;=\; \Pi_{[0,1]}\left( \frac{1}{2} + \frac{\mathbb{E}[\delta|\hat{\delta}]}{4\lambda} \right).
\]

\paragraph{The Gaussian Case: Explicit Shrinkage.}
To see why the end-to-end policy outperforms PtO, assume a Gaussian prior on the true gap $\delta \sim \mathcal{N}(0, \tau^2)$ and a sampling distribution $\hat{\delta} \sim \mathcal{N}(\delta, s^2)$ (where $s^2 \propto 1/n$). In particular, under the i.i.d.\ Gaussian return model with $n$ samples per asset, we have
\[
    s^2 \;=\; \frac{2\sigma^2}{n}.
\]
The posterior expectation is a linear shrinkage of the observed signal:
\[
    \mathbb{E}[\delta|\hat{\delta}] = \rho \hat{\delta}, \quad \text{where } \rho = \frac{\tau^2}{\tau^2 + s^2} \in (0, 1).
\]
Substituting this into the general end-to-end update yields the explicit decision rule:
\[
    w_{\text{e2e}}(\hat{\delta}) \;=\; \Pi_{[0,1]}\left( \frac{1}{2} + \rho \frac{\hat{\delta}}{4\lambda} \right).
\]
Comparing the equations for $w_{\text{pto}}$ and $w_{\text{e2e}}$ reveals that PtO implicitly assumes $\rho=1$. However, because $\rho < 1$ in finite samples, the end-to-end model outperforms PtO by \textbf{shrinking} the noisy estimate $\hat{\delta}$ toward the prior (zero), thereby reducing the variance of the decision.
Moreover, since $w_{\text{e2e}}(\hat{\delta})$ maximizes the conditional expected utility for each fixed $\hat{\delta}$,
for any measurable policy $w(\hat{\delta})\in[0,1]$ we have
\[
\mathbb{E}\!\left[U\!\left(w_{\text{e2e}}(\hat{\delta});\delta\right)\middle|\hat{\delta}\right]
\;\ge\;
\mathbb{E}\!\left[U\!\left(w(\hat{\delta});\delta\right)\middle|\hat{\delta}\right].
\]
Taking expectations over $\hat{\delta}$ and using the tower property yields
\[
\mathbb{E}\!\left[U\!\left(w_{\text{e2e}}(\hat{\delta});\delta\right)\right]
\;\ge\;
\mathbb{E}\!\left[U\!\left(w(\hat{\delta});\delta\right)\right].
\]
In particular, choosing $w(\hat{\delta})=w_{\text{pto}}(\hat{\delta})$ gives
\[
\mathbb{E}\!\left[U\!\left(w_{\text{e2e}}(\hat{\delta});\delta\right)\right]
\;\ge\;
\mathbb{E}\!\left[U\!\left(w_{\text{pto}}(\hat{\delta});\delta\right)\right].
\]

\begin{figure}[t]
    \centering
    \begin{subfigure}[b]{0.32\textwidth}
        \centering
        \includegraphics[width=\textwidth]{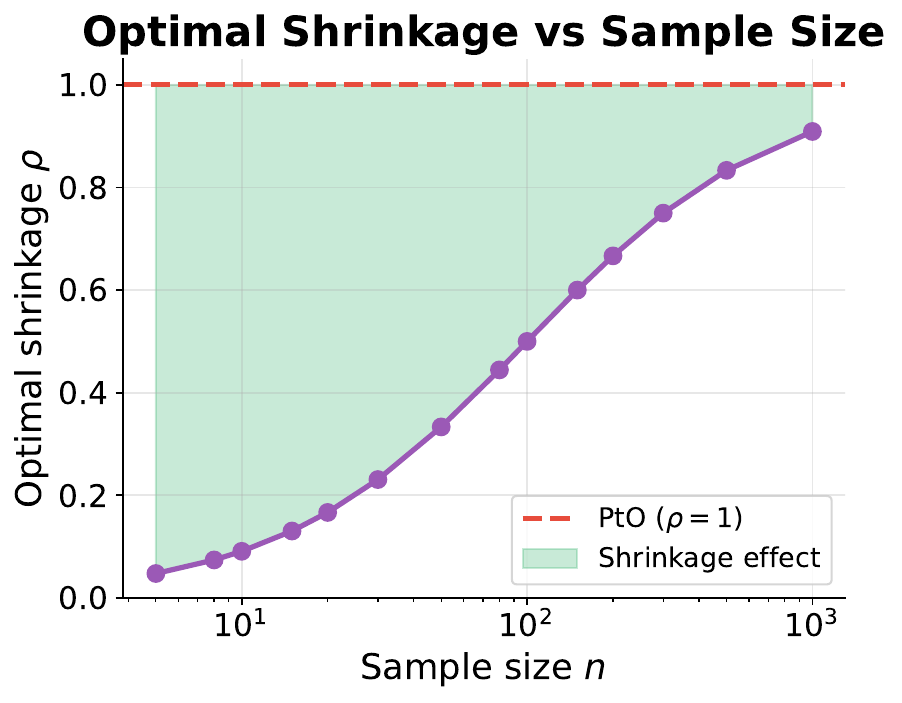}
        \caption{Shrinkage vs. Sample Size}
        \label{fig:shrinkage}
    \end{subfigure}
    \hfill
    \begin{subfigure}[b]{0.32\textwidth}
        \centering
        \includegraphics[width=\textwidth]{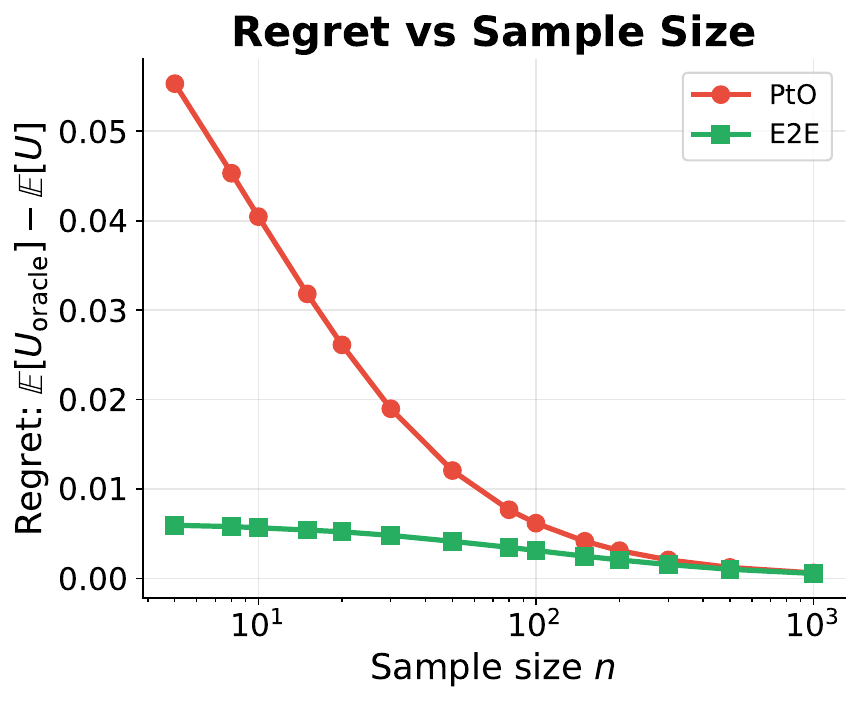}
        \caption{Regret vs. Sample Size}
        \label{fig:regret}
    \end{subfigure}
    \hfill
    \begin{subfigure}[b]{0.32\textwidth}
        \centering
        \includegraphics[width=\textwidth]{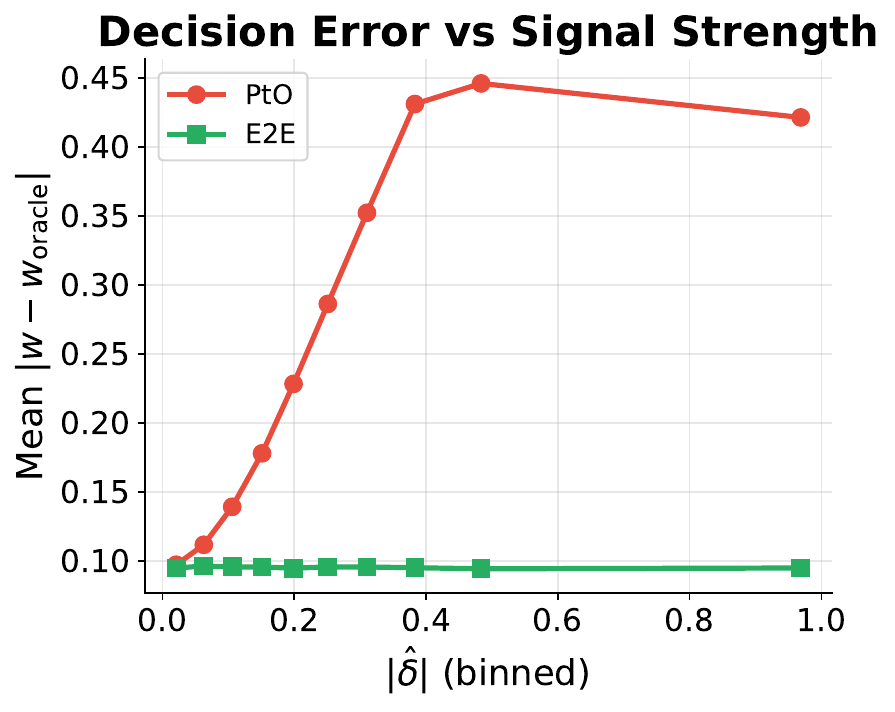}
        \caption{Error vs. Signal Strength}
        \label{fig:error}
    \end{subfigure}
    \caption{Empirical verification of predict-then-optimize (PtO) suboptimality 
    compared to end-to-end learning. 
    \textbf{(a)} The optimal shrinkage factor $\rho = \tau^2/(\tau^2 + s^2)$ increases 
    toward 1 as sample size $n$ grows; the shaded region shows the shrinkage 
    effect that E2E exploits while PtO uses $\rho=1$. 
    \textbf{(b)} Regret for both methods; 
    E2E consistently achieves lower regret than PtO across all sample sizes. 
    \textbf{(c)} Decision error $|w - w_{\mathrm{oracle}}|$ as a function of observed 
    signal strength $|\hat{\delta}|$; PtO error grows linearly with signal strength while E2E remains stable due to shrinkage. 
    Parameters: $\tau^2=0.01$, $\sigma^2=0.5$, $\lambda=0.2$.}
    \label{fig:pto_suboptimality}
\end{figure}

\paragraph{Empirical verification.}
We empirically validate the conclusions of the toy model by Monte Carlo simulation.
In each trial, we sample $\delta\sim\mathcal{N}(0,\tau^2)$ and generate a noisy estimate
$\hat{\delta}\mid\delta\sim\mathcal{N}(\delta,s^2)$ with $s^2=2\sigma^2/n$.
We then compute the PtO and end-to-end decisions
\[
w_{\text{pto}}(\hat{\delta})=\Pi_{[0,1]}\!\left(\frac12+\frac{\hat{\delta}}{4\lambda}\right),
\qquad
w_{\text{e2e}}(\hat{\delta})=\Pi_{[0,1]}\!\left(\frac12+\rho\frac{\hat{\delta}}{4\lambda}\right),
\qquad
\rho=\frac{\tau^2}{\tau^2+s^2},
\]
and evaluate the realized utility 

\[
U(w;\delta)=\mu_2+w\delta-\lambda\big(w^2+(1-w)^2\big),
\]

averaging over $10^5$ trials to estimate expected utilities. 
We also compute the oracle decision $w_{\text{oracle}}(\delta)=\Pi_{[0,1]}\!\left(\frac12+\frac{\delta}{4\lambda}\right)$, 
which has access to the true $\delta$, and define regret as the expected utility loss relative to the oracle:
$\text{Regret}(w) = \mathbb{E}[U(w_{\text{oracle}})] - \mathbb{E}[U(w)]$.

\paragraph{Results.}
Figure~\ref{fig:pto_suboptimality} reports three key findings.
Panel~(a) shows the optimal shrinkage factor $\rho$ as a function of sample size $n$; 
the shaded region represents the shrinkage effect that E2E exploits while PtO implicitly uses $\rho=1$.
Panel~(b) compares the regret of both methods: E2E consistently achieves lower regret than PtO across all sample sizes, 
with the gap largest in the low-data regime and vanishing as $n\to\infty$ (where $\rho\to 1$).
Panel~(c) analyzes decision error $|w - w_{\text{oracle}}|$ as a function of observed signal strength $|\hat{\delta}|$:
PtO error grows linearly with $|\hat{\delta}|$ because it overreacts to noisy estimates, 
while E2E error remains stable due to shrinkage.
These results confirm the theoretical prediction that PtO is statistically suboptimal under finite-sample estimation noise,
and that the suboptimality arises precisely from the failure to shrink predictions toward the prior mean.

\paragraph{Connection to Neural End-to-End Learning.}
The closed-form rule
\[
w_{\text{e2e}}(\hat{\delta})
=
\Pi_{[0,1]}\!\left(\frac{1}{2}+\rho\,\frac{\hat{\delta}}{4\lambda}\right),
\qquad
\rho=\frac{\tau^2}{\tau^2+s^2},
\]
shows that a decision-aware policy should \emph{shrink} noisy estimates before optimizing, with the shrinkage level increasing as the effective estimation noise $s^2$ grows. While the Bayes-optimal $\rho$ depends on unknown problem statistics (e.g., $\tau^2$ and $s^2$), an end-to-end trained model can \emph{implicitly learn} an analogous attenuation from data by directly optimizing a downstream decision objective (e.g., expected utility/regret or supervision from oracle allocations). This viewpoint helps explain why Time2Decide can outperform PtO in practice: PtO effectively corresponds to $\rho=1$, so it can overreact to noisy return signals, whereas Time2Decide can learn to temper its response when the signal-to-noise ratio is low. In particular, providing covariance information via our CAT tokenization supplies explicit second-order statistics about the market environment, which can inform \emph{how much} to trust the return signal and yield more calibrated decisions under uncertainty.

\section{Negativity Case: Parity Barrier for Pure-Data Linear Attention}
\label{sec:neg-parity}
We show a simple obstruction: with only pure data tokens, linear attention cannot synthesize even-degree statistics (e.g., covariance). Intuitively, every attention layer contributes one factor of the data, so any finite stack remains an odd-degree function of the inputs unless we inject constants or explicit second-order tokens.

\begin{proposition}
Let tokens be \(z_i=x_i\in\R^{n}\) for \(i=1,\dots,N\) and let all per-token maps be linear:
\(q_j=x_j W_Q,\ k_i=x_i W_K,\ v_i=x_i W_V\).
With linear attention
\(o_j=\sum_{i=1}^{N} \langle q_j,k_i\rangle\,v_i\),
we have
\[
o_j=\Big[\sum_{i=1}^{N} (Vx_i)x_i^\top\Big] M^\top x_j,
\qquad M:=W_Q W_K^\top,\ \ V:=W_V,
\]
so \(o_j\) is linear in \(x_j\) and quadratic in \(\{x_i\}_{i=1}^{N}\), hence an \emph{odd}-degree function of the dataset. Consequently, such a layer and any finite stack cannot construct even-degree statistics like \(\sum_{i=1}^{N} x_i x_i^\top\) or any map invariant under the global flip \(x_i\mapsto -x_i\).
\end{proposition}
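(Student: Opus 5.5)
The plan is to prove the displayed identity by direct expansion and then replace the degree bookkeeping with a symmetry argument: every quantity the layer or stack produces is an \emph{odd} function of the dataset $X=(x_1,\dots,x_N)$, meaning $f(-X)=-f(X)$. For the identity, treat tokens as row vectors and write $\langle q_j,k_i\rangle = x_j W_Q W_K^\top x_i^\top = x_i M^\top x_j^\top$, which is a scalar. Then
\[
o_j=\sum_i (x_i M^\top x_j^\top)\, V^\top x_i^\top=\Big[\sum_i (Vx_i)x_i^\top\Big]M^\top x_j
\]
after switching to column notation. This step is just associativity of scalars with vectors, taking care with the row/column convention used in the statement. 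It shows that $o_j$ is a homogeneous cubic polynomial in the entries of $X$: linear in $x_j$, and quadratic through the data-dependent matrix $\sum_i (Vx_i)x_i^\top$. In particular $o_j(-X)=(-1)^3 o_j(X)=-o_j(X)$.

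For a finite stack, I would induct on depth on the following property: every token state $h_j^{(\ell)}$ is an odd function of $X$. The base case holds because $h_j^{(0)}=x_j$. For the inductive step, the queries, keys and values are bias-free linear images of odd functions, so they are odd. Each summand $\langle q_j,k_i\rangle v_i$ is then a product of three odd factors, hence odd, and so the attention output is odd. A residual connection $h_j+o_j$ is a sum of odd functions, and any bias-free linear post-map $W_O$ preserves oddness. By the same token, the output is a sum of homogeneous polynomials of odd degree only (degrees up to $3^L$). Proving oddness directly avoids tracking which degrees appear once residuals mix layers.

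The final step is to conclude that no nonzero even statistic can be produced. Suppose a (linear, bias-free) readout $g$ of the stack equals a flip-invariant map $h$, so that $h(-X)=h(X)$. Since $g$ is odd, $h(X)=g(X)=-g(-X)=-h(-X)=-h(X)$, so $h\equiv 0$. This applies to $\sum_i x_i x_i^\top$, which is even and not identically zero, so it cannot be represented.

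The main obstacle I expect is stating the hypotheses precisely enough that the conclusion is actually true. Any bias term or constant slot breaks oddness, and so does an elementwise nonlinearity such as the ReLU or the FFN used earlier. This is exactly how the constructions in Appendix~\ref{app:U} escape the barrier, through the constant coordinate exposed by $S$. I would therefore make explicit in the statement that the stack consists only of linear attention, residuals and bias-free linear maps, and that ``cannot construct'' means ``cannot equal a nonzero even function.''
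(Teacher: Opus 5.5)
Your proposal is correct, and for a single layer it matches the paper's argument: expand $\langle q_j,k_i\rangle v_i$ to $o_j=S_VM^\top x_j$, then note that $S_V$ is invariant while $x_j$ changes sign under the global flip.

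You differ from the paper in how you pass to a stack. The paper does this in one sentence, saying that each attention layer ``raises the degree by one'' and that ``linear/ReLU maps do not introduce even-degree terms''. Neither claim is accurate as stated.
\begin{itemize}
\item A layer sends degree-$d$ token states to degree $3d$, plus degree $d$ through the residual. A literal increment by one would turn odd into even.
\item ReLU does destroy the symmetry. The map $\mathrm{ReLU}(u)+\mathrm{ReLU}(-u)=|u|$ is a nonzero flip-invariant token map, so the ``any map invariant under the global flip'' clause fails once ReLU is allowed.
\end{itemize}

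Your induction on $h_j^{(\ell)}(-X)=-h_j^{(\ell)}(X)$ is the correct form of the stack step. Each summand $\langle q_j,k_i\rangle v_i$ is a product of three odd factors, and residual sums and bias-free linear maps preserve oddness. You also restrict explicitly to linear attention, residuals and bias-free linear maps, and read ``cannot construct'' as ``cannot equal a nonzero flip-invariant map''. That restriction is what the conclusion actually requires.

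One cosmetic point: with $v_i=x_iW_V$, the matrix acting on $x_i$ in column form is $W_V^\top$, as your intermediate expression $V^\top x_i^\top$ shows. So the statement's $V:=W_V$ is off by a transpose, which is harmless for the parity argument.
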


\begin{proof}
We compute
\[
\langle q_j,k_i\rangle
=(x_j W_Q)(x_i W_K)^\top
=x_j^\top W_Q W_K^\top x_i
=x_j^\top M x_i.
\]
Thus
\[
o_j=\sum_{i=1}^{N} (x_j^\top M x_i)\,(Vx_i)
=\Big(\sum_{i=1}^{N} (Vx_i)x_i^\top\Big)M^\top x_j
=: S_V\,M^\top x_j,
\]
where \(S_V:=\sum_{i=1}^{N} (Vx_i)x_i^\top\) is a quadratic sketch of the data. 
Under the global sign flip \(x_i\mapsto -x_i\) for all \(i=1,\dots,N\), we have \(S_V\mapsto S_V\), while \(x_j\mapsto -x_j\), so \(o_j=S_V M^\top x_j\mapsto -o_j\). Thus, the map is odd and cannot equal a flip-invariant even statistic such as \(\sum_{i=1}^{N} x_i x_i^\top\). Each additional linear-attention layer multiplies by one more linear factor in the data, raising the degree by one, while linear/ReLU maps do not introduce even-degree terms. Therefore, no finite-depth stack of such components can yield even-degree statistics without adding constant channels or explicit second-order tokens.
\end{proof}

\paragraph{Empirical validation.} To validate this theoretical limitation, we conduct a simple covariance learning experiment where models must learn to estimate covariance matrices from unseen multivariate time series. We compare two transformer variants: (i) LinearTransformer with linear attention and input projection layers, and (ii) BasicLinearTransformer with linear attention and no input projection layers. Each model processes sequences of length 50 with 4 variables and predicts the empirical covariance matrix using the Frobenius norm loss. The models are trained on 5000 sample series and tested on 1000 sample series.

Tables~\ref{tab:covariance_results_linear} and \ref{tab:covariance_results_basic} summarize the R² performance across different architectures and configurations. The results confirm our theoretical prediction: LinearTransformer achieves R² 0.9935, while BasicLinearTransformer (without input projection) performs significantly worse with R² 0.2827.

\begin{table} [ht]
  \centering
  \begin{tabular}{c|ccc|c}
    \hline
    Model & Layers & Heads & Hidden & R² \\
    \hline
    LinearTransformer & 4 & 2 & 16 & 0.9935 \\
    LinearTransformer & 4 & 4 & 16 & 0.9892 \\
    \hline
  \end{tabular}
  \caption{LinearTransformer covariance learning performance (R²) with input projection layers.}
  \label{tab:covariance_results_linear}
\end{table}

\begin{table} [ht]
  \centering
  \begin{tabular}{c|cc|c}
    \hline
    Model & Layers & Heads & R² \\
    \hline
    BasicLinearTransformer & 4 & 4 & 0.2827 \\
    BasicLinearTransformer & 8 & 4 & 0.2473 \\
    BasicLinearTransformer & 4 & 2 & 0.2234 \\
    BasicLinearTransformer & 8 & 2 & 0.1953 \\
    \hline
  \end{tabular}
  \caption{BasicLinearTransformer covariance learning performance (R²) without input projection layers.}
  \label{tab:covariance_results_basic}
\end{table}

These empirical results support our theoretical analysis: while linear attention can learn covariance patterns when equipped with proper input projections/embeddings, removing these projections creates a fundamental barrier to learning even-degree statistics.

\section{Additional Portfolio Optimization Results}
\label{app:additional_portfolio}

This section presents comprehensive MSE results for portfolio optimization across different rebalancing constraints $\gamma \in \{0.5, 0.75, 1.0, 1.25, 1.5, 1.75, 2.0\}$ and return loss weights $\lambda_r \in \{0, 10, 50, 100, \allowbreak 200, \allowbreak 300,\allowbreak 500, \allowbreak 1000, \allowbreak 3000,\allowbreak 5000,\allowbreak 10000,\allowbreak 20000,\allowbreak 40000,\allowbreak 80000,\allowbreak 160000,\allowbreak 320000\}$.
In most of our experiments, SFT achieves a better performance with a small $\lambda_r$ (focusing primarily on allocation prediction), while Time2Decide requires a larger $\lambda_r$ value to balance both return and allocation predictions effectively.

\begin{table}[H]
\centering
\caption{MSE results for $\gamma = 0.5$ across different return loss weights (Part 1).}
\label{tab:mse_gamma_05_part1}
\scriptsize
\begin{tabular}{l|ccccccccc}
\hline
Strategy &$\lambda_r$=0 &$\lambda_r$=10 &$\lambda_r$=50 &$\lambda_r$=100 &$\lambda_r$=200 &$\lambda_r$=300 &$\lambda_r$=500 &$\lambda_r$=1000 &$\lambda_r$=3000 \\
\hline
Uniform & 0.0425 & 0.0425 & 0.0425 & 0.0425 & 0.0425 & 0.0425 & 0.0425 & 0.0425 & 0.0425 \\
Pretrained & 0.0447 & 0.0447 & 0.0447 & 0.0447 & 0.0447 & 0.0447 & 0.0447 & 0.0447 & 0.0447 \\
SFT & 0.0408 & 0.0462 & 0.0559 & 0.0566 & 0.0538 & 0.0626 & 0.0610 & 0.0626 & 0.0624 \\
Predict-then-Optimize & 0.0245 & 0.0245 & 0.0245 & 0.0245 & 0.0245 & 0.0245 & 0.0245 & 0.0245 & 0.0245 \\
Time2Decide & 0.0635 & 0.0618 & 0.0521 & 0.0526 & 0.0482 & 0.0477 & 0.0439 & 0.0367 & 0.0346 \\
\hline
\end{tabular}
\end{table}

\begin{table}[H]
\centering
\caption{MSE results for $\gamma = 0.5$ across different return loss weights (Part 2).}
\label{tab:mse_gamma_05_part2}
\scriptsize
\begin{tabular}{l|ccccccc}
\hline
Strategy &$\lambda_r$=5000 &$\lambda_r$=10000 &$\lambda_r$=20000 &$\lambda_r$=40000 &$\lambda_r$=80000 &$\lambda_r$=160000 &$\lambda_r$=320000 \\
\hline
Uniform & 0.0425 & 0.0425 & 0.0425 & 0.0425 & 0.0425 & 0.0425 & 0.0425 \\
Pretrained & 0.0447 & 0.0447 & 0.0447 & 0.0447 & 0.0447 & 0.0447 & 0.0447 \\
SFT & 0.0622 & 0.0621 & 0.0627 & 0.0621 & 0.0610 & 0.0618 & 0.0614 \\
Predict-then-Optimize & 0.0245 & 0.0245 & 0.0245 & 0.0245 & 0.0245 & 0.0245 & 0.0245 \\
Time2Decide & 0.0340 & 0.0339 & 0.0326 & 0.0325 & 0.0325 & 0.0323 & 0.0323 \\
\hline
\end{tabular}
\end{table}

\begin{table}[H]
\centering
\caption{MSE results for $\gamma = 0.75$ across different return loss weights (Part 1).}
\label{tab:mse_gamma_075_part1}
\scriptsize
\begin{tabular}{l|ccccccccc}
\hline
Strategy &$\lambda_r$=0 &$\lambda_r$=10 &$\lambda_r$=50 &$\lambda_r$=100 &$\lambda_r$=200 &$\lambda_r$=300 &$\lambda_r$=500 &$\lambda_r$=1000 &$\lambda_r$=3000 \\
\hline
Uniform & 0.0444 & 0.0444 & 0.0444 & 0.0444 & 0.0444 & 0.0444 & 0.0444 & 0.0444 & 0.0444 \\
Pretrained & 0.0458 & 0.0458 & 0.0458 & 0.0458 & 0.0458 & 0.0458 & 0.0458 & 0.0458 & 0.0458 \\
SFT & 0.0411 & 0.0499 & 0.0554 & 0.0548 & 0.0527 & 0.0635 & 0.0643 & 0.0648 & 0.0639 \\
Predict-then-Optimize & 0.0294 & 0.0294 & 0.0294 & 0.0294 & 0.0294 & 0.0294 & 0.0294 & 0.0294 & 0.0294 \\
Time2Decide & 0.0646 & 0.0607 & 0.0501 & 0.0495 & 0.0510 & 0.0476 & 0.0417 & 0.0378 & 0.0359 \\
\hline
\end{tabular}
\end{table}

\begin{table}[H]
\centering
\caption{MSE results for $\gamma = 0.75$ across different return loss weights (Part 2).}
\label{tab:mse_gamma_075_part2}
\scriptsize
\begin{tabular}{l|ccccccc}
\hline
Strategy &$\lambda_r$=5000 &$\lambda_r$=10000 &$\lambda_r$=20000 &$\lambda_r$=40000 &$\lambda_r$=80000 &$\lambda_r$=160000 &$\lambda_r$=320000 \\
\hline
Uniform & 0.0444 & 0.0444 & 0.0444 & 0.0444 & 0.0444 & 0.0444 & 0.0444 \\
Pretrained & 0.0458 & 0.0458 & 0.0458 & 0.0458 & 0.0458 & 0.0458 & 0.0458 \\
SFT & 0.0637 & 0.0632 & 0.0635 & 0.0634 & 0.0633 & 0.0626 & 0.0624 \\
Predict-then-Optimize & 0.0294 & 0.0294 & 0.0294 & 0.0294 & 0.0294 & 0.0294 & 0.0294 \\
Time2Decide & 0.0352 & 0.0351 & 0.0350 & 0.0348 & 0.0348 & 0.0347 & 0.0346 \\
\hline
\end{tabular}
\end{table}

\begin{table}[H]
\centering
\caption{MSE results for $\gamma = 1.0$ across different return loss weights (Part 1).}
\label{tab:mse_gamma_10_part1}
\scriptsize
\begin{tabular}{l|ccccccccc}
\hline
Strategy &$\lambda_r$=0 &$\lambda_r$=10 &$\lambda_r$=50 &$\lambda_r$=100 &$\lambda_r$=200 &$\lambda_r$=300 &$\lambda_r$=500 &$\lambda_r$=1000 &$\lambda_r$=3000 \\
\hline
Uniform & 0.0479 & 0.0479 & 0.0479 & 0.0479 & 0.0479 & 0.0479 & 0.0479 & 0.0479 & 0.0479 \\
Pretrained & 0.0500 & 0.0500 & 0.0500 & 0.0500 & 0.0500 & 0.0500 & 0.0500 & 0.0500 & 0.0500 \\
SFT & 0.0413 & 0.0504 & 0.0542 & 0.0553 & 0.0528 & 0.0681 & 0.0688 & 0.0690 & 0.0680 \\
Predict-then-Optimize & 0.0360 & 0.0360 & 0.0360 & 0.0360 & 0.0360 & 0.0360 & 0.0360 & 0.0360 & 0.0360 \\
Time2Decide & 0.0629 & 0.0561 & 0.0504 & 0.0566 & 0.0503 & 0.0478 & 0.0436 & 0.0414 & 0.0387 \\
\hline
\end{tabular}
\end{table}

\begin{table}[H]
\centering
\caption{MSE results for $\gamma = 1.0$ across different return loss weights (Part 2).}
\label{tab:mse_gamma_10_part2}
\scriptsize
\begin{tabular}{l|ccccccc}
\hline
Strategy &$\lambda_r$=5000 &$\lambda_r$=10000 &$\lambda_r$=20000 &$\lambda_r$=40000 &$\lambda_r$=80000 &$\lambda_r$=160000 &$\lambda_r$=320000 \\
\hline
Uniform & 0.0479 & 0.0479 & 0.0479 & 0.0479 & 0.0479 & 0.0479 & 0.0479 \\
Pretrained & 0.0500 & 0.0500 & 0.0500 & 0.0500 & 0.0500 & 0.0500 & 0.0500 \\
SFT & 0.0683 & 0.0681 & 0.0669 & 0.0680 & 0.0675 & 0.0672 & 0.0673 \\
Predict-then-Optimize & 0.0360 & 0.0360 & 0.0360 & 0.0360 & 0.0360 & 0.0360 & 0.0360 \\
Time2Decide & 0.0386 & 0.0391 & 0.0387 & 0.0387 & 0.0388 & 0.0387 & 0.0387 \\
\hline
\end{tabular}
\end{table}

\begin{table}[H]
\centering
\caption{MSE results for $\gamma = 1.25$ across different return loss weights (Part 1).}
\label{tab:mse_gamma_125_part1}
\scriptsize
\begin{tabular}{l|ccccccccc}
\hline
Strategy &$\lambda_r$=0 &$\lambda_r$=10 &$\lambda_r$=50 &$\lambda_r$=100 &$\lambda_r$=200 &$\lambda_r$=300 &$\lambda_r$=500 &$\lambda_r$=1000 &$\lambda_r$=3000 \\
\hline
Uniform & 0.0485 & 0.0485 & 0.0485 & 0.0485 & 0.0485 & 0.0485 & 0.0485 & 0.0485 & 0.0485 \\
Pretrained & 0.0514 & 0.0514 & 0.0514 & 0.0514 & 0.0514 & 0.0514 & 0.0514 & 0.0514 & 0.0514 \\
SFT & 0.0446 & 0.0560 & 0.0550 & 0.0523 & 0.0523 & 0.0683 & 0.0678 & 0.0685 & 0.0672 \\
Predict-then-Optimize & 0.0376 & 0.0376 & 0.0376 & 0.0376 & 0.0376 & 0.0376 & 0.0376 & 0.0376 & 0.0376 \\
Time2Decide & 0.0639 & 0.0501 & 0.0489 & 0.0498 & 0.0486 & 0.0452 & 0.0416 & 0.0400 & 0.0390 \\
\hline
\end{tabular}
\end{table}

\begin{table}[H]
\centering
\caption{MSE results for $\gamma = 1.25$ across different return loss weights (Part 2).}
\label{tab:mse_gamma_125_part2}
\scriptsize
\begin{tabular}{l|ccccccc}
\hline
Strategy &$\lambda_r$=5000 &$\lambda_r$=10000 &$\lambda_r$=20000 &$\lambda_r$=40000 &$\lambda_r$=80000 &$\lambda_r$=160000 &$\lambda_r$=320000 \\
\hline
Uniform & 0.0485 & 0.0485 & 0.0485 & 0.0485 & 0.0485 & 0.0485 & 0.0485 \\
Pretrained & 0.0514 & 0.0514 & 0.0514 & 0.0514 & 0.0514 & 0.0514 & 0.0514 \\
SFT & 0.0667 & 0.0681 & 0.0678 & 0.0675 & 0.0679 & 0.0678 & 0.0676 \\
Predict-then-Optimize & 0.0376 & 0.0376 & 0.0376 & 0.0376 & 0.0376 & 0.0376 & 0.0376 \\
Time2Decide & 0.0391 & 0.0393 & 0.0394 & 0.0395 & 0.0395 & 0.0395 & 0.0395 \\
\hline
\end{tabular}
\end{table}

\begin{table}[H]
\centering
\caption{MSE results for $\gamma = 1.5$ across different return loss weights (Part 1).}
\label{tab:mse_gamma_15_part1}
\scriptsize
\begin{tabular}{l|ccccccccc}
\hline
Strategy &$\lambda_r$=0 &$\lambda_r$=10 &$\lambda_r$=50 &$\lambda_r$=100 &$\lambda_r$=200 &$\lambda_r$=300 &$\lambda_r$=500 &$\lambda_r$=1000 &$\lambda_r$=3000 \\
\hline
Uniform & 0.0505 & 0.0505 & 0.0505 & 0.0505 & 0.0505 & 0.0505 & 0.0505 & 0.0505 & 0.0505 \\
Pretrained & 0.0523 & 0.0523 & 0.0523 & 0.0523 & 0.0523 & 0.0523 & 0.0523 & 0.0523 & 0.0523 \\
SFT & 0.0445 & 0.0557 & 0.0529 & 0.0532 & 0.0689 & 0.0698 & 0.0693 & 0.0692 & 0.0698 \\
Predict-then-Optimize & 0.0413 & 0.0413 & 0.0413 & 0.0413 & 0.0413 & 0.0413 & 0.0413 & 0.0413 & 0.0413 \\
Time2Decide & 0.0574 & 0.0490 & 0.0472 & 0.0457 & 0.0461 & 0.0435 & 0.0423 & 0.0414 & 0.0410 \\
\hline
\end{tabular}
\end{table}

\begin{table}[H]
\centering
\caption{MSE results for $\gamma = 1.5$ across different return loss weights (Part 2).}
\label{tab:mse_gamma_15_part2}
\scriptsize
\begin{tabular}{l|ccccccc}
\hline
Strategy &$\lambda_r$=5000 &$\lambda_r$=10000 &$\lambda_r$=20000 &$\lambda_r$=40000 &$\lambda_r$=80000 &$\lambda_r$=160000 &$\lambda_r$=320000 \\
\hline
Uniform & 0.0505 & 0.0505 & 0.0505 & 0.0505 & 0.0505 & 0.0505 & 0.0505 \\
Pretrained & 0.0523 & 0.0523 & 0.0523 & 0.0523 & 0.0523 & 0.0523 & 0.0523 \\
SFT & 0.0683 & 0.0686 & 0.0690 & 0.0679 & 0.0677 & 0.0680 & 0.0681 \\
Predict-then-Optimize & 0.0413 & 0.0413 & 0.0413 & 0.0413 & 0.0413 & 0.0413 & 0.0413 \\
Time2Decide & 0.0412 & 0.0415 & 0.0418 & 0.0420 & 0.0420 & 0.0420 & 0.0419 \\
\hline
\end{tabular}
\end{table}

\begin{table}[H]
\centering
\caption{MSE results for $\gamma = 1.75$ across different return loss weights (Part 1).}
\label{tab:mse_gamma_175_part1}
\scriptsize
\begin{tabular}{l|ccccccccc}
\hline
Strategy &$\lambda_r$=0 &$\lambda_r$=10 &$\lambda_r$=50 &$\lambda_r$=100 &$\lambda_r$=200 &$\lambda_r$=300 &$\lambda_r$=500 &$\lambda_r$=1000 &$\lambda_r$=3000 \\
\hline
Uniform & 0.0539 & 0.0539 & 0.0539 & 0.0539 & 0.0539 & 0.0539 & 0.0539 & 0.0539 & 0.0539 \\
Pretrained & 0.0567 & 0.0567 & 0.0567 & 0.0567 & 0.0567 & 0.0567 & 0.0567 & 0.0567 & 0.0567 \\
SFT & 0.0583 & 0.0515 & 0.0544 & 0.0541 & 0.0732 & 0.0724 & 0.0723 & 0.0727 & 0.0721 \\
Predict-then-Optimize & 0.0472 & 0.0472 & 0.0472 & 0.0472 & 0.0472 & 0.0472 & 0.0472 & 0.0472 & 0.0472 \\
Time2Decide & 0.0626 & 0.0538 & 0.0486 & 0.0474 & 0.0472 & 0.0457 & 0.0448 & 0.0445 & 0.0442 \\
\hline
\end{tabular}
\end{table}

\begin{table}[H]
\centering
\caption{MSE results for $\gamma = 1.75$ across different return loss weights (Part 2).}
\label{tab:mse_gamma_175_part2}
\scriptsize
\begin{tabular}{l|ccccccc}
\hline
Strategy &$\lambda_r$=5000 &$\lambda_r$=10000 &$\lambda_r$=20000 &$\lambda_r$=40000 &$\lambda_r$=80000 &$\lambda_r$=160000 &$\lambda_r$=320000 \\
\hline
Uniform & 0.0539 & 0.0539 & 0.0539 & 0.0539 & 0.0539 & 0.0539 & 0.0539 \\
Pretrained & 0.0567 & 0.0567 & 0.0567 & 0.0567 & 0.0567 & 0.0567 & 0.0567 \\
SFT & 0.0711 & 0.0716 & 0.0718 & 0.0720 & 0.0721 & 0.0725 & 0.0726 \\
Predict-then-Optimize & 0.0472 & 0.0472 & 0.0472 & 0.0472 & 0.0472 & 0.0472 & 0.0472 \\
Time2Decide & 0.0444 & 0.0448 & 0.0452 & 0.0453 & 0.0454 & 0.0455 & 0.0455 \\
\hline
\end{tabular}
\end{table}

\begin{table}[H]
\centering
\caption{MSE results for $\gamma = 2.0$ across different return loss weights (Part 1).}
\label{tab:mse_gamma_20_part1}
\scriptsize
\begin{tabular}{l|ccccccccc}
\hline
Strategy &$\lambda_r$=0 &$\lambda_r$=10 &$\lambda_r$=50 &$\lambda_r$=100 &$\lambda_r$=200 &$\lambda_r$=300 &$\lambda_r$=500 &$\lambda_r$=1000 &$\lambda_r$=3000 \\
\hline
Uniform & 0.0586 & 0.0586 & 0.0586 & 0.0586 & 0.0586 & 0.0586 & 0.0586 & 0.0586 & 0.0586 \\
Pretrained & 0.0604 & 0.0604 & 0.0604 & 0.0604 & 0.0604 & 0.0604 & 0.0604 & 0.0604 & 0.0604 \\
SFT & 0.0620 & 0.0620 & 0.0567 & 0.0565 & 0.0769 & 0.0754 & 0.0771 & 0.0770 & 0.0771 \\
Predict-then-Optimize & 0.0553 & 0.0553 & 0.0553 & 0.0553 & 0.0553 & 0.0553 & 0.0553 & 0.0553 & 0.0553 \\
Time2Decide & 0.0618 & 0.0575 & 0.0524 & 0.0526 & 0.0506 & 0.0499 & 0.0490 & 0.0488 & 0.0488 \\
\hline
\end{tabular}
\end{table}

\begin{table}[H]
\centering
\caption{MSE results for $\gamma = 2.0$ across different return loss weights (Part 2).}
\label{tab:mse_gamma_20_part2}
\scriptsize
\begin{tabular}{l|ccccccc}
\hline
Strategy &$\lambda_r$=5000 &$\lambda_r$=10000 &$\lambda_r$=20000 &$\lambda_r$=40000 &$\lambda_r$=80000 &$\lambda_r$=160000 &$\lambda_r$=320000 \\
\hline
Uniform & 0.0586 & 0.0586 & 0.0586 & 0.0586 & 0.0586 & 0.0586 & 0.0586 \\
Pretrained & 0.0604 & 0.0604 & 0.0604 & 0.0604 & 0.0604 & 0.0604 & 0.0604 \\
SFT & 0.0766 & 0.0768 & 0.0769 & 0.0767 & 0.0763 & 0.0772 & 0.0771 \\
Predict-then-Optimize & 0.0553 & 0.0553 & 0.0553 & 0.0553 & 0.0553 & 0.0553 & 0.0553 \\
Time2Decide & 0.0491 & 0.0503 & 0.0502 & 0.0506 & 0.0506 & 0.0505 & 0.0507\\
\hline
\end{tabular}
\end{table}

\section{Additional Results with Noisy Data}
\label{app:noisy-portfolio}

To further support our theory in Appendix \ref{app:predict_opt_suboptimal}, we present detailed results for the noisy data experiments. Specifically, we generated the returns by scaling the raw time series by a factor of 0.01, adding Gaussian noise with a standard deviation of 0.01, and clipping the values to the range $[-0.3, 0.8]$. The summary of these results is presented in the main text (Table \ref{tab:time2decide_mse_summary_noisy}), while the detailed breakdown between different weights of return loss is provided below. We can see that with the added noise, the end-to-end \textit{Time2Decide} dominates other strategies in most cases. This aligns with our theory. Note that for Predict-then-Optimize, we use the original TimePFN for prediction. This is because we find that fine-tuning the model on the noisy data leads to worse performance, as presented in Table \ref{tab:compare-timepfn-noise-sft-or-not}.

\begin{table}[H]
\centering
\caption{MSE results for $\gamma = 0.5$ with noisy data across different return loss weights (Part 1).}
\label{tab:mse_gamma_05_noisy_part1}
\scriptsize
\begin{tabular}{l|ccccccccc}
\hline
Strategy &$\lambda_r$=0 &$\lambda_r$=10 &$\lambda_r$=50 &$\lambda_r$=100 &$\lambda_r$=200 &$\lambda_r$=300 &$\lambda_r$=500 &$\lambda_r$=1000 &$\lambda_r$=3000 \\
\hline
Uniform & 0.0336 & 0.0336 & 0.0336 & 0.0336 & 0.0336 & 0.0336 & 0.0336 & 0.0336 & 0.0336 \\
Pretrained & 0.0409 & 0.0409 & 0.0409 & 0.0409 & 0.0409 & 0.0409 & 0.0409 & 0.0409 & 0.0409 \\
SFT & 0.0358 & 0.0329 & 0.0324 & 0.0354 & 0.0373 & 0.0395 & 0.0383 & 0.0431 & 0.0433 \\
Predict-then-Optimize & 0.0224 & 0.0224 & 0.0224 & 0.0224 & 0.0224 & 0.0224 & 0.0224 & 0.0224 & 0.0224 \\
Time2Decide & 0.0423 & 0.0232 & 0.0353 & 0.0377 & 0.0421 & 0.0371 & 0.0596 & 0.0666 & 0.0594 \\
\hline
\end{tabular}
\end{table}

\begin{table}[H]
\centering
\caption{MSE results for $\gamma = 0.5$ with noisy data across different return loss weights (Part 2).}
\label{tab:mse_gamma_05_noisy_part2}
\scriptsize
\begin{tabular}{l|ccccccc}
\hline
Strategy &$\lambda_r$=5000 &$\lambda_r$=10000 &$\lambda_r$=20000 &$\lambda_r$=40000 &$\lambda_r$=80000 &$\lambda_r$=160000 &$\lambda_r$=320000 \\
\hline
Uniform & 0.0336 & 0.0336 & 0.0336 & 0.0336 & 0.0336 & 0.0336 & 0.0336 \\
Pretrained & 0.0409 & 0.0409 & 0.0409 & 0.0409 & 0.0409 & 0.0409 & 0.0409 \\
SFT & 0.0451 & 0.0465 & 0.0499 & 0.0506 & 0.0490 & 0.0471 & 0.0462 \\
Predict-then-Optimize & 0.0224 & 0.0224 & 0.0224 & 0.0224 & 0.0224 & 0.0224 & 0.0224 \\
Time2Decide & 0.0540 & 0.0507 & 0.0446 & 0.0409 & 0.0361 & 0.0332 & 0.0310 \\
\hline
\end{tabular}
\end{table}

\begin{table}[H]
\centering
\caption{MSE results for $\gamma = 0.75$ with noisy data across different return loss weights (Part 1).}
\label{tab:mse_gamma_075_noisy_part1}
\scriptsize
\begin{tabular}{l|ccccccccc}
\hline
Strategy &$\lambda_r$=0 &$\lambda_r$=10 &$\lambda_r$=50 &$\lambda_r$=100 &$\lambda_r$=200 &$\lambda_r$=300 &$\lambda_r$=500 &$\lambda_r$=1000 &$\lambda_r$=3000 \\
\hline
Uniform & 0.0356 & 0.0356 & 0.0356 & 0.0356 & 0.0356 & 0.0356 & 0.0356 & 0.0356 & 0.0356 \\
Pretrained & 0.0428 & 0.0428 & 0.0428 & 0.0428 & 0.0428 & 0.0428 & 0.0428 & 0.0428 & 0.0428 \\
SFT & 0.0349 & 0.0347 & 0.0357 & 0.0393 & 0.0467 & 0.0402 & 0.0449 & 0.0419 & 0.0400 \\
Predict-then-Optimize & 0.0300 & 0.0300 & 0.0300 & 0.0300 & 0.0300 & 0.0300 & 0.0300 & 0.0300 & 0.0300 \\
Time2Decide & 0.0302 & 0.0277 & 0.0245 & 0.0303 & 0.0343 & 0.0388 & 0.0400 & 0.0578 & 0.0549 \\
\hline
\end{tabular}
\end{table}

\begin{table}[H]
\centering
\caption{MSE results for $\gamma = 0.75$ with noisy data across different return loss weights (Part 2).}
\label{tab:mse_gamma_075_noisy_part2}
\scriptsize
\begin{tabular}{l|ccccccc}
\hline
Strategy &$\lambda_r$=5000 &$\lambda_r$=10000 &$\lambda_r$=20000 &$\lambda_r$=40000 &$\lambda_r$=80000 &$\lambda_r$=160000 &$\lambda_r$=320000 \\
\hline
Uniform & 0.0356 & 0.0356 & 0.0356 & 0.0356 & 0.0356 & 0.0356 & 0.0356 \\
Pretrained & 0.0428 & 0.0428 & 0.0428 & 0.0428 & 0.0428 & 0.0428 & 0.0428 \\
SFT & 0.0410 & 0.0429 & 0.0451 & 0.0467 & 0.0461 & 0.0465 & 0.0461 \\
Predict-then-Optimize & 0.0300 & 0.0300 & 0.0300 & 0.0300 & 0.0300 & 0.0300 & 0.0300 \\
Time2Decide & 0.0516 & 0.0461 & 0.0421 & 0.0385 & 0.0347 & 0.0330 & 0.0330 \\
\hline
\end{tabular}
\end{table}

\begin{table}[H]
\centering
\caption{MSE results for $\gamma = 1.0$ with noisy data across different return loss weights (Part 1).}
\label{tab:mse_gamma_10_noisy_part1}
\scriptsize
\begin{tabular}{l|ccccccccc}
\hline
Strategy &$\lambda_r$=0 &$\lambda_r$=10 &$\lambda_r$=50 &$\lambda_r$=100 &$\lambda_r$=200 &$\lambda_r$=300 &$\lambda_r$=500 &$\lambda_r$=1000 &$\lambda_r$=3000 \\
\hline
Uniform & 0.0403 & 0.0403 & 0.0403 & 0.0403 & 0.0403 & 0.0403 & 0.0403 & 0.0403 & 0.0403 \\
Pretrained & 0.0485 & 0.0485 & 0.0485 & 0.0485 & 0.0485 & 0.0485 & 0.0485 & 0.0485 & 0.0485 \\
SFT & 0.0422 & 0.0408 & 0.0426 & 0.0458 & 0.0440 & 0.0500 & 0.0485 & 0.0432 & 0.0427 \\
Predict-then-Optimize & 0.0407 & 0.0407 & 0.0407 & 0.0407 & 0.0407 & 0.0407 & 0.0407 & 0.0407 & 0.0407 \\
Time2Decide & 0.0413 & 0.0452 & 0.0390 & 0.0347 & 0.0371 & 0.0422 & 0.0392 & 0.0460 & 0.0487 \\
\hline
\end{tabular}
\end{table}

\begin{table}[H]
\centering
\caption{MSE results for $\gamma = 1.0$ with noisy data across different return loss weights (Part 2).}
\label{tab:mse_gamma_10_noisy_part2}
\scriptsize
\begin{tabular}{l|ccccccc}
\hline
Strategy &$\lambda_r$=5000 &$\lambda_r$=10000 &$\lambda_r$=20000 &$\lambda_r$=40000 &$\lambda_r$=80000 &$\lambda_r$=160000 &$\lambda_r$=320000 \\
\hline
Uniform & 0.0403 & 0.0403 & 0.0403 & 0.0403 & 0.0403 & 0.0403 & 0.0403 \\
Pretrained & 0.0485 & 0.0485 & 0.0485 & 0.0485 & 0.0485 & 0.0485 & 0.0485 \\
SFT & 0.0419 & 0.0431 & 0.0433 & 0.0455 & 0.0459 & 0.0491 & 0.0492 \\
Predict-then-Optimize & 0.0407 & 0.0407 & 0.0407 & 0.0407 & 0.0407 & 0.0407 & 0.0407 \\
Time2Decide & 0.0485 & 0.0474 & 0.0450 & 0.0420 & 0.0400 & 0.0387 & 0.0373 \\
\hline
\end{tabular}
\end{table}

\begin{table}[H]
\centering
\caption{MSE results for $\gamma = 1.25$ with noisy data across different return loss weights (Part 1).}
\label{tab:mse_gamma_125_noisy_part1}
\scriptsize
\begin{tabular}{l|ccccccccc}
\hline
Strategy &$\lambda_r$=0 &$\lambda_r$=10 &$\lambda_r$=50 &$\lambda_r$=100 &$\lambda_r$=200 &$\lambda_r$=300 &$\lambda_r$=500 &$\lambda_r$=1000 &$\lambda_r$=3000 \\
\hline
Uniform & 0.0415 & 0.0415 & 0.0415 & 0.0415 & 0.0415 & 0.0415 & 0.0415 & 0.0415 & 0.0415 \\
Pretrained & 0.0479 & 0.0479 & 0.0479 & 0.0479 & 0.0479 & 0.0479 & 0.0479 & 0.0479 & 0.0479 \\
SFT & 0.0448 & 0.0432 & 0.0441 & 0.0460 & 0.0451 & 0.0468 & 0.0454 & 0.0428 & 0.0422 \\
Predict-then-Optimize & 0.0446 & 0.0446 & 0.0446 & 0.0446 & 0.0446 & 0.0446 & 0.0446 & 0.0446 & 0.0446 \\
Time2Decide & 0.0420 & 0.0467 & 0.0412 & 0.0377 & 0.0394 & 0.0372 & 0.0394 & 0.0332 & 0.0410 \\
\hline
\end{tabular}
\end{table}

\begin{table}[H]
\centering
\caption{MSE results for $\gamma = 1.25$ with noisy data across different return loss weights (Part 2).}
\label{tab:mse_gamma_125_noisy_part2}
\scriptsize
\begin{tabular}{l|ccccccc}
\hline
Strategy &$\lambda_r$=5000 &$\lambda_r$=10000 &$\lambda_r$=20000 &$\lambda_r$=40000 &$\lambda_r$=80000 &$\lambda_r$=160000 &$\lambda_r$=320000 \\
\hline
Uniform & 0.0415 & 0.0415 & 0.0415 & 0.0415 & 0.0415 & 0.0415 & 0.0415 \\
Pretrained & 0.0479 & 0.0479 & 0.0479 & 0.0479 & 0.0479 & 0.0479 & 0.0479 \\
SFT & 0.0426 & 0.0420 & 0.0437 & 0.0447 & 0.0460 & 0.0476 & 0.0489 \\
Predict-then-Optimize & 0.0446 & 0.0446 & 0.0446 & 0.0446 & 0.0446 & 0.0446 & 0.0446 \\
Time2Decide & 0.0408 & 0.0423 & 0.0410 & 0.0412 & 0.0398 & 0.0391 & 0.0381 \\
\hline
\end{tabular}
\end{table}

\begin{table}[H]
\centering
\caption{MSE results for $\gamma = 1.5$ with noisy data across different return loss weights (Part 1).}
\label{tab:mse_gamma_15_noisy_part1}
\scriptsize
\begin{tabular}{l|ccccccccc}
\hline
Strategy &$\lambda_r$=0 &$\lambda_r$=10 &$\lambda_r$=50 &$\lambda_r$=100 &$\lambda_r$=200 &$\lambda_r$=300 &$\lambda_r$=500 &$\lambda_r$=1000 &$\lambda_r$=3000 \\
\hline
Uniform & 0.0449 & 0.0449 & 0.0449 & 0.0449 & 0.0449 & 0.0449 & 0.0449 & 0.0449 & 0.0449 \\
Pretrained & 0.0529 & 0.0529 & 0.0529 & 0.0529 & 0.0529 & 0.0529 & 0.0529 & 0.0529 & 0.0529 \\
SFT & 0.0454 & 0.0449 & 0.0474 & 0.0480 & 0.0453 & 0.0489 & 0.0471 & 0.0449 & 0.0445 \\
Predict-then-Optimize & 0.0522 & 0.0522 & 0.0522 & 0.0522 & 0.0522 & 0.0522 & 0.0522 & 0.0522 & 0.0522 \\
Time2Decide & 0.0550 & 0.0510 & 0.0421 & 0.0408 & 0.0414 & 0.0382 & 0.0344 & 0.0353 & 0.0384 \\
\hline
\end{tabular}
\end{table}

\begin{table}[H]
\centering
\caption{MSE results for $\gamma = 1.5$ with noisy data across different return loss weights (Part 2).}
\label{tab:mse_gamma_15_noisy_part2}
\scriptsize
\begin{tabular}{l|ccccccc}
\hline
Strategy &$\lambda_r$=5000 &$\lambda_r$=10000 &$\lambda_r$=20000 &$\lambda_r$=40000 &$\lambda_r$=80000 &$\lambda_r$=160000 &$\lambda_r$=320000 \\
\hline
Uniform & 0.0449 & 0.0449 & 0.0449 & 0.0449 & 0.0449 & 0.0449 & 0.0449 \\
Pretrained & 0.0529 & 0.0529 & 0.0529 & 0.0529 & 0.0529 & 0.0529 & 0.0529 \\
SFT & 0.0441 & 0.0434 & 0.0446 & 0.0454 & 0.0482 & 0.0523 & 0.0503 \\
Predict-then-Optimize & 0.0522 & 0.0522 & 0.0522 & 0.0522 & 0.0522 & 0.0522 & 0.0522 \\
Time2Decide & 0.0391 & 0.0400 & 0.0408 & 0.0404 & 0.0409 & 0.0418 & 0.0411 \\
\hline
\end{tabular}
\end{table}

\begin{table}[H]
\centering
\caption{MSE results for $\gamma = 1.75$ with noisy data across different return loss weights (Part 1).}
\label{tab:mse_gamma_175_noisy_part1}
\scriptsize
\begin{tabular}{l|ccccccccc}
\hline
Strategy &$\lambda_r$=0 &$\lambda_r$=10 &$\lambda_r$=50 &$\lambda_r$=100 &$\lambda_r$=200 &$\lambda_r$=300 &$\lambda_r$=500 &$\lambda_r$=1000 &$\lambda_r$=3000 \\
\hline
Uniform & 0.0506 & 0.0506 & 0.0506 & 0.0506 & 0.0506 & 0.0506 & 0.0506 & 0.0506 & 0.0506 \\
Pretrained & 0.0582 & 0.0582 & 0.0582 & 0.0582 & 0.0582 & 0.0582 & 0.0582 & 0.0582 & 0.0582 \\
SFT & 0.0549 & 0.0546 & 0.0518 & 0.0515 & 0.0545 & 0.0541 & 0.0529 & 0.0506 & 0.0503 \\
Predict-then-Optimize & 0.0633 & 0.0633 & 0.0633 & 0.0633 & 0.0633 & 0.0633 & 0.0633 & 0.0633 & 0.0633 \\
Time2Decide & 0.0408 & 0.0436 & 0.0444 & 0.0431 & 0.0394 & 0.0404 & 0.0402 & 0.0396 & 0.0400 \\
\hline
\end{tabular}
\end{table}

\begin{table}[H]
\centering
\caption{MSE results for $\gamma = 1.75$ with noisy data across different return loss weights (Part 2).}
\label{tab:mse_gamma_175_noisy_part2}
\scriptsize
\begin{tabular}{l|ccccccc}
\hline
Strategy &$\lambda_r$=5000 &$\lambda_r$=10000 &$\lambda_r$=20000 &$\lambda_r$=40000 &$\lambda_r$=80000 &$\lambda_r$=160000 &$\lambda_r$=320000 \\
\hline
Uniform & 0.0506 & 0.0506 & 0.0506 & 0.0506 & 0.0506 & 0.0506 & 0.0506 \\
Pretrained & 0.0582 & 0.0582 & 0.0582 & 0.0582 & 0.0582 & 0.0582 & 0.0582 \\
SFT & 0.0496 & 0.0574 & 0.0575 & 0.0574 & 0.0564 & 0.0576 & 0.0580 \\
Predict-then-Optimize & 0.0633 & 0.0633 & 0.0633 & 0.0633 & 0.0633 & 0.0633 & 0.0633 \\
Time2Decide & 0.0414 & 0.0430 & 0.0437 & 0.0437 & 0.0459 & 0.0468 & 0.0473 \\
\hline
\end{tabular}
\end{table}

\begin{table}[H]
\centering
\caption{MSE results for $\gamma = 2.0$ with noisy data across different return loss weights (Part 1).}
\label{tab:mse_gamma_20_noisy_part1}
\scriptsize
\begin{tabular}{l|ccccccccc}
\hline
Strategy &$\lambda_r$=0 &$\lambda_r$=10 &$\lambda_r$=50 &$\lambda_r$=100 &$\lambda_r$=200 &$\lambda_r$=300 &$\lambda_r$=500 &$\lambda_r$=1000 &$\lambda_r$=3000 \\
\hline
Uniform & 0.0585 & 0.0585 & 0.0585 & 0.0585 & 0.0585 & 0.0585 & 0.0585 & 0.0585 & 0.0585 \\
Pretrained & 0.0664 & 0.0664 & 0.0664 & 0.0664 & 0.0664 & 0.0664 & 0.0664 & 0.0664 & 0.0664 \\
SFT & 0.0614 & 0.0614 & 0.0578 & 0.0591 & 0.0624 & 0.0620 & 0.0606 & 0.0603 & 0.0562 \\
Predict-then-Optimize & 0.0781 & 0.0781 & 0.0781 & 0.0781 & 0.0781 & 0.0781 & 0.0781 & 0.0781 & 0.0781 \\
Time2Decide & 0.0484 & 0.0494 & 0.0490 & 0.0506 & 0.0478 & 0.0485 & 0.0487 & 0.0481 & 0.0476 \\
\hline
\end{tabular}
\end{table}

\begin{table}[H]
\centering
\caption{MSE results for $\gamma = 2.0$ with noisy data across different return loss weights (Part 2).}
\label{tab:mse_gamma_20_noisy_part2}
\scriptsize
\begin{tabular}{l|ccccccc}
\hline
Strategy &$\lambda_r$=5000 &$\lambda_r$=10000 &$\lambda_r$=20000 &$\lambda_r$=40000 &$\lambda_r$=80000 &$\lambda_r$=160000 &$\lambda_r$=320000 \\
\hline
Uniform & 0.0585 & 0.0585 & 0.0585 & 0.0585 & 0.0585 & 0.0585 & 0.0585 \\
Pretrained & 0.0664 & 0.0664 & 0.0664 & 0.0664 & 0.0664 & 0.0664 & 0.0664 \\
SFT & 0.0661 & 0.0653 & 0.0643 & 0.0647 & 0.0652 & 0.0652 & 0.0653 \\
Predict-then-Optimize & 0.0781 & 0.0781 & 0.0781 & 0.0781 & 0.0781 & 0.0781 & 0.0781 \\
Time2Decide & 0.0481 & 0.0487 & 0.0503 & 0.0516 & 0.0539 & 0.0557 & 0.0557 \\
\hline
\end{tabular}
\end{table}

\begin{table*}[htbp]
  \caption{Predict-then-Optize MSE Results Using Different TimePFN Checkpoints}
  \label{tab:compare-timepfn-noise-sft-or-not}
  \centering
  \small
  \setlength{\tabcolsep}{6pt}
  \renewcommand{\arraystretch}{1.15}
  \begin{tabular}{c|ccccccc}
    \noalign{\hrule height 1.1pt}
    \textbf{Checkpoint} & \textbf{$\gamma=0.5$} & \textbf{$\gamma=0.75$} & \textbf{$\gamma=1.0$} & \textbf{$\gamma=1.25$} & \textbf{$\gamma=1.5$} & \textbf{$\gamma=1.75$} & \textbf{$\gamma=2.0$} \\
    \hline
    \textbf{Original Checkpoint} & 0.0224 & 0.0300 & 0.0407 & 0.0446 & 0.0522 & 0.0633 & 0.0781 \\
    \textbf{Fine-tuned on Noisy Data} & 0.0219 & 0.0301 & 0.0412 & 0.0452 & 0.0528 & 0.0638 & 0.0783 \\
    \noalign{\hrule height 1.1pt}
  \end{tabular}
\end{table*}

\end{document}